\documentclass{article}
\usepackage[utf8]{inputenc}
\usepackage{amsmath, amssymb, amsthm}
\usepackage{graphicx}
\usepackage{hyperref}
\usepackage{geometry}
\usepackage{times}
\usepackage{natbib} 
\usepackage{authblk}  
\geometry{
  a4paper,
  total={170mm,257mm},
  left=25mm,
  right=25mm,
  top=25mm,
  bottom=25mm,
}
\usepackage{setspace}
\onehalfspacing

\usepackage{algorithm}
\usepackage{algpseudocode}
\usepackage{mathtools}
\usepackage{subcaption} 
\usepackage{booktabs}
\usepackage{amsfonts,amssymb,dsfont,color}
\usepackage{amsmath}
\usepackage{amssymb}
\usepackage{mathtools}
\usepackage{hyperref}
\usepackage{url}
\usepackage{amsthm}

\usepackage{enumitem}

\newtheorem{theorem}{Theorem}
\newtheorem{proposition}[theorem]{Proposition}
\newtheorem{lemma}[theorem]{Lemma}
\newtheorem{corollary}[theorem]{Corollary}
\newtheorem{definition}[theorem]{Definition}
\newtheorem{assumption}[theorem]{Assumption}

\newtheorem{remark}[theorem]{Remark}

\newcommand*\PR{\mathbb{P}}

\newcommand*\EXP{\mathbb{E}}

\newcommand{\bs}{\boldsymbol}
\newcommand{\BX}{\bar{\mathcal{X}}}
\newcommand{\BP}{\bar{{P}}}
\newcommand{\Br}{\bar{{r}}}
\newcommand{\Bx}{{\bar{{x}}}}

\newcommand\IND{\mathds{1}}

\newcommand*\REGRET{\mathcal{R}}

\newcommand{\BellOp}{{\mathfrak{T}}}

\newcommand{\Halmos}{{\qed}}

 \newcommand{\EDcomment}[1]{}

 \newcommand{\NimaEdits}[1]{{\color{black} #1}}
 \newcommand{\NimaResponse}[1]{}
\newcommand{\removed}[1]{}

\usepackage{aliascnt}
\newcounter{FNRB}
\newcounter{RFNRB}
\newcounter{LRFNRB}
\newcounter{ISRB}
\newcounter{RISRB}
\newcounter{LRISRB}
\renewcommand{\theFNRB}{FNRB}
\renewcommand{\theRFNRB}{RFNRB}
\renewcommand{\theLRFNRB}{LRFNRB}
\renewcommand{\theISRB}{ISRB}
\renewcommand{\theRISRB}{RISRB}
\renewcommand{\theLRISRB}{LRISRB}
\newenvironment{FNRB}[1][]{%
	\refstepcounter{FNRB}%
	\noindent\textbf{Problem~\theFNRB: #1}%
	\label{prob:ns-neutral}%
}{}
\newenvironment{RFNRB}[1][]{%
	\refstepcounter{RFNRB}%
	\noindent\textbf{Problem~\theRFNRB: #1}%
	\label{prob:ns-risk}%
}{}
\newenvironment{LRFNRB}[1][]{%
	\refstepcounter{LRFNRB}%
	\noindent\textbf{Problem~\theLRFNRB: #1}%
	\label{prob:ns-learning}%
}{}
\newenvironment{ISRB}[1][]{%
	\refstepcounter{ISRB}%
	\noindent\textbf{Problem~\theISRB: #1}%
	\label{prob:inf-neutral}%
}{}
\newenvironment{RISRB}[1][]{%
	\refstepcounter{RISRB}%
	\noindent\textbf{Problem~\theRISRB: #1}%
	\label{prob:inf-risk}%
}{}

\author[1,2]{Nima Akbarzadeh \thanks{Corresponding author.\\Email addresses: \href{mailto:nima.akbarzadeh@hec.ca}{\textit{nima.akbarzadeh@hec.ca}},   \href{mailto:yossiri.adulyasak@hec.ca}{\textit{yossiri.adulyasak@hec.ca}}, and
\href{mailto:erick.delage@hec.ca}{\textit{erick.delage@hec.ca}}.}}
\author[1]{Yossiri Adulyasak}
\author[1,2]{Erick Delage }
\affil[1]{GERAD \& Department of Decision Sciences, HEC Montréal}
\affil[2]{MILA - Québec AI Institute}

\title{Risk-Aware Decision Making in Restless Bandits: Theory and Algorithms for Planning and Learning}

\date{}

\begin{document}
\maketitle

\begin{abstract}
  In restless bandits, a central agent is tasked with optimally distributing limited resources across several bandits (arms), with each arm being a Markov decision process. In this work, we generalize the traditional restless bandits problem with a risk-neutral objective by incorporating risk-awareness, which is particularly important in various real-world applications especially when the decision maker seeks to mitigate downside risks. We establish indexability conditions for the case of a risk-aware objective and provide a solution based on Whittle index for the first time for the planning problem with finite-horizon non-stationary and for infinite-horizon stationary Markov decision processes. In addition, we address the learning problem when the true transition probabilities are unknown by proposing a Thompson sampling approach and show that it achieves bounded regret that scales sublinearly with the number of episodes and quadratically with the number of arms. The efficacy of our method in reducing risk exposure in restless bandits is illustrated through a set of numerical experiments in the contexts of machine replacement and patient scheduling applications under both planning and learning setups.
\end{abstract}

\section{Introduction}

The restless bandits (RB) problem is a class of sequential stochastic control problems for dynamic decision-making under uncertainty. In RB, a central agent confronts the challenge of allocating limited resources over time among competing options, which we refer to as \textit{arms}, each characterized by a Markov decision process (MDP). Such a framework has numerous applications in scheduling problems that appear in 
machine maintenance~\citep{glazebrook2005index,akbarzadeh2019restless}, healthcare~\citep{deo2013improving}, finance~\citep{glazebrook2013monotone}, power management in smart grids~\citep{wang2014adaptive,abad2016near}, opportunistic scheduling in networks~\citep{liu2010indexability,nino2009restless,ouyang2015downlink,borkar2017opportunistic,wang2019opportunistic},
and operator allocation in multi-robot systems~\citep{dahiya2022scalable}.

The studies of RB have primarily focused on risk-neutral/expected value objectives under reward maximization or cost minimization. The assumption of risk neutrality, however, is not always suitable in practice, as risk is an essential aspect to consider in real-world applications \citep{xu2021online,mate2021risk}. Such applications encompass various domains, including preventive maintenance~\citep{amiri2018providing}, surgery and medical scheduling in healthcare ~\citep{he2019controlling,najjarbashi2019variability}, financial portfolio management, and production lot-sizing~\citep{long2023robust}, where risk-neutral solutions can be impracticable as they might give rise to undesirable outcome. In such circumstances, risk-aware policies that account for potential risks offer resilient solutions. These are designed to reduce adverse effects of uncertain outcomes thus guaranteeing that the allocation policy remains effective even in adverse conditions~\citep{rausand2013risk}.

In the traditional RB with the risk-neutral objective, the key challenge that prevents applying traditional stochastic control methods is the curse of dimensionality due to the state space. As the number of arms increases, the computational complexity of identifying the optimal policy grows exponentially. This poses an obstacle in implementing the optimal policy for real-world applications. \cite{whittle1988restless} introduced a scalable and computationally tractable index policy as a heuristic for RB. The Whittle index acts as a priority index that highlights the urgency of selecting an arm. In what follows, we refer to this policy as the \textit{Whittle index policy}~\citep{nino2023markovian}.

While the Whittle index policy requires a technical condition, known as \textit{indexability}, to be satisfied, most of the studies in the literature of RB implement Whittle index policy \NimaEdits{despite the fact that the indexability is not guaranteed}. In these problems, either the problem structure is such that the indexability is satisfied~\citep{jacko2012opportunistic,borkar2017opportunistic,yu2018deadline,wang2019opportunistic}, or sufficient conditions under which the problem is indexable are specified~\citep{glazebrook2006some,ninomora2007,akbarzadeh2022conditions}. \NimaEdits{Many studies suggest} that Whittle index policy works well in practice ~\citep{glazebrook2006some,ninomora2007,akbarzadeh2022conditions,avrachenkov2013congestion,wang2019whittle}.

Risk-aware objectives have been studied extensively for MDPs and reinforcement learning (RL)~\citep{le2007robust,osogami2012robustness,bauerle2014more,chow2015risk,mannor2016robust,jaimungal2022robust,xu2023regret}. 
To the best of our knowledge, only \cite{mate2021risk} have considered risk-aware RB and do so through numerical experiments by using a specific utility function for a binary-state partially-observable MDP.
In contrast with \cite{mate2021risk}, our research provides a comprehensive analysis of risk-awareness across more general state spaces, dynamics, and utility functions while providing some analytical sufficient conditions under which the problem is indexable. This broader approach allows us to uncover deeper insights into risk-aware decision-making in RB. In this work, we consider three distinct cases: the finite-horizon RB, its non-stationary finite-horizon extension, and the infinite-horizon discounted formulation.

Our risk-aware framework is motivated by the high-stakes nature of modern applications, where optimizing for the expected (risk-neutral) performance is often insufficient. For instance, in domains like industrial maintenance or patient scheduling, a risk-averse approach is critical. Risk-neutral models can undervalue the probability of catastrophic equipment failures~\citep{calabro2024emerging} or lead to severe adverse health outcomes by failing to integrate the severity of systemic uncertainties~\citep{kohn2000to}. Conversely, in fields such as venture capital and exploratory research, a risk-seeking perspective might be encouraged, prioritizing potentially high payoffs and breakthroughs despite greater uncertainty~\citep{vahidi2025risk}. The stochastic dynamics central to our work mirror the challenges present across this spectrum, making a risk-aware objective essential for developing robust and effective policies tailored to specific domain requirements.

The contributions of our work are threefold. First, we generalize the traditional RB with a risk-neutral objective by incorporating risk-awareness to optimize decision-making with respect to a risk criterion in non-stationary finite-horizon and stationary infinite-horizon discounted settings. Second, we derive conditions under which an arm with a risk-aware objective is indexable, thereby enabling the derivation of the Whittle index policy in non-stationary finite-horizon and stationary infinite-horizon discounted settings. Third, we address the learning problem under a Bayesian regret setting when the true transition probabilities are unknown by proposing a Thompson sampling approach that samples from posterior distributions over the unknown parameters~\citep{osband2013more,russo2018tutorial}.

In a recent study,~\cite{akbarzadeh2023learning} has focused on RB with unknown transition dynamics, broadening the scope of applications and solution approaches. Note that papers by \cite{liu2012learning,khezeli2017risk}, \NimaEdits{and} \cite{xu2021online} adopt another viewpoint toward RB which is not exactly the same as our problem of interest and is not based on the Whittle index policy. These studies consider a variant of RB in which there is a single ``best'' arm delivering the highest stationary reward; the objective is to learn this arm and pull it indefinitely to maximize long-run return. Although such strategies are computationally tractable, they differ structurally from our optimization approach.

In our finite-horizon setup, we extend the solution proposed in~\cite{akbarzadeh2023learning} by deriving regret bounds that scale sublinearly with the number of episodes and quadratically with the number of arms. 
It should be noted that applying a conventional reinforcement learning algorithm to the RB in a naive manner is inefficient due to the linear growth of regret in the state space of Markov decision processes. \NimaEdits{This implies that regret for RBs} grows exponentially with the number of arms~\citep{akbarzadeh2023learning}. Finally, we numerically illustrate the efficacy of our methodology in reducing risk exposure in RB through experiments in machine replacement and patient scheduling applications under both planning and learning setups.

In Section~\ref{sec:prob}, we present the notation, problem formulation for planning and learning setups. Section~\ref{sec:whittle} describes the Whittle index solution concept and a class of indexable RB under a risk-aware objective and how Whittle indices can be computed. In Section~\ref{sec:unknown}, we address the learning problem. Section \ref{sec:infiniteH}  extends the planning result to an infinite horizon setting and discusses the question of learning. Finally, the numerical analysis is discussed in Section~\ref{sec:numerical} and the conclusion is presented in Section~\ref{sec:conclusion}. Note that all proofs are included in the Electronic Companion.

\section{Problem Definition for Finite-Horizon RB}\label{sec:prob}
In this section, we present the problem formulation for the finite-horizon restless bandits problem, encompassing both risk-neutral and risk-aware objectives, as well as a learning problem where the arm parameters are unknown.

\subsection{Notation}
Events that occur in a discrete time space will be indexed by $t \in \mathcal{T} := \{0, \ldots, T-1\}$. Random variables and their realizations are denoted by capital and lowercase letters; for example, $X_t$ and $x_t$\NimaEdits{, respectively}. We use calligraphic letters to denote the set of all realizations, such as $\mathcal{X}$. Let $X_{a:b} := (X_a, \ldots, X_b)$ represent a collection of the random variables from time~$a$ to time~$b$, and let $\bs{X}_t = (X^1_t, \ldots, X^N_t)$ represent a collection of random variables from $N$ processes at time~$t$. The probability and the expected value of random events are denoted by $\mathbb{P}(\cdot)$ and $\mathbb{E}[\cdot]$, respectively. We let $\mathbb{I}(\cdot)$ be an indicator function which returns $1$ if the inner clause is true, and $0$ otherwise, let ${\bs 1}_{k}$ denote a vector of zeros where only the $k$-th element is one. The notation $(x)^+$ represents $\max \{0, x\}$. A function~$f$ is called superadditive on partially-ordered sets $\mathcal{X}$ and ${\cal Y}$ if given $x_1, x_2 \in \mathcal{X}$ and $y_1, y_2 \in {\cal Y}$ where $x_1 \geq x_2$ and $y_1 \geq y_2$, then
\(f(x_1, y_1) - f(x_1, y_2) \geq f(x_2, y_1) - f(x_2, y_2).\)

\subsection{Finite-Horizon Non-Stationary RB (FNRB)}\label{subsec:RBp}
A finite-horizon non-stationary restless bandit process (arm) is a Markov decision process defined by the tuple $(\mathcal{X}, \mathcal{A}, \{P_t(a)\}_{a \in \{0, 1\}, t \in \mathcal{T}}, \{r_t\}_{t \in \mathcal{T}}, x_0)$ where $\mathcal{X}$ denotes a finite state space, $\mathcal{A} = \{0, 1\}$ denotes the action space where we call action~$0$ the \textit{passive} action and action~$1$ the \textit{active} action, $P_t(a)$ denotes a time-dependent transition probability matrix when action $a \in \{0, 1\}$ is chosen at time $t$, $r_t: \mathcal{X} \times \{0, 1\} \to [r_{\min}, r_{\max}]$ denotes the non-stationary reward function and $r_{\min}$ and $r_{\max}$ are finite and non-negative, and $x_0$ denotes the initial state of the process. By the Markov property we have 
\[
\PR(X_{t+1} = x_{t+1} \mid X_{1:t} = x_{1:t}, A_{1:t} = a_{1:t}) =: P_t(x_{t+1} \mid x_t, a_t).
\]
Note that the time-dependence in both the transition probabilities and reward functions distinguishes the non-stationary setting from the stationary case.

An FNRB problem consists of a set of $N$ independent arms 
\[
(\mathcal{X}^i, \mathcal{A}, \{P^i_t(a)\}_{a \in \{0, 1\}, t \in \mathcal{T}}, \{r^i_t\}_{t\in\mathcal{T}}, x^i_0), \quad i \in \mathcal{N} \coloneqq \{1, \ldots, N\}.
\]
An agent observes the state of all arms and may decide to activate up to $M \leq N$ of them.
Let $\boldsymbol{\mathcal{X}} \coloneqq \prod_{i \in \mathcal{N}} \mathcal{X}^i$ denote the joint state space and let $\bs{\mathcal{A}}(M) \coloneqq \bigl\{ \bs{a} \in \mathcal{A}^N : \sum_{i=1}^{N} a^i \leq M \bigr\}$
denote the action set.
The immediate reward realized at time $t$ is
\[
\bs{r}_t(\bs{x}_t, \bs{a}_t) \coloneqq \sum_{i \in \mathcal{N}} r^i_t(x^i_t, a^i_t)
\]
when the system is in state $\bs{x}_t$ and the agent chooses action $\bs{a}_t \in \bs{\mathcal{A}}(M)$. 
Since the arms are independent, the probability of observing state $\bs{x}_{t+1}$, given the state $\bs{x}_t$ and the action $\bs{a}_t$, is denoted by
\[
\bs{P}_t(\bs{x}_{t+1} \mid \bs{x}_t, \bs{a}_t) \coloneqq \prod_{i \in \mathcal{N}} P^i_t(x^i_{t+1} \mid x^i_t, a^i_t).
\]

\subsection{Planning Problems}
Let ${\bs \pi} = ({\bs \pi}^1, \ldots, {\bs \pi}^N) :\boldsymbol{\mathcal{X}} \times \mathcal{T} \to \bs{\mathcal A}(M)$ denote a time-dependent Markovian deterministic policy for the system where ${\bs \pi}^i$ defines the action for arm~$i$ in the policy of the system, and let ${\bs \Pi}_M$ be the set of all such time-dependent deterministic Markov policies. Assume action~$A^i_t$ is prescribed by policy $\bs{\pi}^i$ at time~$t$. Then, any policy leads to a total reward for the system as follows:
\begin{align*}
    \bs{J}_{{\bs x}_0}(\bs{\pi}) & := \sum_{i \in \mathcal{N}} \sum_{t = 0}^{T-1} r^i_t\left(X^i_t, A^i_t\right) \biggm|_{\bs{\pi}, {\bs X}_0=\bs{x}_0}.
\end{align*}

We first describe the classical risk-neutral optimization problem as follows~\citep{whittle1988restless}.

\begin{FNRB} \label{prob:risk-neutral} 
    Given a set of $N$ arms~$(\mathcal{X}^i, \mathcal{A}, \{P^i_t(a)\}_{a \in \{0, 1\}, t \in \mathcal{T}}, \{r^i_t\}_{t \in \mathcal{T}}, x^i_0)$, $i \in \mathcal{N}$, where at most $M$ of them can be activated at a time, find a $\bs{\pi} \in \bs{\Pi}_M$ that maximizes $\mathbb{E}\Bigl[ \bs{J}_{\bs{x}_0}(\bs{\pi}) \Bigr]$.
\end{FNRB}

Problem~\ref{prob:risk-neutral} is a multi-stage stochastic control problem where the optimal policy can be obtained using dynamic programming~\citep{puterman2014markov}. However, as the cardinality of the state space is $\prod_{i \in {\cal N}}|\mathcal{X}^i|$, computing the optimal policy is intractable for large~$N$. In Section~\ref{sec:whittle}, we describe a well-known heuristic, i.e., the \textit{Whittle index policy}, as a solution to tackle this problem.

As discussed earlier, the assumption of risk-neutrality may not be suitable for various practical applications of RB. Thus, we generalize the objective to incorporate risk-sensitivity at the level of the total reward generated by each arm.
To this end, we leverage an expected utility formulation \NimaEdits{\citep{vonneumann1947}}, which is commonly used in the literature of risk-aware decision-making \citep{vonneumann1947,fishburn1968utility,pratt1978risk}.  More specifically, a concave or convex utility function models a risk-averse or risk-seeking behavior, respectively. \cite{Tversky:1979:Prospect} further suggest using an {S}-shaped utility function with inflection point at a reference value in order to model an attitude 
of risk aversion above the target whereas risk seeking interests below.

As shown in \cite{bauerle2014more}, history-dependent policies are generally considered for risk-sensitive MDPs due to the fact that the marginal value of the utility depends on the cumulative reward. In fact, the authors also prove that an optimal decision rule will exploit the cumulative reward accrued up to the time of implementing the action. We thus denote the set of history-dependent policies by ${\bs \Pi}_H$. Next, we formally define our problem of interest.


\begin{RFNRB}[(Risk-aware FNRB)] \label{prob:ns-risk-aware}
Given a set of non-decreasing Lipschitz continuous utility functions $U^i$, $i \in \mathcal{N}$, and a set of $N$ arms~$(\mathcal{X}^i, \mathcal{A}, \{P^i_t(a)\}_{a \in \{0, 1\}, t \in \mathcal{T}}, \{r^i_t\}_{t \in \mathcal{T}}, x^i_0)$, $i \in {\cal N}$, where at most $M \leq N$ arms can be activated at a time, find a history-dependent policy ${\bs \pi} \in {\bs \Pi}_H$ that maximizes $\mathbb{E}\left[ \bs{D}_{{\bs x}_0}(\bs{\pi}) \right]$ with
\begin{align*}
    \bs{D}_{{\bs x}_0}(\bs{\pi}) & := \sum_{i \in \mathcal{N}} U^i\left( \sum_{t = 0}^{T-1} r^i\left(X^i_t, A^i_t\right) \right) \biggm|_{\bs{\pi}, {\bs X}_0=\bs{x}_0}.
\end{align*}
\end{RFNRB}
Some examples of  risk-aware utility functions are described in Section~\ref{sec:numerical}.

It is important to note that when all utility functions are linear, the risk-aware objective reduces to the risk-neutral objective due to the linearity of expectation. In this special case, the problem simplifies, and an optimal policy is indeed Markovian, as stated for Problem~\ref{prob:risk-neutral}.

Problem~\ref{prob:ns-risk-aware} highlights risk-awareness for each arm, which aligns with scalarization methods for multi-objective setups~\citep{marler2010weighted,gunantara2018review}.
Note that solving the risk-aware problem (Problem~\ref{prob:ns-risk-aware}) is more difficult than the risk-neutral problem (Problem ~\ref{prob:risk-neutral}) as already in the case $N=1$, the dynamic programming equation must be written on a state space augmented by one continuous state capturing the cumulative reward so far~\citep{bauerle2014more}. In Section~\ref{sec:whittle}, we present how the \textit{Risk-Aware} Whittle index for Problem~\ref{prob:ns-risk-aware} can be obtained.

\subsection{Learning Problem under Bayesian Regret}
The transition probabilities of arms in Problem~\ref{prob:ns-risk-aware} may be unknown in various practical applications. Notable examples include a drug discovery problem when a new drug is discovered in a clinical setup \citep{ribba2020model} and a machine maintenance problem for new machines where state transition functions are unknown \citep{ogunfowora2023reinforcement}. One plausible objective in such contexts is to determine a learning policy that converges to the ideal policy (the solution to Problem~\ref{prob:ns-risk-aware}) as quickly as possible. Let us assume that the agent interacts with the system for $K$ episodes and let ${\bs \pi}_k$ denote a learning policy that is deployed in episode $k$. The performance of a learning policy is measured by Bayesian regret, which quantifies the difference between the policy's performance and that of an oracle who possesses complete knowledge of the environment and executes an optimal policy $\bs{\pi}^\star$, i.e.,
\begin{equation}
\REGRET(K) 
:= \EXP\biggl[ \sum_{k = 1}^{K} 
\mathbb{E}\left[ \bs{D}_{{\bs x}_0}(\bs{\pi}^\star) \right] - \mathbb{E}\left[ \bs{D}_{{\bs x}_0}(\bs{\pi}_k) \right]
\biggr]
\label{eq:reg-def}
\end{equation}
where the first expectation is calculated from the prior distribution on $\{P^i_t(\cdot|x^i, a)\}_{i \in \mathcal{N}, x^i \in \mathcal{X}^i, a \in \{0, 1\}, t \in \mathcal{T}}$, while the second expectation is calculated based on the initial states ${{\bs x}_0}$ and the learning policy. Bayesian regret is a widely-adopted metric in numerous studies   \citep{rusmevichientong2010linearly,agrawal2013thompson,russo2014learning,ouyang2017learning,akbarzadeh2023learning}.
\NimaEdits{We thus define the following learning problem.}

\begin{LRFNRB}[(Learning Risk-aware FNRB)] \label{prob:learning} 
Given a set of $N$ arms~$(\mathcal{X}^i, \mathcal{A}, \{P^i_t(a)\}_{a \in \{0, 1\}, t \in \mathcal{T}}, \{r^i_t\}_{t \in \mathcal{T}}, x^i_0)$, $i \in {\cal N}$, where at most \NimaEdits{$M \le N$} can be activated at a time, find a sequence of history-dependent policies $\left\{{\bs \pi}_k\right\}_{k \geq 1}$ that minimizes $\REGRET(K)$.
\end{LRFNRB} 

We present a Thompson sampling algorithm to tackle the learning problem and prove a regret bound of $\mathcal{O}(N^2 \sqrt{KT})$. Practically, when the optimal policy of RFNRB is computationally intractable, one can use the Whittle index policy as a proxy for the optimal policy. The effectiveness of this approach is demonstrated in our numerical experiments.

\section{Indexability and Whittle Index}
\label{sec:whittle}

\subsection{The Case of the Risk-neutral FNRB} \label{subsec:overview}
\citet{whittle1988restless} introduced a priority index policy as a heuristic for Problem \ref{prob:risk-neutral}, which has become widely accepted as the conventional method for solving the FNRB \citep{nino2023markovian}. This policy is obtained by relaxing the original hard constraint of activating at most $M$ arms at a time,
\begin{align*}
\max_{{\bs \pi} \in {\bs \Pi}_M} \mathbb{E}\Bigl[\bs{J}_{\bs{x}_0}({\bs \pi})\Bigr] \quad \text{s.t.} \quad \lVert \bs{A}_t \rVert_1 \leq M, \mbox{a.s.},
\end{align*}
to a constraint on the expected average number of activated arms per time period,
\begin{align*}
\max_{{\bs \pi} \in {\bs \Pi}_M} \mathbb{E}\Bigl[\bs{J}_{\bs{x}_0}({\bs \pi})\Bigr] \quad \text{s.t.} \quad \EXP\Biggl[\frac{1}{T}\sum_{t=0}^{T-1}\lVert \bs{A}_t \rVert_1 \Bigg| \bs{X}_0=x_0\Biggr] \leq M.
\end{align*}
This relaxation is crucial as it allows the overall problem to be decomposed into $N$ independent subproblems, thereby significantly reducing the computational complexity.

The relaxed problem is then decoupled into $N$ independent optimization problems using a Lagrangean relaxation parameterized by a multiplier $\lambda \in \mathbb{R}_+$:
\begin{equation}
\max_{{\bs \pi} \in {\bs \Pi}_M} \mathbb{E}\left[{\bs J}_{{\bs x}_0}({\bs \pi}) \right] - \lambda  \left(\EXP\biggl[ \dfrac{1}{T} \sum_{t = 0}^{T-1} \lVert {\bs A}_t \rVert_1 \bigg| {\bs X}_0 = \bs{x}_0 \biggr] - M\right) = \sum_{i=1}^N \max_{\pi^i \in \Pi^i_M} {J}^{i}_{\lambda, x^i_0}(\pi^i) + M \lambda \label{eq:lagr}
\end{equation}
where the policy functions captured in $\Pi^i_M$ are time-dependent Markovian and of the form $\pi^i: {\mathcal X}^i \times {\mathcal T} \to {\mathcal A}^i$, and where
\begin{equation*}
{J}^{i}_{\lambda, x^i_0}(\pi^i_\lambda) := \mathbb{E}\left[\sum_{t = 0}^{T-1} r^i_t\left(X^i_t, A^i_t\right)  - \dfrac{\lambda}{T} \sum_{t = 0}^{T-1} A^i_t \biggm| {X^i_0=x^i_0}\right].
\end{equation*}
For each arm $i$, let $\pi^{i*}_\lambda \in \Pi_M^i$ denote the optimal time-dependent Markov policy derived via dynamic programming on an MDP parameterized by $(\mathcal{X}^i,\mathcal{A},\{P^i_t(a)\}_{a\in\{0,1\}, t\in\mathcal{T}},\{r^i_{\lambda, t}\}_{t \in \mathcal{T}},x^i_0)$ with 
\[
r^i_{\lambda, t}(x,a):=r^i_t(x,a)-\frac{\lambda}{T}a
\]
The policies $\{{\pi}^{i*}_\lambda\}_{i=1}^N$ can be combined to form the solution to \eqref{eq:lagr} as
\(
{\bs \pi}^{*}_\lambda:=({\pi}^{1*}_\lambda,\dots,{\pi}^{N*}_\lambda).
\)

Next, we define the indexability and Whittle index for FNRB.

\begin{definition}[Indexability and Whittle index] \label{def:idxbl}
Given any optimal Markov policy ${\pi}^{i*}_{\lambda}$, let the \emph{passive set} be 
\[
\mathcal{W}^i_{\lambda} := \bigl\{ (x, t) \in \mathcal{X}^i \times \mathcal{T}: {\pi}^{i*}_{\lambda}(x, t) = 0 \bigr\}.
\]
An FNRB is \emph{indexable} if for all $i \in \mathcal{N}$, $\mathcal{W}^i_{\lambda}$ is non-decreasing in $\lambda$, i.e., for any $\lambda_1, \lambda_2 \in \mathbb{R}$ such that $\lambda_1 \leq \lambda_2$, we have $\mathcal{W}^i_{\lambda_1} \subseteq \mathcal{W}^i_{\lambda_2}$, for some sequence of optimal $\{{\pi}^{i*}_{\lambda}\}_{\lambda\geq0}$. For an indexable FNRB, the \emph{Whittle index}~$w^i(x, t)$ of state~$x \in \mathcal{X}^i$ at time~$t$ is the smallest value of $\lambda$ for which the state~$x$ is part of the passive set $\mathcal{W}^i_{\lambda}$ at time step~$t$, i.e., 
\[
w^i(x, t) := \inf \left\{ \lambda \in \mathbb{R}_+: (x, t) \in \mathcal{W}^i_{\lambda} \right\}.
\]
\end{definition}

The \textit{Whittle index policy} activates the arms with the $M$ largest Whittle indices at each time~$t$. By construction, the policy adheres to the $M$ activation limit.

Determining whether a problem is indexable is not immediately apparent, hence researchers have examined various sufficient conditions for indexability~\citep{glazebrook2006some,ninomora2007,akbarzadeh2022conditions}. Under certain conditions, the Whittle index policy is optimal~\citep{gittins1979bandit,weber1990index,lott2000optimality} and in other cases, the Whittle index policy is close to optimal~\citep{glazebrook2006some,ninomora2007,avrachenkov2013congestion,wang2019whittle,akbarzadeh2022conditions}. General algorithms for computing the Whittle indices are proposed in \citep{nino2007dynamic,akbarzadeh2022conditions}, while in cases where analytical verification of indexability is challenging, numerical methods can be employed to approximately verify indexability empirically by testing monotonicity of the passive set with respect to the penalty/subsidy parameter \citep{avrachenkov2018whittle,akbarzadeh2019dynamic}.

\subsection{Solution to Relaxation of \ref{prob:ns-risk-aware}} \label{subsec:augment}
To establish indexability conditions for Problem~\ref{prob:ns-risk-aware}, we first apply the relaxation and decomposition approach described in Section~\ref{subsec:overview}. Hence, we seek the optimal history-dependent policy of an arm that maximizes
\begin{equation} \label{eq:D-obj}
 {D}^i_{\lambda, x^i_0}({\pi}^i) := \mathbb{E}\Bigg[ \Bigg. U^i\left(\sum_{t = 0}^{T-1} r_t^i\left(X^i_t, A^i_t\right)\right) - \dfrac{\lambda}{T} \sum_{t=0}^{T-1} A^i_t \bigg| X^i_0 = x^i_0 \Bigg. \Bigg]
\end{equation}
among all policies in $\Pi_H^i$.

\NimaEdits{To solve~\eqref{eq:D-obj}, we adopt} the steps presented in \citet{bauerle2011markov} and introduce a new \textit{augmented arm} risk-neutral MDP, which is equivalent to the risk-aware MDP above and can be solved using dynamic programming. Specifically, for each arm $i \in {\cal N}$, 
we construct an augmented time-dependent MDP $(\{\BX^i_t\}_{t=0}^{T-1}, \mathcal{A}, \{\BP^i_t(a)\}_{a \in \{0, 1\}, t \in \mathcal{T}}, \{\Br_t^i\}_{t \in \mathcal{T}}, \Bx^i_0)$ with 
$\BX^i_t:=\mathcal{X}^i\times \mathcal{S}^i_t$ where $\mathcal{S}^i_t :=\{\sum_{t'=0}^{t-1} r_{t'}^i(x_{t'},a_{t'}):x_{t'}\in\mathcal{X}^i,a_{t'}\in\mathcal{A},\forall\, 0\leq t'\leq t-1\}\subseteq \mathbb{R}$ denotes the space of possibly realized accumulated rewards at time $t$, with $|\mathcal{S}_t^i| \leq(|\mathcal{X}^i||\mathcal{A}|)^t$, where \(\Bx_0^i:=(x_0^i,0)\), and where
\[
\BP^i_t(x',s'|x,s,a):=P^i_t(x'|x,a)\mathbb{I}(s',s+r^i_{t}(x,a)),\forall x,x'\in\mathcal{X}^i,s\in\mathcal{S}_t^i, a\in\mathcal{A},s'\in\mathcal{S}_{t+1}^i
\]
\[
\Br_{\lambda, t}^i(x,s,a):=\mathbb{I}(t,T-1)U^i(s+r^i_t(x,a)) - (\lambda/T)a,\forall x\in\mathcal{X}^i,s\in\mathcal{S}_t^i, a\in\mathcal{A}.
\]

This reduces to \citet{bauerle2011markov} when $\lambda = 0$.
Then, the following result can be immediately derived from Theorem 1 of \citet{bauerle2014more}.
\begin{proposition}\label{thm:optimalityAugmentedMDP}
Let $f_{\lambda}^{i*} \coloneqq \{f_{ \lambda, t}^{i*}\}_{t \in \mathcal{T}}$ be an optimal Markovian policy for the \textit{augmented arm} risk-neutral MDP. Then, one can construct an optimal policy for the relaxation of Problem \ref{prob:ns-risk-aware} using:
\[\bar{\pi}_{\lambda, t}^{i*}(x_{0:t},a_{0:t-1}):=f_{\lambda, t}^{i*}(x_t, \sum_{t'=0}^{t-1} r^i_{t'}(x_{t'},a_{t'})).\]
Namely, $\max_{\pi^i_\lambda\in \Pi_H^i} {D}^i_{\lambda, x^i_0}(\pi^i_\lambda) = {D}^i_{\lambda, x^i_0}(\bar{\pi}_{\lambda}^{i*})$.
\end{proposition}
This implies that the properties of $\bar{\pi}_{\lambda, t}^{i*}$ can be effectively studied by analyzing the optimal Markovian policy $f_{\lambda}^{i*}$.

\subsection{Two Classes of Indexable Arms for Problem \ref{prob:ns-risk-aware}}

We extend the definition of indexibility to Problem \ref{prob:ns-risk-aware}, which not necessarily admits an optimal policy that is Markovian.

\begin{definition}[Indexability of Problem \ref{prob:ns-risk-aware}] \label{def:idxbl_RFNRB}
An RFNRB is \emph{indexable} if, for all $i \in \mathcal{N}$, the passive set
\[
\mathcal{W}^i_{\lambda} := \bigl\{ (t,x_{0:t},a_{0:t-1})) \in \cup_{t\in\mathcal{T}}\{t\}\times{\mathcal{X}^i}^{t+1}\times\mathcal{A}^{t} : \bar{\pi}^{i*}_{\lambda,t}(x_{0:t},a_{0:t-1}) = 0 \bigr\},
\]
is non-decreasing in $\lambda$ for some sequence of optimal history dependent policy $\{\bar{\pi}^{i*}_{\lambda}\}_{\lambda\geq0}$. For an indexable RFNRB, the \emph{Whittle index} is~$w^i(x_{0:t},a_{0:t-1}, t):= \inf \left\{ \lambda \in \mathbb{R}_+: (t,x_{0:t},a_{0:t-1}) \in \mathcal{W}^i_{\lambda} \right\}$.
\end{definition}

We establish a first set of sufficient conditions under which Problem \ref{prob:ns-risk-aware} is indexable. \NimaEdits{Let us} consider the following Assumptions.

\begin{assumption}\label{ass:mdp}
The utility function $U^i(\cdot)$ is convex and non-decreasing and the MDP of each satisfies the following conditions for all $t \in \mathcal{T}$:
\begin{enumerate}[label=\ref*{ass:mdp}.\alph*]
    \item \label{ass:ordered}$\mathcal{X}$ is a totally ordered set, or equivalently takes the form $\mathcal{X}:=\{1,\dots,|\mathcal{X}|\}$ 
    \item \label{ass:r1} The reward function $r^i_t(x,a)$ is non-decreasing in $x$ for all $a \in \mathcal{A}$.
    \item \label{ass:r2} The rewards are action-independent: $r^i_t(x,1) = r^i_t(x,0)$ for all $x \in \mathcal{X}$.
    \item \label{ass:p1} For all $x,k$: $\sum_{x'\geq k} P^i_t(x'|x,1) \geq \sum_{x'\geq k} P^i_t(x'|x,0)$.
    \item \label{ass:p2} For all $x_1 \geq x_2$ and all $k$: $\sum_{x'\geq k} P^i_t(x'|x_1,a) \geq \sum_{x'\geq k} P^i_t(x'|x_2,a)$.
    \item \label{ass:p3} For all $x_1 \geq x_2$ and all $k$: 
    \[\sum_{x'\geq k} P^i_t(x'|x_1,1) - \sum_{x'\geq k} P^i_t(x'|x_1,0) \geq \sum_{x'\geq k} P^i_t(x'|x_2,1) - \sum_{x'\geq k} P^i_t(x'|x_2,0).\]
\end{enumerate}
\end{assumption}

We then establish the monotonicity of the optimal policy under the assumption:

\begin{lemma}[Monotone Policy]\label{lem:monotone-policy}
If a restless bandit arm $i$ satisfies Assumption~\ref{ass:mdp}, then there exists a family of optimal policies $\{f^{i*}_\lambda\}_{\lambda\geq 0}$, for its augmented arm risk-neutral MDP, that is non-increasing with respect to $\lambda$.
\end{lemma}

We can now state our main indexability result.

\removed{
\begin{theorem}[Indexability of Problem \ref{prob:ns-risk-aware}]\label{thm:ns-indexable}
An augmented restless bandit arm with finite-horizon MDP satisfying Assumptions~\ref{ass:mdp} is indexable.
\end{theorem}

\textbf{\textit{Proof Sketch.}}
\NimaEdits{By Lemma~\ref{lem:monotone-policy}, the optimal policy is non-decreasing in $(x,s,\phi)$. 
Since the activation penalty $\lambda$ enters the Bellman equation equivalently to $-\phi$, with $\phi$ as a  (a subsidy for passivity), the optimal policy is non-increasing in $\lambda$. Therefore, for $\lambda_1 \leq \lambda_2$, we have $f^{i *}_{\lambda_1,t}(x,s) \geq f^{i *}_{\lambda_2,t}(x,s)$ for all $(x,s,t)$, which implies $\mathcal{W}_{\lambda_1} \subseteq \mathcal{W}_{\lambda_2}$, establishing indexability.}
\Halmos}

\begin{theorem}[Indexability of Problem \ref{prob:ns-risk-aware}]\label{thm:ns-indexable}
Problem \ref{prob:ns-risk-aware} is indexable if all restless bandit arms  satisfy  Assumption~\ref{ass:mdp}.
\end{theorem}

\textbf{\textit{Proof.}}
By Lemma~\ref{lem:monotone-policy}, we have that, for each arm $i$, there exists a family of optimal policies $\{f_\lambda^{i*}\}_{
\lambda\geq 0}$ that is non-increasing in $\lambda$. According to Proposition \ref{thm:optimalityAugmentedMDP}, each element, indexed by $\lambda$, of this  sequence  can be used to construct an element of a family $\{\bar{\pi}_{\lambda}\}_{t\in\mathcal{T}}$ that defines some $\mathcal{W}_\lambda^i$. Indexability can be confirmed by verifying that for any $\lambda_1 \leq \lambda_2$, any $(t,x_{0:t},a_{0:t-1})$, we have
\begin{align*}
&(t,x_{0:t},a_{0:t-1})\in \mathcal{W}_{\lambda_1}^i \Rightarrow \bar{\pi}_{\lambda_1,t}^{i*}(x_{0:t},a_{0:t-1})=0 \Rightarrow f_{\lambda_1, t}^{i*}(x_t, \sum_{t'=0}^{t-1} r^i_{t'}(x_{t'},a_{t'}))=0\\
&\quad\Rightarrow f_{\lambda_2, t}^{i*}(x_t, \sum_{t'=0}^{t-1} r^i_{t'}(x_{t'},a_{t'}))\leq f_{\lambda_2, t}^{i*}(x_t, \sum_{t'=0}^{t-1} r^i_{t'}(x_{t'},a_{t'}))=0 \Rightarrow \bar{\pi}_{\lambda_2,t}^{i*}(x_{0:t},a_{0:t-1})=0,
\end{align*}
which finally implies that $(t,x_{0:t},a_{0:t-1})\in \mathcal{W}_{\lambda_2}^i$.
Thus $\mathcal{W}_{\lambda_1}^i \subseteq \mathcal{W}_{\lambda_2}^i$ and Problem \ref{prob:ns-risk-aware} is indexable.
\Halmos

\begin{remark}
The conditions in Assumption~\ref{ass:mdp} have natural interpretations in applications. Condition \ref{ass:r2} implies that rewards depend only on the state, not the action taken. Condition \ref{ass:p1} indicates that active actions lead to stochastically better states. Conditions \ref{ass:p2} and \ref{ass:p3} capture that better states have better transition prospects, with condition \ref{ass:p3} ensuring that the advantage of active actions increases with state quality.
\end{remark}

Next, we present a second set of sufficient conditions for indexability of Problem \ref{prob:ns-risk-aware}.

\begin{assumption}\label{ass:mdp2}
The MDP satisfies the condition that for all 
\NimaEdits{$t \in \{0, \ldots, T-3\}$} and $x \in {\mathcal X}^i$:
    \[
    \|P^i_t(\cdot|x, 0) - P^i_t(\cdot|x, 1)\|_{TV} := \frac{1}{2}\sum_{x' \in {\mathcal X}^i} |P^i_t(x'|x, 0) - P^i_t(x'|x, 1)| \leq \frac{1}{2(T - t - 2)}.
    \]
\end{assumption}

\begin{theorem}
\label{thm:ns-indexable2}
Problem \ref{prob:ns-risk-aware} is indexable if all restless bandit arms  satisfy  condition \ref{ass:r2} and Assumption~\ref{ass:mdp2}.
\end{theorem}

\textit{Proof Sketch.}
To establish indexability, we need to show that the set of states where the passive action (action 0) is optimal expands monotonically as the penalty $\lambda$ increases. This is equivalent to proving that the difference between the value of taking the passive action and the active action is non-decreasing in $\lambda$. Using backward induction, we first establish that the rate of change of the value function with respect to $\lambda$ is bounded within $[-(T-1-t)/T, 0]$ at each time $t$. The total variation condition then ensures that the transition probability differences are sufficiently small, guaranteeing that this difference function increases with $\lambda$, thus establishing indexability. See Section \ref{app:ns-indexable2} in Electronic Companion for more details.
\Halmos

There are notable practical implications with respect to the assumptions of Theorem \ref{thm:ns-indexable}. In the machine maintenance application~\citep{glazebrook2006some,akbarzadeh2019restless,abbou2019group}, they imply that the state of each machine has a direct impact on its performance and so, a better state results in a better productivity. In addition, the machines deteriorate stochastically if not maintained, with a chance of worsening to any state. However, the chance of visiting a worse state is higher if the current state is worse, which is often the case in practice. Also, on the other hand, repairs may not always fully fix the issue but the chance is higher in better states. \NimaEdits{Note that stochastic deterioration and imperfect repairs generalize the transition dynamics in \cite{glazebrook2006some}.}

\NimaEdits{Such conditions are also particularly relevant in the context of} patient scheduling in hospitals. These assumptions imply that patients' conditions can deteriorate stochastically if not attended to, with worse conditions increasing the urgency for intervention. Treatments, while beneficial, may not always fully restore health immediately, reflecting a probabilistic transition towards recovery. This probability is higher if the patient is in a better state. This reflects realistic clinical scenarios where early and continuous care increases the likelihood of reaching an optimal health state~\citep{starfield2005health}

Theorem~\ref{thm:ns-indexable} requires convex utility, which implies risk-seeking preferences and might not fit to some real-world applications. However, Theorem~\ref{thm:ns-indexable2} takes a different approach: it drops the convexity requirement and the ordering conditions, replacing them with a single bounded total variation constraint. These alternative sufficient conditions suggest that indexability can arise from different problem structures. Finally, while Theorems~\ref{thm:ns-indexable} and \ref{thm:ns-indexable2} provide sufficient conditions for indexability, arms may still be indexable even when these conditions are violated. The key requirement is that the optimal policy be monotone in the penalty parameter $\lambda$. 
\NimaEdits{As shown in \citet[Table 2]{nino2007dynamic}, the number of nonindexable restless bandits are very limited. In addition, we did not encounter any nonindexable instances in our numerical analysis.}

\subsection{Computation of Whittle Indices} \label{subsec:compute}
Primarily, there are three approaches to consider for computing Whittle indices. One approach is problem-specific where the Whittle index formula is computed exactly~\citep{jacko2012opportunistic,borkar2017opportunistic,yu2018deadline,wang2019opportunistic}. The other one is the \textit{modified adaptive greedy algorithm} \citep{akbarzadeh2022conditions}, which works for any indexable RB in discounted and average reward setups. The last one is numerical search, which is either through \textit{adaptive greedy} \citep{nino2007dynamic}, binary search~\citep{qian2016restless,akbarzadeh2019dynamic}, or brute-force search.

In this work, we adapt the numerical search approach described in \citet{akbarzadeh2019dynamic} to compute the Whittle indices using binary search. This approach works for any indexable restless bandits instance and the key difference in our setup is the risk-aware MDP solver. The procedure of the binary search algorithm \NimaEdits{is presented in Algorithms \ref{alg:IDX} and \ref{alg:CRC}. Based on the equivalence established in Proposition \ref{thm:optimalityAugmentedMDP}, the Whittle index $w^i$ for a history $(x_{0:t},a_{0:t-1})$ is recovered from the computed index $\hat{w}$ by mapping the history to its corresponding augmented state $(x_t, s_t)$, where $s_t = \sum_{t'=0}^{t-1} r^i_{t'}(x_{t'},a_{t'})$ is the accumulated reward:
\(
w^i(x_{0:t},a_{0:t-1}, t) := \hat{w}\left(x_t, \sum_{t'=0}^{t-1} r^i_{t'}(x_{t'},a_{t'})\right)
\).}
For simplicity, we drop the superscript~$i$ in the pseudo-code.

The computational complexity of the Whittle index calculation is primarily determined by the backward induction over \(T\) time steps for a Markov decision process with two actions. The state space includes dimensions $\mathcal{X}$ and $\mathcal{S}_t$, for each $t$, with transitions governed by two  matrices of size \(|\mathcal{X}| \times |\mathcal{X}|\). The size of \(|\mathcal{S}|\) depends critically on the per-step reward structure. Since the per-step rewards are discrete values, we assume they are multiples of some $\delta > 0$ and span a range $[k_{\min}\delta, k_{\max}\delta]$, then the number of distinct accumulated reward values is $O(T(k_{\max}-k_{\min}))$. Thus, the complexity for solving the MDP for each specific value of $\lambda$ is \(O(|\mathcal{X}|^2 (k_{\max}-k_{\min}) |\mathcal{A}|T^2)\). Additionally, the critical penalty search contributes a logarithmic factor of \(O\left(\log \left((UB - LB)/\epsilon\right)\right)\) and must be done at most $O(|\mathcal{X}| (k_{\max}-k_{\min}) T)$ times. Thus, the overall complexity of the algorithm is \(O\left(|\mathcal{X}|^3 (k_{\max}-k_{\min})^2 |\mathcal{A}|T^3 \log \left((UB - LB)/\epsilon\right)\right)\).

\begin{algorithm}[t!]
\small
\caption{Whittle Index Calculation for an arm} \label{alg:IDX}
\begin{algorithmic}[1]
\State \textbf{Input:} $\mathtt{LB}$ (initial lower bound), $\mathtt{UB}$ (initial upper bound), $\epsilon$ (tolerance)
\NimaEdits{\State $\mathtt{LB}$ and $\mathtt{UB}$ should be such that $f_{\mathtt{LB},t}^*(x,s)=1$ and $f_{\mathtt{UB},t}^*(x,s)=0$ for all $(t,x,s)$
\State $k \leftarrow 1$, $f^*_{1,t}(x,s)\leftarrow f_{\mathtt{LB},t}^*(x,s)$ for all $(t,x,s)$
\Repeat
\State Set $k \leftarrow k+1$
  \State Compute 
    \(
      (\lambda_{k}, f^*_{k})
      = \Lambda^{\epsilon}(\lambda_k,UB)
    \)
    using Algorithm~\ref{alg:CRC}.
  \State Set
    \(
      \hat w(x,s,t)\gets\lambda_{k}
    \), for all $(x,s,t):
        f^*_{k-1,t}(x,s)=1
        \wedge
        f_{k,t}(x,s)=0
      \bigr\}$ 
\Until{$f_{k}^*=0$}}
\State \textbf{return} $\hat w$
\end{algorithmic}
\end{algorithm}

\begin{algorithm}[h!]
\small
\caption{Next Critical Penalty Finder: $\Lambda^{\epsilon}(\lambda^-, \lambda^+)$\EDcomment{Nima, can you help me fix the errors}\NimaResponse{Couldn't!}} \label{alg:CRC}
\begin{algorithmic}[1]
\State \textbf{Input:} $\alpha_l$ (current lower bound), $\alpha_u$ (current upper bound), $\epsilon$ (tolerance level)
\State Compute $f^*_{\alpha_{u}}$
\While{$\lambda^+ - \lambda^- \ge \epsilon$}
\State Set $\lambda \gets (\lambda^- + \lambda^+) / 2$
\State Compute $f^*_{\lambda}$
\If{$f^*_{\lambda}=f^*_{\lambda^-}$} Update $\lambda^- \gets \lambda$.
\Else{}  Update $\lambda^+ \gets \lambda$.
\EndIf
\EndWhile
\State \textbf{return} $\lambda^+, f^*_{\lambda^+}$.
\end{algorithmic}
\end{algorithm}

\section{Learning Policy} \label{sec:unknown}
In this section, we propose a learning algorithm, called \textit{Learning Risk-Aware Policy using Thompson Sampling} (\texttt{LRAP-TS}), to address Problem~\ref{prob:learning}. In this setup, a learning agent aims to minimize the regret in \NimaEdits{\eqref{eq:reg-def}} during $K$ episodes of interaction with the system by balancing the \textit{exploration-exploitation trade-off} \citep{auer2002finite}.

We first introduce additional parameters and variables. Let $\theta^{\star i} \in \Theta^i$ represent the unknown parameters that define the transition probabilities of arm $i \in {\mathcal N}$, where $\Theta^i$ is a compact set. Additionally, we assume that ${\theta^{\star i}}$ are independent of each other and $\bs{\theta}^\star = (\theta^{\star 1}, \ldots, \theta^{\star N})$. Let $\phi^i_1$ capture the prior on $\theta^i$ for each arm $i \in \mathcal{N}$. Furthermore, we let $h^i_k$ be the history of states and actions of arm $i$ during episode $k$, and $\phi^i_k$ be the posterior distribution on $\theta^{\star i}$ given $(h_1^i,\dots,h^i_{k-1})$. Then, upon applying action $a$ at state~$x$ and observing the next state~$x'$ for arm~$i$, the posterior distribution $\phi^i_{k+1}$ can be computed using Bayes rule as 
\begin{equation} \label{eqn:phi-update}
\phi^i_{k+1}(d \theta) = \dfrac{P^i_{\theta}(x' | x, a) \phi^i_{k}(d \theta)}{\int P^i_{\tilde \theta}(x' | x, a) \phi^i_{k}(d \tilde \theta)}.
\end{equation}
If the prior is a conjugate distribution on $\Theta^i$, then the posterior can be updated in closed form. We note that our algorithm and regret analysis do not depend on the specific structure of the prior and posterior update rules.

\subsection{\textsc{LRAP-TS} Algorithm}

Algorithm \texttt{LRAP-TS} operates in episodes each with length~$T$.
It maintains a posterior distribution~$\phi^i_k$ on the dynamics of arm~$i$ and keeps track of \( N^{i}_{k, t}(x, a) = \sum_{\kappa = 1}^{k} \sum_{\tau = 0}^{t} \IND\{(X^i_{\kappa, \tau}, A^i_{\kappa, \tau}) = (x, a)\} \) and \( N^{i}_{k, t}(x, a, x_{+}) = \sum_{\kappa = 1}^{k} \sum_{\tau = 0}^{t} \IND\{(X^i_{k, t}, A^i_{k, t}, X^i_{k, t+1}) = (x, a, x_{+})\}. \)

In particular, we use Dirichlet priors for the transition probabilities, which provide conjugate updates. The posterior after observing a transition from $x$ to $x'$ under action $a$ at time $t$ in episode $k$ is:
\begin{equation*}
    P^i_{k, t}(\cdot | x, a) \sim \text{Dirichlet}(N^i_{k, t}(x,a,1)+1, \ldots, N^i_{k, t}(x,a,{|\mathcal{X}^i|})+1).
\end{equation*}

At the beginning of episode~$k$, one starts by sampling for each arm $i \in \mathcal{N}$ a set of parameters $\theta^i_k$ from the posterior distribution~$\phi^i_{k}$. The optimal policy for problem \ref{prob:learning} under $\{\theta^i_k\}_{i\in\mathcal{N}}$ is then identified and implemented until the end of the horizon. The transitions and rewards observed along the trajectories are finally used to update the posteriors $\phi^i_{k+1}$ for each arm. The algorithm is described in Algorithm~\ref{alg:LRAP-TS}.

\begin{algorithm}[h!]
\small
\caption{\textsc{LRAP-TS}\label{alg:LRAP-TS}}
\begin{algorithmic}[1]
\State Input: Initial states $ \{x^i_0\}_{i \in \mathcal{N}} $, priors $ \{\phi^i_1\}_{i \in \mathcal{N}} $.
\For{$k = 1, 2, \ldots, K$}
\State Sample $\theta^i_{k} \sim \phi^i_{k}$ for arm $i \in \mathcal{N}$ and compute the estimated risk-aware policy.
\For{$t \in \mathcal{T}$}
\State Implement actions based on the estimated risk-aware policy.
\EndFor
\State Update $\phi^i_{k+1}$ according to equation
\eqref{eqn:phi-update} for arm $i \in \mathcal{N}$.
\EndFor
\end{algorithmic} 
\label{Alg:ISRB-TSDE}
\end{algorithm}

\subsection{Regret Bound}
With a slight abuse of notation, let $(\bs{P}^{\star}, \bs{V}^\star, \bs{\pi}^\star)$ denote the transition probability matrix, the optimal value function, and the optimal policy for the overall system parameterized by the true parameters $\theta^\star$ and let $(\bs{P}_k, \bs{V}_k, \bs{\pi}_k)$ denote the transition probability matrix, the optimal value function, and the optimal policy for the overall system parameterized by the estimated parameters ${\theta_k}$ in episode~$k$.

We first bound the expected error in estimation of the unknown transition probabilities over all arms and rounds.

\begin{lemma} \label{lemma:rl-Pdiff}
Let $\bar{|\mathcal{X}|} := \max_{i} |\mathcal{X}^i|$. \EDcomment{check whether all definition are $:=$ and statement of equality $=$.}\NimaResponse{Checked and updated the main text.}\EDcomment{What about appendices?}\NimaResponse{Done.}Then we have 
\begin{equation*}
\sum_{k = 1}^{K} \sum_{t = 0}^{T-1} \EXP\Bigl[ \bigl\lVert \bs{P}^{\star}_t(\cdot   | \bs{X}_{k, t}, \bs{A}_{k, t}) - \bs{P}_{k, t}(\cdot   | \bs{X}_{k, t}, \bs{A}_{k, t}) \bigr\rVert_1 \Bigr] \\ \le 12 N \bar{|\mathcal{X}|} \sqrt{KT (1 + \log (KT))}. 
\end{equation*}
\end{lemma}

Finally, we bound the expected regret defined in \eqref{eq:reg-def}.
\begin{theorem} \label{thm:regret}
The expected regret defined in \eqref{eq:reg-def} under the LRAP-TS algorithm is bounded by:
\[{\REGRET}(K) \leq 12 N^2 T r_{\max}\bar{|\mathcal{X}|} \sqrt{KT (1 + \log (KT))}.\]
\end{theorem}
This result shows that the regret accumulated by LRAP-TS is sublinear in the number of episodes~$K$ and quadratic in the number of arms~$N$. We refer the reader to Section~\ref{app:lem-pdiff} in Electronic Companion and \ref{app:thm-reg} for the proofs of Lemma \ref{lemma:rl-Pdiff} and Theorem \eqref{thm:regret}.

\section{Infinite-Horizon Stationary RB (ISRB)}\label{sec:infiniteH}

In this section, we extend the RB framework to the infinite-horizon setting. In contrast to the finite-horizon formulation where the planning period is fixed to $T$ time steps, the infinite-horizon formulation considers the evolution of the system over an unbounded time period. This setting is particularly useful when the system is expected to operate indefinitely or when one is interested in the long-term behavior of the policies. To handle the infinite horizon, a discount factor $\beta \in (0,1)$ is introduced as a typical approach for convergence \NimaEdits{\citep{puterman2014markov}}. In the following, we describe the infinite-horizon restless bandit process, formulate the associated planning problems in both risk-neutral and risk-aware settings, and finally introduce the learning problem for the infinite-horizon risk aware RB.

\subsection{Parameters Description}
An infinite-horizon stationary restless bandit arm is defined by the tuple
\(
(\beta, \mathcal{X}, \mathcal{A}, \{P(a)\}_{a\in\{0,1\}}, r, x_0),
\)
where \NimaEdits{all the parameters are already defined}. An ISRB is a collection of N independent infinite-horizon stationary restless bandit processes where the \NimaEdits{action space} is $\bs{\mathcal{A}}(M)$ as defined for FNRB.
In the infinite-horizon setting, the performance of a policy is measured by the discounted cumulative reward.

\subsection{Infinite-Horizon Planning Problem} \label{subsec:inf-problem}
In the infinite-horizon setting, the objective is to maximize the long-run discounted reward. In the risk-neutral case, the cumulative reward under a stationary Markov policy $\bs{\pi}\in\bs{\Pi}^{\infty}_M$, where $\bs{\Pi}^{\infty}_M$ is the set of all such policies, is defined as
\[
\bs{J}_{\bs{x}_0}(\bs{\pi}) := \sum_{i\in\mathcal{N}} \sum_{t=0}^{\infty} \beta^t r^i\bigl(X^i_t, A^i_t\bigr) \biggm|_{\bs{\pi}, {\bs X}_0=\bs{x}_0}.
\]
The risk-neutral infinite-horizon RB problem is then defined as follows.

\begin{ISRB}[(Infinite-horizon Stationary RB)]\label{prob:inf-risk-neutral}
Given a set of $N$ arms 
\(
(\beta, \mathcal{X}^i,\mathcal{A},\{P^i(a)\}_{a\in\{0,1\}}, r^i, x^i_0)
\), $i\in\mathcal{N}$
where at most $M$ arms can be activated at each time step, find a time-dependent deterministic policy $\bs{\pi}\in\bs{\Pi}^{\infty}_{M}$ that maximizes 
\(
\mathbb{E}[\bs{J}_{\bs{x}_0}(\bs{\pi})].
\)
\end{ISRB}

To incorporate risk-sensitivity, we consider a set of Lipschitz continuous non-decreasing utility functions $\{U^i\}_{i\in\mathcal{N}}$. In the risk-aware setting, the performance of a policy is measured by applying the utility function to the discounted cumulative reward of each arm as follows:

\[
\bs{D}_{{\bs x}_0}(\bs{\pi}) := \sum_{i\in\mathcal{N}} U^i\Bigl(\sum_{t=0}^{\infty} \beta^t r^i\bigl(X^i_t,A^i_t\bigr)\Bigr) \biggm|_{\bs{\pi}, {\bs X}_0=\bs{x}_0}.
\]

Let $\bs{\Pi}^{\infty}_H$ be the set of all history-dependent policies. Finally, the infinite-horizon risk-aware RB problem is defined as follows.

\begin{RISRB}[(Risk-Aware ISRB)]\label{prob:inf-risk-aware}
Given a set of Lipschitz continuous non-decreasing utility functions $\{U^i\}_{i\in\mathcal{N}}$, and a set of $N$ arms 
\(
(\beta, \mathcal{X}^i,\mathcal{A},\{P^i(a)\}_{a\in\{0,1\}}, r^i, x^i_0)
\), $i\in\mathcal{N}$
where at most $M$ arms can be activated at each time step, find a history-dependent policy $\bs{\pi}\in\bs{\Pi}^{\infty}_H$ that maximizes $\mathbb{E}\left[ \bs{D}_{{\bs x}_0}(\bs{\pi}) \right]$.
\end{RISRB}

\NimaEdits{To find the solution to the infinite-horizon problem, we first consider the finite-horizon discounted setup. Although discounted finite-horizon RB can be modeled as a special case of FNRB, that is, by the assumption that the transition probabilities are time-invariant and the per-step reward is set as $r_t(x, a) = \beta^t r(x, a)$, we will need to formally define the value function and Bellman equation
to facilitate the infinite-horizon analysis.}

\subsubsection{The Case of the Risk Neutral ISRB} \label{subsec:inf-overview}
As in previous case, we apply a relaxation to the constraint of activating at most $M$ arms at a time as follows:
\begin{align*} 
\max_{{\bs \pi} \in {\bs \Pi}^{\infty}_M} \mathbb{E}\left[{\bs J}_{{\bs x}_0}(\bs{\pi}) \right] \text{ s.t. } \lim_{T\to\infty} \EXP\biggl[ \sum_{t = 0}^{T} \beta^t \lVert {\bs A}_t \rVert_1 \bigg| {\bs X}_0 = \bs{x}_0 \biggr] \leq \dfrac{M}{1 - \beta}. 
\end{align*}
This relaxation decouples the problem into $N$ independent optimization problems via Lagrangian relaxation with multiplier $\lambda \in \mathbb{R}_+$:
\begin{equation}
\max_{{\bs \pi} \in {\bs \Pi}^{\infty}_M} \mathbb{E}\left[{\bs J}_{{\bs x}_0}(\bs{\pi}) \right] - \lambda \left(\lim_{T\to\infty} \EXP\biggl[ \sum_{t = 0}^{T} \beta^t \lVert {\bs A}_t \rVert_1 \bigg| {\bs X}_0 = \bs{x}_0 \biggr] - \dfrac{M}{1 - \beta}\right) = \sum_{i=1}^N \max_{\pi^i \in \Pi^{\infty}_M} \bar{J}^{i}_{\lambda, x^i_0}(\pi^i) + \lambda \dfrac{M}{1 - \beta},
\label{eq:inf-lagr}
\end{equation}
where each policy function $\pi^i \in \Pi^{\infty}_M$ is now stationary, with $\pi^i : {\mathcal X}^i \to {\mathcal A}$ and where
\[
\bar{J}^{i}_{\lambda, x^i_0}(\pi^i) := \lim_{T\to\infty} \mathbb{E}\left[\sum_{t = 0}^{T} \beta^t \left( r^i\left(X^i_t, A^i_t\right) - \lambda  A^i_t \right) \biggm| {X^i_0=x^i_0}\right].
\]
\NimaEdits{Note $\lambda M/(1 - \beta)$ is a constant and can be dropped. Then, each} arm’s optimal policy $\pi^{i*}_\lambda$ is derived by dynamic programming on an MDP parameterized by $(\mathcal{X}^i, \mathcal{A}, \{P^i(a)\}_{a \in \{0, 1\}}, \bar{r}^{i}_{\lambda}, x_0^i)$, with $\bar{r}^i_{\lambda}(x,a) := r^i(x,a) - \lambda a$. One then assembles these to form a solution to \eqref{eq:inf-lagr} as $\bs{\pi}^{*}_\lambda :=(\pi_\lambda^{1*}, \dots, \pi_\lambda^{N*})$.

\subsubsection{Revisiting the Finite-Horizon Risk-Aware Setting}
For ease of notation, we drop the superscript $i$ in our analysis. In the risk-aware setting, we first consider a finite time horizon \(T\) hence employ the following objective for the arm:
\[
J_{\lambda, T}(x_0) := \max_{\pi\in\Pi_H^T}\mathbb{E}\Biggl[ U\Bigl(\sum_{t=0}^{T-1}\beta^t r(X_t,A_t)\Bigr)
-\lambda \sum_{t=0}^{T-1}\beta^t A_t \Bigg| X_0=x_0 \Biggr].
\]
where $\Pi^T_H$ is the set of history-dependent policies. In a similar fashion as in \cite{bauerle2014more}, we consider an augmented state space for the MDP to cast the problem as a risk-neutral MDP. Namely, consider
\[
\hat{\mathcal{X}} = \mathcal{X}\times \mathbb{R}_+ \times (0,1],
\]
where, for a state \((x,y,z)\in\hat{\mathcal{X}}\), $x$ captures the initial state of the MDP, $y$ captures some accumulated (discounted) rewards, and $z$ captures some accumulated discount. Specifically, we define for any \(n=0,1,\dots, T\):
\begin{equation}
    V_{\lambda, n}(x,y,z) := \max_{\pi\in{\Pi}^n_H} \mathbb{E}^{\pi}_x \Biggl[ U\Bigl(y+\sum_{k=0}^{n-1}z\beta^k r(X_k,A_k)\Bigr) - \lambda \sum_{k=0}^{n-1}z\beta^k A_k \Biggr]\,,\,\forall (x,y,z)\in\hat{\mathcal{X}},\label{eq:infDiscVDef}
\end{equation} 
where $\mathbb{E}^{\pi}_x[h(X_{0:n-1},A_{0:n-1})]:=\mathbb{E}[h(X_{0:n-1},A_{0:n-1})|X_0=x]$ with all actions are drawn from $\pi$, and define \(V_{\lambda, T}(x,0,1)=J_{\lambda, T}(x)\) for all $x \in \mathcal{X}$.

Define the Bellman operators, for all $v:\hat{\mathcal{X}}\rightarrow \mathbb{R}$ and 
decision rule $f:\hat{\mathcal{X}}\to \mathcal{A}$:
\begin{align*}
    (\BellOp_{f,\lambda} v)(x,y,z) &:= \sum_{x'\in\mathcal{X}} \Bigl[ v\Bigl(x',  y+z r(x,f(x,y,z)),  z\beta\Bigr) - \lambda z f(x,y,z) \Bigr] P(x'|x,f(x,y,z)),\\[1mm]
    (\BellOp_\lambda v)(x,y,z) &:= \max_{a\in\mathcal{A}} \sum_{x'\in\mathcal{X}} \Bigl[ v\Bigl(x',  y+z r(x,a),  z\beta\Bigr) - \lambda z a \Bigr] P(x'|x,a).
\end{align*}
Given any $v:\hat{\mathcal{X}}\rightarrow \mathbb{R}$, we call a decision rule $f^*_\lambda$ a maximizer of $\BellOp_\lambda v$ if $\BellOp_{f^*,\lambda} v=\BellOp_\lambda v$.

\begin{theorem}\label{thm:discounted-finite}
The following hold:
\begin{enumerate}
    \item For \(n=1,\dots,T\), we have that $V_{\lambda, n} = \BellOp_\lambda V_{\lambda, n-1}$, with $V_{\lambda, 0}(x,y,z)=U(y)$.
    \item For \(n=1,\dots,T\), let \((f_{\lambda, 1}^*,f_{\lambda, 2}^*,\dots,f_{\lambda, n}^*)\) be any sequence of decision rules such that $\BellOp_{f_{\lambda, k}^*,\lambda} V_{\lambda, k-1}=V_{\lambda, k}$, for $k=1,\dots,n$. Given a reference $(\hat{x},\hat{y},\hat{z})$, the history dependent policy $\pi_\lambda^*=(\pi_{\lambda, 0}^*,\pi_{\lambda, 1}^*,\dots,\pi_{\lambda, n-1}^*)$ constructed via
    \begin{align*}
        \pi_{\lambda, 0}^*(\hat{x}) &:= f_{\lambda, n}^*(\hat{x},\hat{y},\hat{z}),\\[1mm]
        \pi_{\lambda, k}^*(h_k) &:= f_{\lambda, n-k}^*\Bigl(x_k,  \hat{y}+\hat{z}\sum_{{k'}=0}^{k-1}\beta^{k'} r(x_{k'},a_{k'}), \hat{z} \beta^k\Bigr),
    \end{align*}
    where $h_k$ is short for $(x_{0:k}, a_{0:k-1})$, achieves optimality in the definition of  $V_{\lambda, n}(\hat{x},\hat{y},\hat{z})$ (see \eqref{eq:infDiscVDef}).
\end{enumerate}
\end{theorem}

See Section \ref{app:discounted-finite} in Electronic Companion for the proof.

\subsubsection{Infinite-Horizon Extension} \label{subsec:inf-extension}

We consider the restless bandit problem with infinite time horizon and discount factor \(\beta\in(0,1)\) and a finite $\lambda \in \mathbb{R}_+
$
\begin{equation}\label{eq:infHorizonProb}
J_{\lambda, \infty}(x):=\sup_{\pi\in\Pi^{\infty}_H}\lim_{T\rightarrow \infty}\mathbb{E}_x^\pi\Biggl[ U\Bigl(\sum_{t=0}^{T-1}\beta^t r(X_t,A_t)\Bigr) - \lambda\sum_{t=0}^{T-1}\beta^t a_t  \Biggr].
\end{equation}
Similar as the previous section and the  steps in \cite{bauerle2014more}, we augment the state space only to track the current state, the accumulated (discounted) rewards, and the current accumulated discount multiplier. 

Let 
\begin{equation}
V_{\lambda, \infty}(x,y,z):=\sup_{\pi\in\Pi^{\infty}_H}\lim_{n\rightarrow \infty}\mathbb{E}_x^\pi\Biggl[ U\Bigl(y+z\sum_{t=0}^{n-1}\beta^t r(X_t,A_t)\Bigr) - \lambda\sum_{t=0}^{n-1}z\beta^t a_t  \Biggr].    \label{eq:infHorizonProbExt}
\end{equation}
Then, we have that \(V_{\lambda, \infty}(x,0,1)\equiv J_{\lambda, \infty}(x)\), in value and set of optimizers.

\begin{theorem} \label{thm:inf-convergence-policy2}
The function $V_{\lambda, \infty}(x,y,z)$ is such that both $\BellOp V_{\lambda, \infty} = V_{\lambda, \infty}$ and 
\[
(\BellOp^n V_{\lambda, 0})(x,y,z)\rightarrow V_{\lambda, \infty}(x,y,z),\quad\forall (x,y,z)\in\hat{\mathcal{X}}.\]
with $V_{\lambda, 0}(x,y,z)=U(y)$.
Moreover, for any decision rule \(f_\lambda^*\) such that  \(\BellOp_\lambda V_{\lambda, \infty}=\BellOp_{f_\lambda^*,\lambda}V_{\lambda, \infty}\), and given a reference $(\hat{x},\hat{y},\hat{z})\in\hat{\mathcal{X}}$, the history dependent policy \(\pi_\lambda^*=(\pi_{\lambda, 0}^*,\pi_{\lambda, 1}^*,\dots)\) defined by
\[
\pi_{\lambda, 0}^*(\hat{x})=f_\lambda^*(\hat{x},\hat{y},\hat{z}),\qquad \pi_{\lambda, n}^*(h_n)=f_\lambda^*\Bigl(x_n, \hat{y}+\hat{z} \sum_{t=0}^{n-1}\beta^t r(x_t,a_t),  \hat{z}\beta^n\Bigr) \quad \text{for } n\ge1,
\]
achieves optimality in \eqref{eq:infHorizonProbExt}.\end{theorem}

\NimaEdits{The proof relies on showing that the Bellman operator $\BellOp_\lambda$ is a $\beta$-contraction on the augmented state space $\hat{\mathcal{X}}$ (w.r.t. the $L_\infty$-norm). This is established by leveraging the Lipschitz continuity of the utility function $U$. Notably, this is a weaker requirement than the concavity assumed in prior work of \cite{bauerle2014more}, making our result more general. The Banach fixed-point theorem then applies, guaranteeing both the existence and the convergence of value iteration to a unique fixed point $V_{\lambda, \infty}$. The optimality of the greedy policy $\pi_\lambda^*$ with respect to $V_{\lambda, \infty}$ is a standard result of this framework. Full details are deferred to Section \ref{app:discounted-infinite} in Electronic Companion.}

For each individual arm $i$, by initializing the reference state in the value function definition \eqref{eq:infHorizonProbExt} as $(\hat{x}, \hat{y}, \hat{z}) = (x^i_0, 0, 1)$, $V_{\lambda, \infty}(x^i_0,0,1)$ yields $J_{\lambda, \infty}(x^i_0)$, which is the optimal value of \eqref{eq:infHorizonProbExt} for that arm given its initial state $x^i_0$. The history-dependent policy $\pi^*_{\lambda}$ provided by the theorem is optimal for this individual arm problem. Furthermore, the theorem implies that the structural properties of the potentially complex history-dependent policy $\pi^*_{\lambda}$ can be conveniently studied by analyzing the simpler, stationary decision rule $f^*_{\lambda}$ that operates on the augmented state space $\hat{\mathcal{X}}$.

\subsubsection{Two Classes of Indexable Arms for ISRB}

We start by extending the definition of indexibility presented in Definition \ref{def:idxbl_RFNRB} to an infinite horizon.

\begin{definition}[Indexability of Problem \ref{prob:inf-risk-aware}] \label{def:idxbl_RINRB}
An RISRB is \emph{indexable} if for all $i \in \mathcal{N}$, the passive set
\[
\mathcal{W}^i_{\lambda} := \bigl\{ (t,x_{0:t},a_{0:t-1})) \in \cup_{t=0}^\infty\{t\}\times{\mathcal{X}^i}^{t+1}\times\mathcal{A}^{t} : \bar{\pi}^{i*}_{\lambda,t}(x_{0:t},a_{0:t-1}) = 0 \bigr\},
\]
is non-decreasing in $\lambda$ for some sequence of optimal history dependent policy $\{\bar{\pi}^{i*}_{\lambda}\}_{\lambda\geq0}$. For an indexable RISRB, the \emph{Whittle index} is~$w^i(x_{0:t},a_{0:t-1}, t):= \inf \left\{ \lambda \in \mathbb{R}_+: (t,x_{0:t},a_{0:t-1}) \in \mathcal{W}^i_{\lambda} \right\}$.
\end{definition}

We extend the sufficient conditions for indexability to infinite-horizon problems. In an infinite horizon problem with history-dependent optimal policies, an arm can be considered indexable if the passive set or equivalently the optimal policy is non-decreasing in $\lambda$ for each possible history. In view of what was established in Theorem \ref{thm:inf-convergence-policy2} and similarly as in Section \ref{prob:ns-risk-aware}, this is the case if there exists a family of decision rules $\{f_\lambda^*\}_{\lambda\in[0,\infty)}$ that satisfies $\BellOp_\lambda V_{\lambda, \infty}=\BellOp_{f_\lambda^*,\lambda} V_{\lambda, \infty}$, and is monotone in $\lambda$. As established in Lemma 4.7.1 of \cite{puterman2014markov}, the latter can be straightforwardly confirmed when the Q-function of the augmented MDP is superadditive.

\begin{lemma}[Lemma 4.7.1 of \cite{puterman2014markov}]\label{thm:monotoneSuperAdd}
    If for all $(x,y,z)\in\hat{\mathcal{X}},0\leq \lambda_1\leq \lambda_2$, we have that
\[Q_{\lambda, \infty}(x,y,z,a):=- \lambda z a + \sum_{x'\in\mathcal{X}} \Bigl[ V_{\lambda, \infty}\Bigl(x',  y+z r(x,a),  z\beta\Bigr)  \Bigr] P(x'|x,a)\]    
    satisfies
    \begin{equation}
      Q_{\infty,\lambda_1}(x,y,z,1)-Q_{\infty,\lambda_1}(x,y,z,0)\geq Q_{\infty,\lambda_2}(x,y,z,1)-Q_{\infty,\lambda_2}(x,y,z,0),  \label{cond:superAdditiveQ}
    \end{equation}
    then there exists a family of decision rules $\{f_\lambda^*\}_{\lambda\in[0,\infty)}$ that satisfies $\BellOp_\lambda V_{\lambda, \infty}=\BellOp_{f_\lambda^*,\lambda} V_{\lambda, \infty}$, and is monotone in $\lambda$.
\end{lemma}

\begin{theorem}[Indexability of Problem \ref{prob:inf-risk-aware}]\label{thm:inf-indexable}
Problem \ref{prob:inf-risk-aware} is indexable if all restless bandit arms  satisfy  Assumption~\ref{ass:mdp}.
\end{theorem}

\textbf{\textit{Proof Sketch.}} Our proof relies on verifying that, for each restless bandit arm, the $Q_{\lambda,\infty}(x,y,z,a)$ satisfies condition \eqref{cond:superAdditiveQ}. Lemma \ref{thm:monotoneSuperAdd} then ensures the existence of a monotone family of optimal decision rules $\{f_\lambda^*\}_{\lambda\in[0,\infty)}$ for the augmented risk neutral MDP associated to the arm. The rest of the proof follows exactly as for the proof of Theorem \ref{thm:ns-indexable}.
Check Section \ref{app:proofThm6} in Electronic Companion for details.

Next, we present another set of sufficient conditions for indexability of Problem \ref{prob:inf-risk-aware}.

\begin{assumption}\label{ass:mdp3}
The MDP satisfies the condition that 
    for all $x \in \mathcal{X}$:
    \[
    \|P(\cdot|x, 0) - P(\cdot|x, 1)\|_{TV} \leq \frac{1 - \beta}{2\beta}
    \]
\end{assumption}

\begin{theorem}
\label{thm:inf-indexable2}
Problem \ref{prob:inf-risk-aware} is indexable if all restless bandit arms  satisfy condition \ref{ass:r2} and Assumption~\ref{ass:mdp3}.
\end{theorem}

\textbf{\textit{Proof Sketch.}}
As already discussed in the proof sketch of Theorem~\ref{thm:ns-indexable2}, showing indexability is equivalent to proving that the difference between the value of taking the passive action and the active action is non-decreasing in $\lambda$. In the infinite-horizon setting with discount factor $\beta$, we establish that the rate of change of value with respect to $\lambda$ lies in the interval $[-z/(1-\beta), 0]$ for any state $(x,y,z)$. The total variation bound ensures that the passive set expands monotonically with $\lambda$ and establishes indexability.

\subsection{Infinite-Horizon Learning Problem} \label{sec:inf-learning}

In this section, we extend our learning framework to infinite-horizon restless bandits with discounted rewards. While the finite-horizon setting in Section~\ref{sec:unknown} provides theoretical guarantees, the infinite-horizon discounted setting presents unique challenges for learning risk-aware policies. In fact, existing theoretical frameworks for Thompson Sampling with dynamic episodes \citep{ouyang2017learning,akbarzadeh2023learning} focus on average reward criteria and do not directly extend to discounted objectives.

\NimaEdits{To address these challenges, we propose \textit{Thompson Sampling with Dynamic Episodes} (TSDE) for infinite-horizon Risk-aware Whittle Index Policy (I-RAWIP), which adapts the dynamic episode structure from \cite{ouyang2017learning} \NimaEdits{and} \cite{akbarzadeh2023learning} to learn risk-aware policies in the discounted infinite-horizon setting.}

The algorithm maintains posterior distributions $\phi^i_k$ over the transition dynamics of each arm $i \in \mathcal{N}$. At the beginning of each episode $k$, the algorithm samples parameters $\theta^i_k \sim \phi^i_k$ for all arms and computes the corresponding risk-aware Whittle indices. These indices are then used to make risk-aware decisions throughout the episode.

\NimaEdits{The episode length is determined dynamically. Each episode $k$ continues until one of two stopping triggers is met:
\begin{enumerate}
    \item \textit{Time-based trigger}: The episode ends if its current length ($t - t_k$) exceeds the length of the previous episode ($T_{k-1}$). \EDcomment{I dont get this. The first episodes are obviously short then this criteria prevents them from increasing... the opposite of what you write. Do you mean that you stop when both of these conditions are met rather than when the first one occurs?}\NimaResponse{These conditions are according to \url{https://arxiv.org/pdf/1709.04570}. This condition ensures that the episode length grows at a linear rate. If any condition is met (any first one), the end of an episode is triggered.}
    \item \textit{Visit-based trigger}: The episode ends if the total number of visits to any state-action pair $(x,a,i)$ doubles from its count at the episode's start, i.e., $N^i_t(x,a) \geq 2N^i_{t_k}(x,a)$ for some $(x,a,i)$.
\end{enumerate}}

These criteria ensure that the algorithm collects sufficient data within each episode while adapting to the learning progress. The complete algorithm is presented in Algorithm~\ref{Alg:ISRB-TSDE-Corrected}.

\begin{algorithm}[h!]
\small
\caption{I-RAWIP/WIP - TSDE}
\begin{algorithmic}[1]
\State \textbf{Input:} Initial states $\{x^i_0\}_{i \in \mathcal{N}}$, priors $\{\phi^i_1\}_{i \in \mathcal{N}}$
\State \textbf{Initialize:} $t \leftarrow 1$, $t_1 \leftarrow 1$, $T_0 \leftarrow 0$, $N^i(x,a) \leftarrow 0$ for all $i,x,a$
\For{episodes $k = 1, 2, \ldots$}
    \State $t_k \leftarrow t$
    \State Sample $\theta^i_k \sim \phi^i_k$ for each arm $i \in \mathcal{N}$
    \State Compute RAWIP/WIP $w^i_k(\cdot)$ using sampled MDP parameters
    \State $N_{t_k}^i(x,a) \leftarrow N^i(x,a)$ for all $(x,a,i)$
    
    \While{$t \leq t_k + T_{k-1}$ AND $N^i(x,a) < \max(1, 2 \cdot N_{t_k}^i(x,a))$ for all $(x,a,i)$}
        \State Coordinator collects Whittle indices $w^i_k(\cdot)$ from all arms
        \State Activate $M$ arms with highest indices: $a^i_t = 1$ if $i \in \text{top-}M$, else $a^i_t = 0$
        \State Observe transitions: $x^i_{t+1} \sim P^i_{\theta^{\star i}}(\cdot | x^i_t, a^i_t)$ for all $i$
        \State Update visit counts: $N^i(x^i_t, a^i_t) \leftarrow N^i(x^i_t, a^i_t) + 1$
        \State $t \leftarrow t + 1$
    \EndWhile
    
    \State $T_k \leftarrow t - t_k$
    \State Update posteriors $\phi^i_{k+1}$ using observed transitions via equation \eqref{eqn:phi-update}
\EndFor
\end{algorithmic}
\label{Alg:ISRB-TSDE-Corrected}
\end{algorithm}

Similar to the non-stationary setting, we use Dirichlet priors for the transition probabilities, which provide conjugate updates. For each arm $i$ and state-action pair $(x,a)$, we maintain counts $N^i(x,a,x')$ of observed transitions. The posterior after observing a transition from $x$ to $x'$ under action $a$ is:
\begin{equation*}
    \hat{P}^i_t(\cdot | x, a) \sim \text{Dirichlet}(N^i_t(x,a,1), \ldots, N^i_t(x,a,{|\mathcal{X}^i|}))
\end{equation*}
Given sampled parameters $\theta^i_k$, we compute the risk-aware Whittle indices by solving the auxiliary optimization problem described in Section~\ref{subsec:inf-problem}.

Finally, as episodes progress, the posterior distributions concentrate around the true parameters. The dynamic episode structure ensures that early episodes are short (allowing rapid initial learning) while later episodes become longer (enabling exploitation of learned models). This adaptive behavior is particularly important in the risk-aware setting, where accurate estimation of the reward distribution is crucial for making appropriate risk-sensitive decisions.

\NimaEdits{While we cannot provide regret guarantees for the infinite horizon discounted setting, the dynamic episodes offer two key advantages: (1) they allow the agent to commit to a policy for multiple time steps, reducing the variance in policy updates, and (2) they naturally balance exploration and exploitation by ensuring sufficient data collection before policy changes. Additionally, the algorithm} provides a practical approach for learning risk-aware policies in restless bandit problems. The algorithm naturally extends the finite-horizon approach by maintaining the exploration-exploitation balance through Thompson sampling while adapting to the infinite-horizon nature through dynamic episodes.

\section{Numerical Analysis} \label{sec:numerical}
We evaluate the risk-aware Whittle index policy through numerical experiments to validate the robustness and efficacy of our models in both planning and learning contexts for both finite and infinite horizons. Code is available at \cite{Anonymous}.

\begin{figure*}[h!]
\centering
\begin{subfigure}{0.32\textwidth}
\centering
\includegraphics[width=\linewidth]{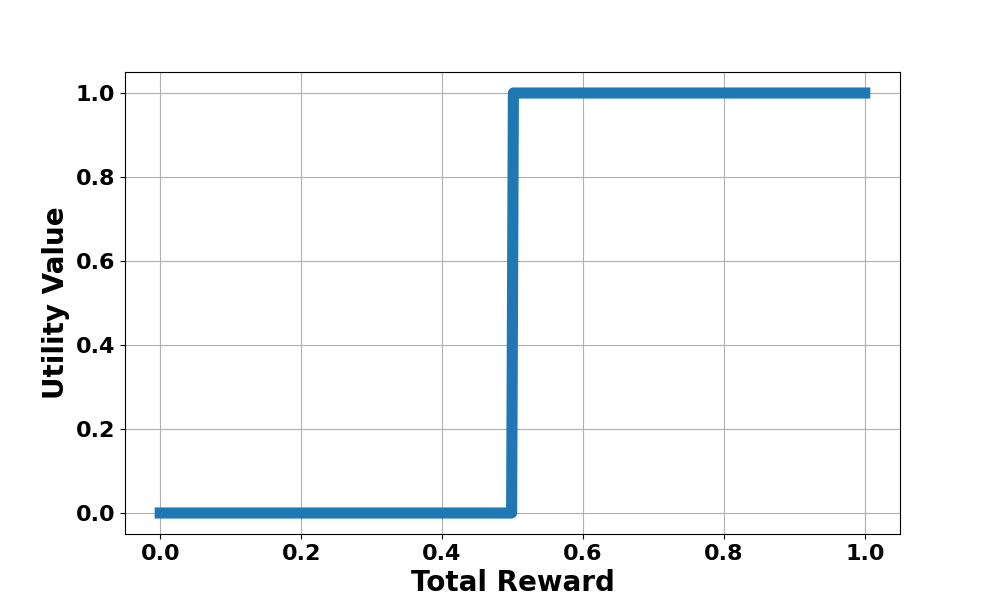}
\caption{\tiny{$(\alpha, \tau) = (1, 0.5)$}}
\end{subfigure}
\hfill
\begin{subfigure}{0.32\textwidth}
\centering
\includegraphics[width=\linewidth]{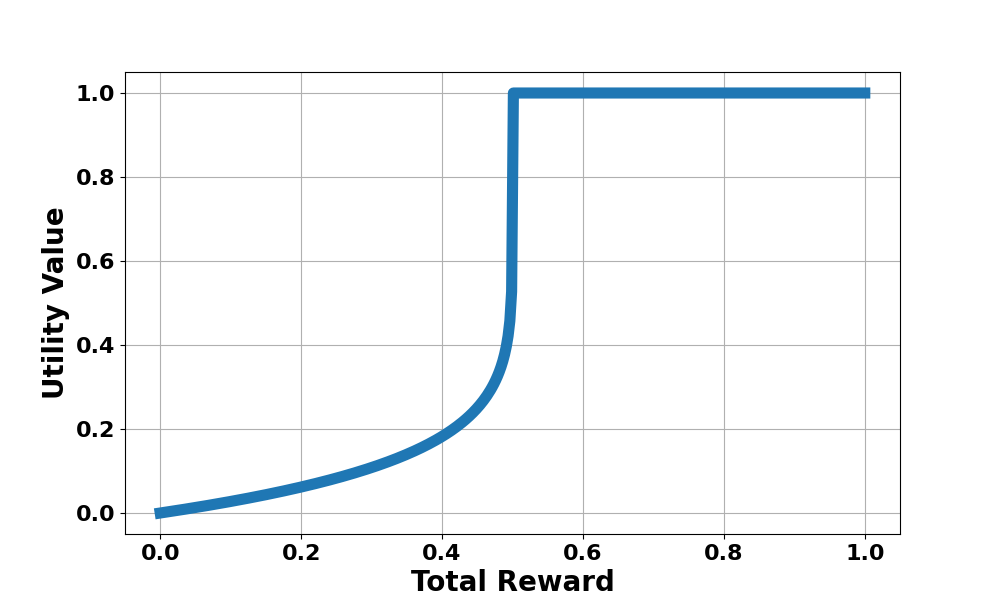}
\caption{\tiny{$(\alpha, \tau, o) = (2, 0.5, 8)$}}
\end{subfigure}
\hfill
\begin{subfigure}{0.32\textwidth}
\centering
\includegraphics[width=\linewidth]{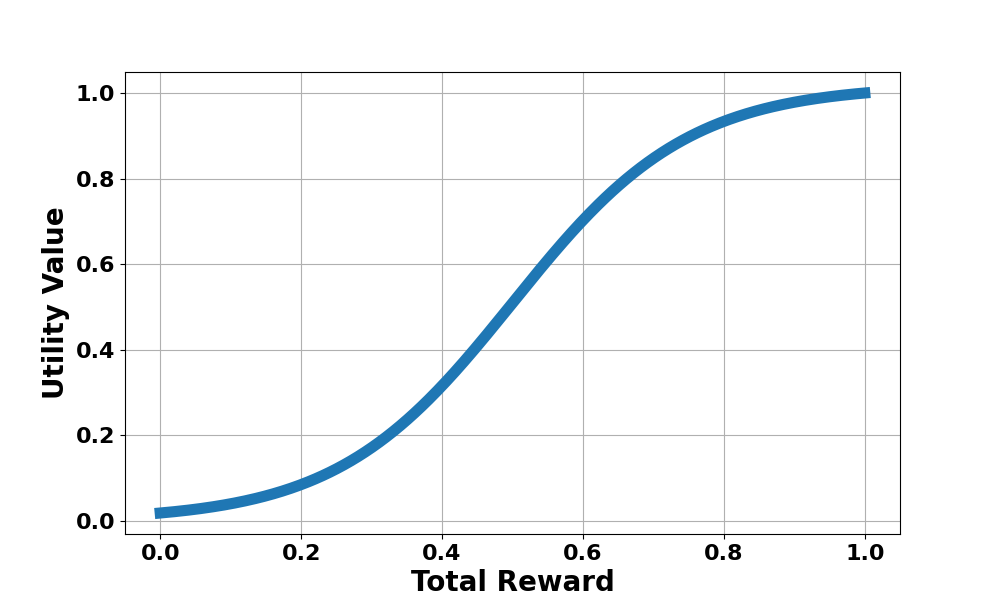}
\caption{\tiny{$(\alpha, \tau, o) = (3, 0.5, 8)$}}
\end{subfigure}
\caption{Sample plots of the three utility functions.}
\label{fig:utility}
\end{figure*}

\subsection{Planning} \label{sec:numexp:planning}

\NimaEdits{In this section, we consider a machine maintenance planning problem where each arm represents a machine that can be in one of $|\mathcal{X}|$ states, ordered from the worst 
to the best. 
This problem naturally fits the restless bandit framework: machines deteriorate over time regardless of whether they receive maintenance, and the decision maker faces a budget constraint that limits the number of machines that can be actively serviced at each time step \citep{glazebrook2005index}. At each time step, the decision maker must select which machines to intervene on (e.g., through preventive maintenance or repair) versus which one to passively monitor, balancing the immediate costs of intervention against the long-term benefits of maintaining machine reliability. The resource constraint reflects real-world limitations such as maintenance crew capacity, budget restrictions, or service time constraints. Risk-awareness is particularly critical in this setting because machine failures can lead to severe consequences. A risk-neutral approach that maximizes expected reward might recommend deferring maintenance on multiple machines simultaneously, potentially exposing the system to severe scenarios where many machines fail concurrently. In contrast, a risk-aware objective penalizes such downside risks, favoring policies that maintain a more robust operational state \citep{calabro2024emerging}.

The per-step reward for each arm is a function of both the current state and time, designed to reflect the discounted value of operating a machine in a particular condition. Specifically, rewards are structured as $r_t(x) = (1-\beta)\beta^t \rho(x) / (1 - \beta^T)$ for finite-horizon and as $r(x) = (1-\beta) \rho(x)$ for infinite-horizon, where $\rho(x)$ is linearly increasing from 0 to 1 across states. This normalization ensures that rewards remain comparable across different horizon lengths and properly account for the time-value of future returns. The transition dynamics for each arm are governed by structured probability matrices $\{P(a;p^i)\}_{a \in \{0, 1\}}$, parameterized by a scalar $p^i \in [0, 1/|\mathcal{X}^i|]$, capturing the persistence of machine $i$ conditions under different actions. The active action tends to maintain or improve machine condition, while the passive action allows for degradation toward the worst state. The detailed structure of these transition matrices is provided in Section ~\ref{app:models} in Electronic Companion and can be confirmed to  adhere to the assumptions in Theorem~ \ref{thm:ns-indexable}. Across the $N$ arms, we vary the parameter $p^i$ by linearly spacing values within $[0.1/|\mathcal{X}|, 1/|\mathcal{X}|]$, creating heterogeneity in transition dynamics while maintaining structural similarity—reflecting realistic scenarios where multiple machines exhibit similar degradation patterns but differ in their specific transition rates.}

Inspired by the work of \cite{Tversky:1979:Prospect}, we consider three S-shaped utility functions for the arms, also shown in Fig.~\ref{fig:utility}. For arm~$i$, they are
\begin{equation*}
U^i_{\alpha, \tau, o}(J^i) = 
\begin{cases} 
\mathbb{I}(J^i-\tau, 0) & \text{if } \alpha = 1 \\
1-\tau^{-1/o} \max(0, \tau-J^i)^{1/o} & \text{if } \alpha = 2 \\
(1 + e^{-o(1 - \tau)}) / (1 + e^{-o(J^i - \tau)}) & \text{if } \alpha = 3
\end{cases}
\end{equation*}
where the risk attitude is the same across machines and parametrized with model type $\alpha$, target value~$\tau$, and the order of non-linearity~$o$. All three functions capture a certain urge to reach a targeted productivity level $\tau$ (convexity below $\tau$ and to secure it (concavity above $\tau$). In particular, the first function reduces to maximizing the average probability,  across machines, that a machine reaches $\tau$. 
While $U^i_{\alpha, \tau, o}(J^i)$ violates Assumption \ref{ass:mdp}, we observed empirically that all the instances of Problem \ref{prob:ns-risk-aware} produced in our experiments were confirmed to be indexible. This provides strong support to our conjecture that $P(a)$ plays a stronger role than $U^i$ in giving rise to this important property.

We explored $1134$ setups to analyze the behavior of our risk-aware policy in context of finite-horizon. The instances have been created out of all combinations of the following parameters:
time horizon $T = 5$, discount factor $\beta \in \{0.8, 0.9, 0.99\}$,
state space size $|\mathcal{X}| \in \{3, 4, 5\}$,
number of arms $N \in \{3|\mathcal{X}|, 4|\mathcal{X}|, 5|\mathcal{X}|\}$,
utility functions in $\{(\alpha=1), (\alpha=2, o=4), (\alpha=2, o=8), (\alpha=2, o=16), (\alpha=3, o=4), (\alpha=3, o=8), (\alpha=3, o=16)\}$,
threshold $\tau \in \{0.5, 0.6, 0.7\}$,
number of arms to be activated in $\{\lfloor.1 N\rfloor , \lfloor.3 N\rfloor\}$, and the size of the augmented state space is set to $50$.

\NimaEdits{For infinite-horizon case, we analyzed $1134$ setups with the same sets of parameters as in finite-horizon case except for the threshold set which is set to $\tau \in \{0.3, 0.4, 0.5\}$. In addition, due to computational reasons, in evaluation simulations \EDcomment{Nima, please confirm} \NimaResponse{Confirmed!} we limit the time horizon to $100$ and also set the size of the augmented state space for the cumulated discount to $100$ as well.}

\begin{figure}[b!]
\centering
\begin{subfigure}[b]{0.45\textwidth}
\centering
\includegraphics[width=\textwidth]{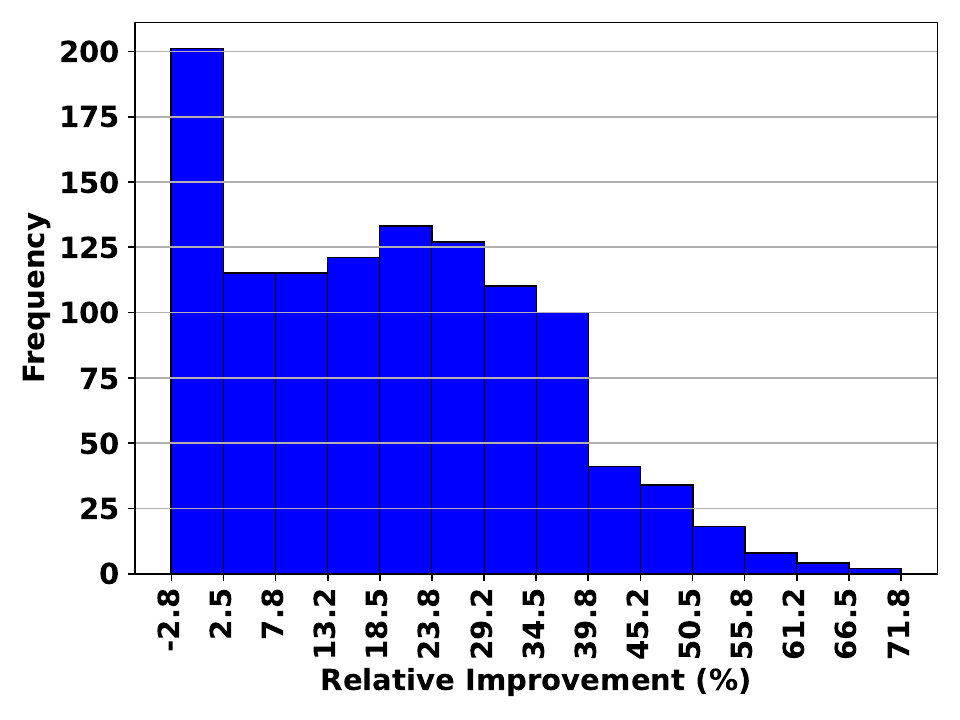}
\caption{Finite-horizon}
\end{subfigure}
\hfill
\begin{subfigure}[b]{0.45\textwidth}
\centering
\includegraphics[width=\textwidth]{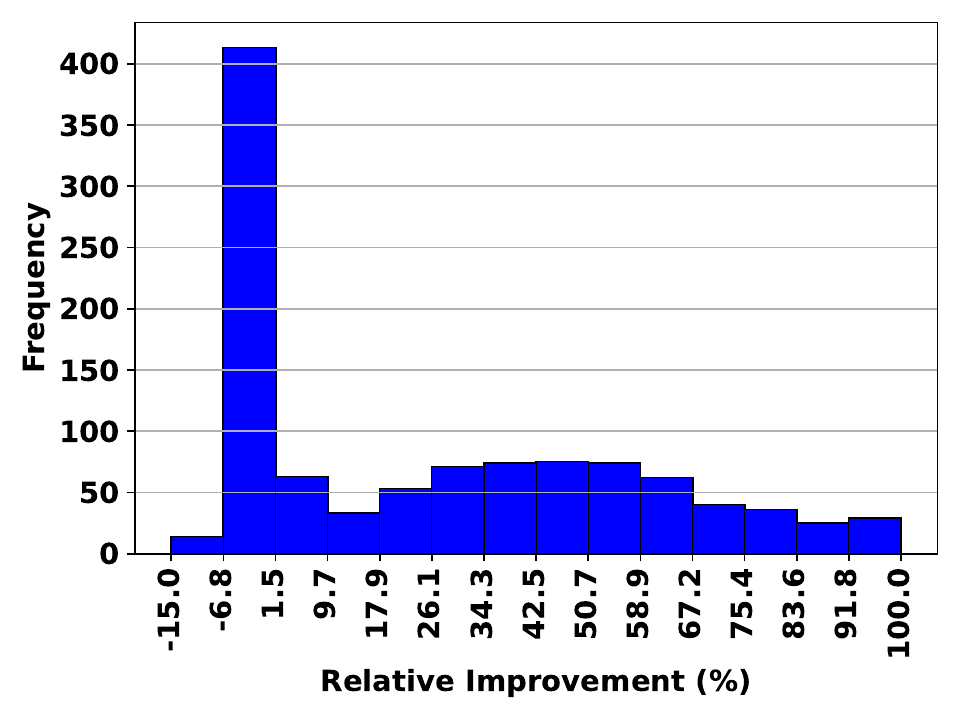}
\caption{Infinite-horizon}
\end{subfigure}
\caption{Distribution of relative improvements in the objective function achieved by our proposed policy compared to the risk-neutral one in $1134$ different setups for (a) finite-horizon and (b) infinite-horizon settings. For presentation clarity, the distributions are truncated on the x-axis up to $75$ and $100$ for finite-horizon and infinite-horizon, respectively.}
\label{fig:results}
\end{figure}

In all experiments, each policy's performance was evaluated using Monte Carlo simulations, averaging over 200 sample paths. We measure the performance of our proposed risk-aware Whittle index policy relative to the risk-neutral version.

Fig.~\ref{fig:results} summarizes the relative improvement in the objective function achieved by our policy in histograms for both finite-horizon and infinite-horizon problems. Some statistics are reported in Tables~\ref{tab:comparison} and \ref{tab:comparison-inf}.

\cite{mate2021risk} addressed a binary state partially-observable restless bandit problem, while our work considers a fully-observable setup. Although the two setups differ fundamentally, we include a numerical baseline inspired by \NimaEdits{\cite{mate2021risk}} who maximize the expected sum of stage-wise utilities instead of the expected utility of the sum of stage-wise performances. This baseline (named \textit{Sum of Stage-wise Utilities Policy} (SSUP)) employs objective 
\(\sum_{t=0}^{T-1} \frac{1}{T} U^i_{\alpha, \tau, o}(T r_t)\)
for finite-horizon setting and is modified to: 
\[\sum_{t=0}^{\infty} \beta^t (1 - \beta) U^i_{\alpha, \tau, o}(r_t / (1-\beta))\]
for infinite-horizon setting.

Table \ref{tab:comparison} indicates that our risk-aware Whittle index policy outperforms the baseline in all the provided statistics, highlighting its superior performance in finite-horizon setups. Similar results are shown in Table \ref{tab:comparison-inf} for infinite-horizon setting.

\begin{table*}[t!]
\centering
\small 
\begin{tabular}{@{}lcccc@{}}
\toprule
Policy & Min (\%) & Max (\%) & Avg (\%) & \% Above 0 \\ \midrule
RAWIP & -2.82 & 182.6 & 20.3 & 95.3 \\ 
SSUP & -28.7& 132.2 & 0.48 & 49.1 \\
\bottomrule
\end{tabular}
\caption{Relative improvement of \NimaEdits{RAWIP and SSUP} compared to the risk-neutral Whittle index policy in finite-horizon.}
\label{tab:comparison}
\end{table*}

\begin{table*}[t!]
\centering
\small 
\begin{tabular}{@{}lcccc@{}}
\toprule
Policy & Min (\%) & Max (\%) & Avg (\%) & \% Above 0 \\ \midrule
I-RAWIP & -14.9 & 202.6 & 32.87 & 64.5 \\ 
SSUP & -10.4 & 9.62 & 0.01 & 55.4 \\
\bottomrule
\end{tabular}
\caption{Relative improvement of \NimaEdits{I-RAWIP and SSUP} compared to the risk-neutral Whittle index policy in infinite-horizon.}
\label{tab:comparison-inf}
\end{table*}

\begin{table}[h!]
\centering
\small 
\begin{tabular}{@{}lcccc@{}}
\hline
\textbf{Parameter Value} & \textbf{Relative Improvement (finite)} & \textbf{Relative Improvement (infinite)} \\
\hline
\(\alpha=1\) & 40.16\% & 61.68\% \\
\hline
\((\alpha, o) = (2, 4)\) & 19.58\% & 49.76\% \\
\((\alpha, o) = (2, 8)\) & 25.85\% & 54.05\% \\
\((\alpha, o) = (2, 16)\) & 30.48\% & 60.42\% \\
\hline
\((\alpha, o) = (3, 4)\) & 0.82\% & -1.37\% \\
\((\alpha, o) = (3, 8)\) & 5.78\% & -2.49\% \\
\((\alpha, o) = (3, 16)\) & 19.51\% & 8.02\% \\
\hline
\end{tabular}
\caption{The effect of the utility function on the relative improvement of RAWIP with respect to the risk-neutral Whittle index policy}
\label{tab:utility-performance}
\end{table}

\NimaEdits{Table \ref{tab:utility-performance} illustrates the performance of our proposed methods in finite and infinite-horizon settings for each choice of utility function ($\alpha$), and their order of non-linearity ($o$). The table presents the average performance of our algorithm (computed across multiple problem setups and iterations) as parameters change. For utility functions 2 and 3, the results show that increasing the order—moving further from the linear case—highlights the superior performance of the risk-aware policy. This is expected, as risk-aware and risk-neutral policies perform identically under a linear utility function.}

\subsection{Learning}

\NimaEdits{In this section, we evaluate the performance of our learning algorithms when the true transition probabilities are unknown. We consider two problem domains: the machine maintenance problem from Section \ref{sec:numexp:planning}, and a healthcare application involving patient treatment scheduling for advanced breast cancer. For the finite-horizon setup, we employ the model-based risk-aware Whittle index policy (Section~\ref{subsec:compute}) as reference policy to measure regret. Our learning algorithm, which combines Risk-Aware Whittle Index Policy with Thompson Sampling and Dynamic Episodes, is denoted (\textsc{RAWIP-TSDE}). For the infinite-horizon setup, given the lack of guarantees on regret, we rather directly compare the performance of our learning algorithm (I-RAWIP-TSDE, Section \ref{sec:inf-learning}) to the model-based risk-aware Whittle index policy (I-RAWIP, Section~\ref{subsec:inf-problem}) and our learning algorithm where the risk-aware Whittle index policy is replaced with the risk-neutral one (I-WIP-TSDE). 

\textbf{Machine Maintenance.} The problem setup is identical to the previous section. 
The key difference is that the true transition probabilities are now unknown to the learner, who must simultaneously learn the system dynamics while making resource allocation decisions.

\textbf{Patient Treatment Scheduling.} We also evaluate our algorithms on a healthcare application involving treatment scheduling for patients with advanced breast cancer, using dynamic progression models derived from real-world clinical data in \cite{le2016structural}. Each arm represents an individual patient whose disease state evolves over time according to a Markov chain. This problem naturally fits the restless bandit framework: patients' conditions progress regardless of treatment decisions, and healthcare providers face strict resource constraints—such as limited availability of expensive therapies, clinical staff capacity, or hospital bed constraints—that restrict the number of patients who can receive intensive treatment at each time step. Risk-awareness is particularly crucial in healthcare settings where the stakes involve patient survival and quality of life. A risk-neutral policy that maximizes expected outcomes might allocate treatments in ways that expose some patients to high probabilities of severe disease progression or mortality. In contrast, a risk-aware objective explicitly penalizes such downside risks, favoring treatment allocations that provide more equitable protection against worst-case outcomes across the patient population.

In the models from \cite{le2016structural}, patient states represent disease progression: DECEASED, PROGRESSING, RESPONDING, and STABLE, though some models include only three states (excluding RESPONDING). Patients can be treated with either Capecitabine (standard therapy) or Lapatinib+Capecitabine (a more expensive combination therapy), with the choice of treatment affecting transition probabilities between disease states. Rewards reflect the clinical value of different health states: $r(\text{DECEASED}) = 0$, $r(\text{PROGRESSING}) = 1$, $r(\text{STABLE}) = 2$ for three-state models, and $r(\text{DECEASED}) = 0$, $r(\text{PROGRESSING}) = 1$, $r(\text{RESPONDING}) = 2$, $r(\text{STABLE}) = 3$ for four-state models. These rewards are then normalized so that the accumulated reward within a horizon is bounded to $[0, 1]$ for each arm, ensuring comparability across different model structures.

The study by \cite{le2016structural} provides empirical ranges for each transition probability in the Markov chains based on clinical trial data. For our experiments, we generate problem instances by randomly sampling transition probabilities within these ranges for each patient (arm) and normalizing to ensure valid probability distributions. Importantly, for the learning problem, these true transition probabilities are unknown to the algorithm, which must learn them through interaction while making treatment allocation decisions under the budget constraint.}

\begin{figure}[h!]
\centering
\begin{subfigure}{0.325\textwidth}
\centering
\includegraphics[width=\linewidth]{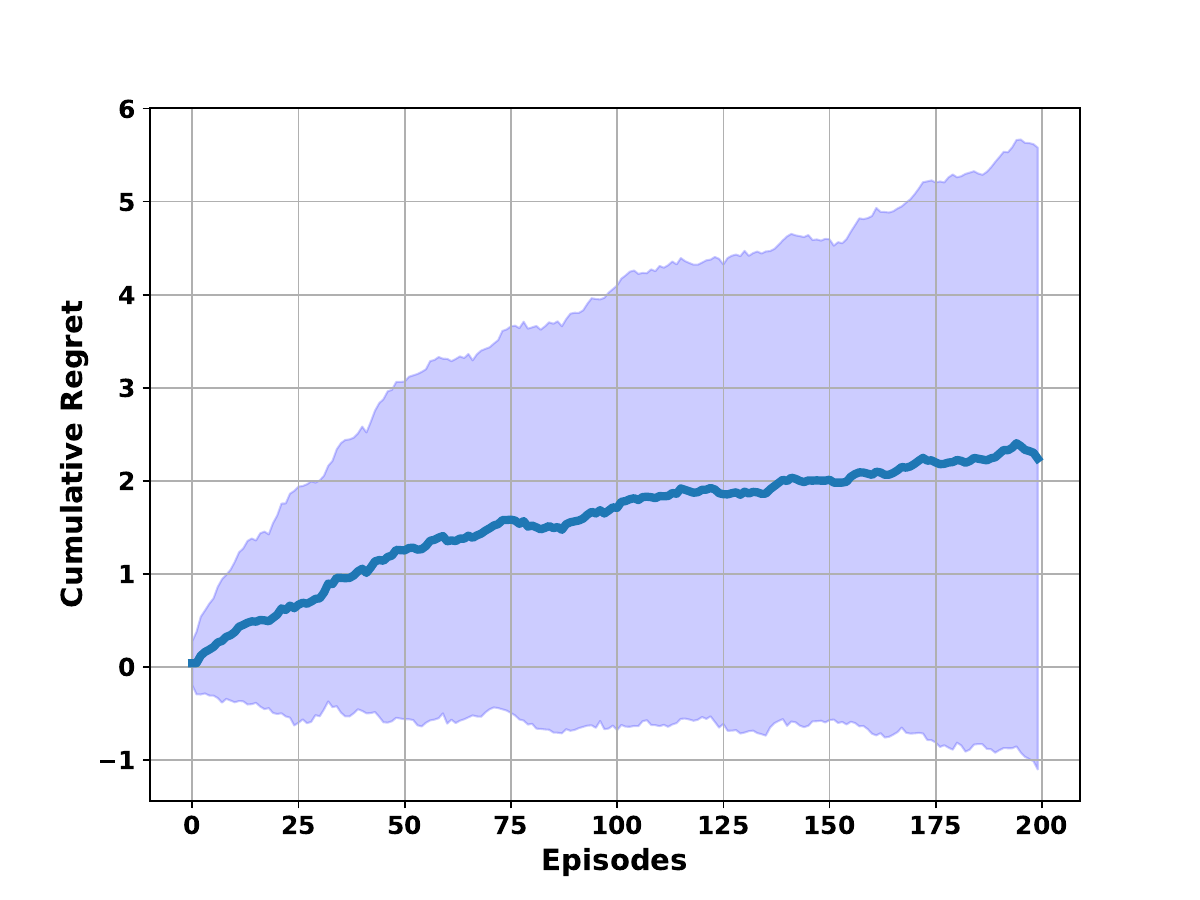}
\caption{\small $|{\cal X}| = 3$ (Model 1)}
\end{subfigure}
\centering
\begin{subfigure}{0.325\textwidth}
\centering
\includegraphics[width=\linewidth]{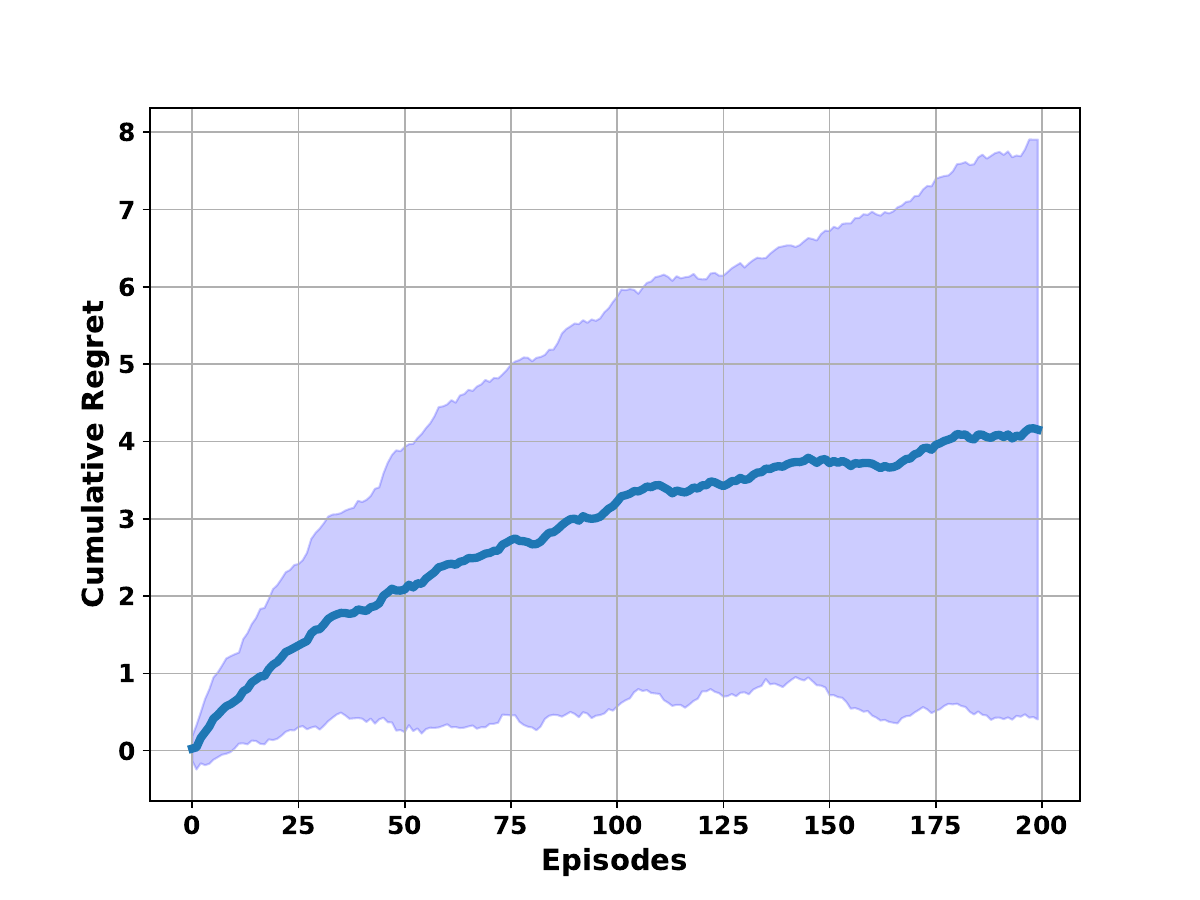}
\caption{\small $|{\cal X}| = 4$ (Model 2)}
\end{subfigure}
\centering
\begin{subfigure}{0.325\textwidth}
\centering
\includegraphics[width=\linewidth]{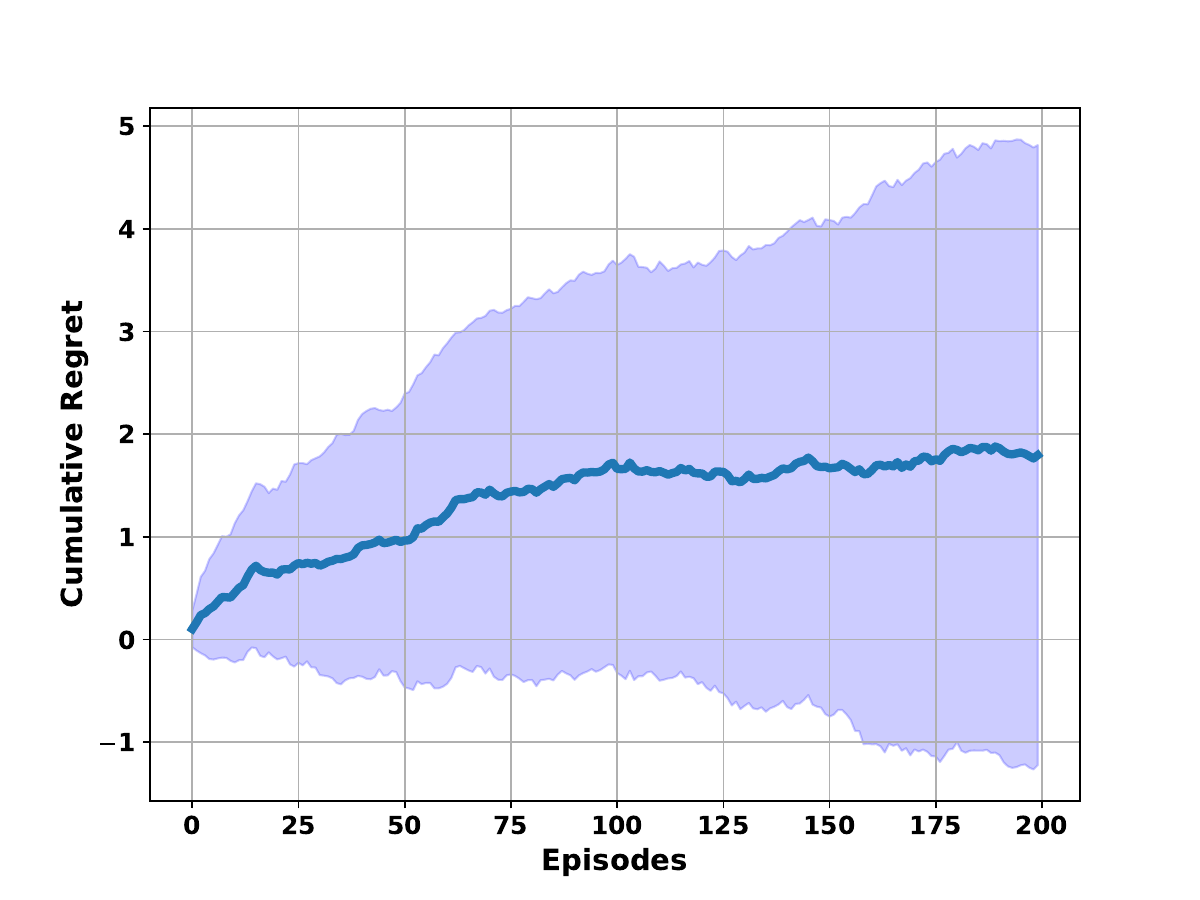}
\caption{\small $|{\cal X}| = 3$ (Model 3)}
\end{subfigure}
\caption{Cumulative regret in finite horizon for \textsc{RAWIP-TS} under \NimaEdits{three} different setups where the utility function is $(\alpha = 3, o=8, \tau=0.5)$. For all these experiments, $T=4$, $\beta=0.99$ $M = 1$, and $N = 5$, the augmented state's discretization is of size $10$, and $100$ iterations are run. Solid curve presents the expected cumulated regret.\EDcomment{I edited the caption,  please check that all is coherent. }\NimaResponse{Agreed.}}
\label{fig:regret}
\end{figure}

\begin{figure}[h!]
\centering

\begin{subfigure}{0.325\textwidth}
    \centering
    \includegraphics[width=\linewidth]{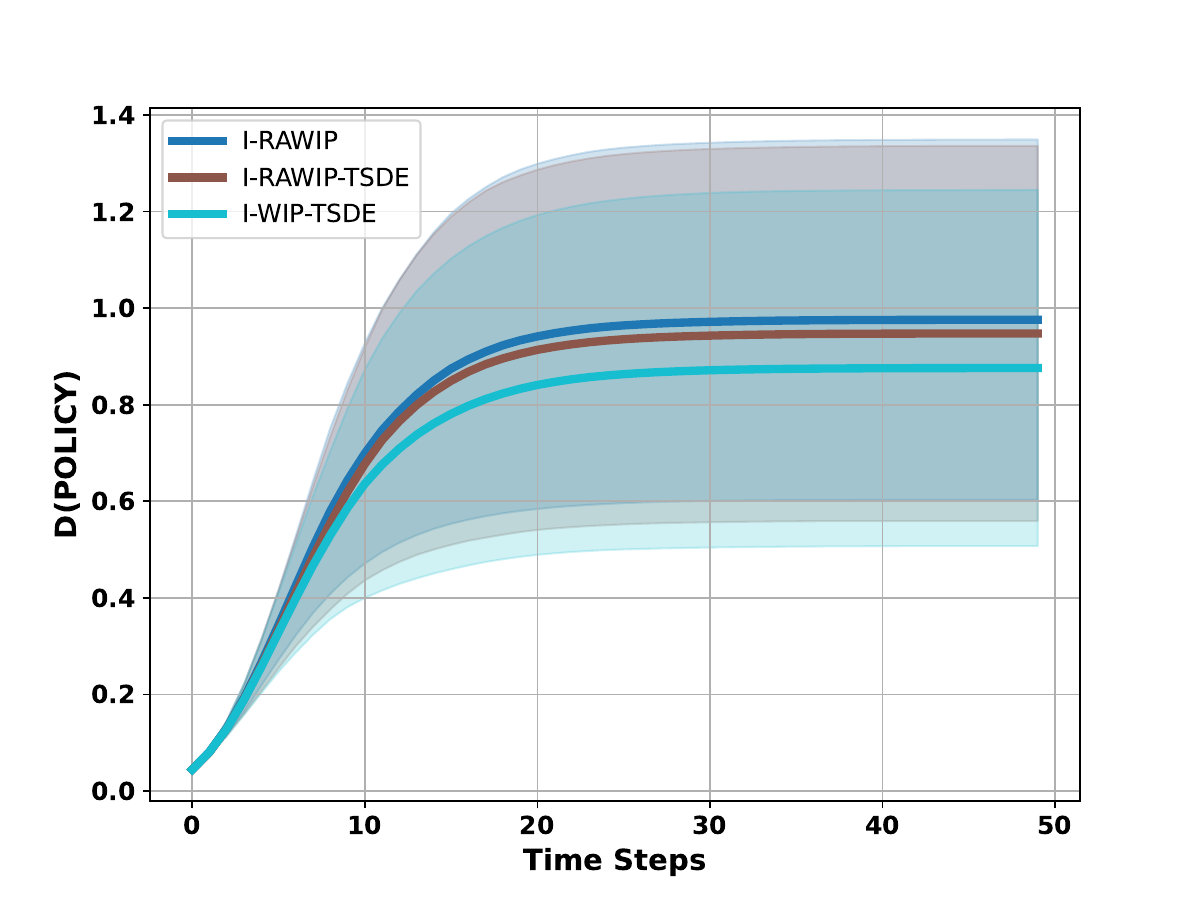}
    \caption{\small $|\mathcal{X}| = 4$ (Model 1)}
\end{subfigure}
\hfill 
\begin{subfigure}{0.325\textwidth}
    \centering
    \includegraphics[width=\linewidth]{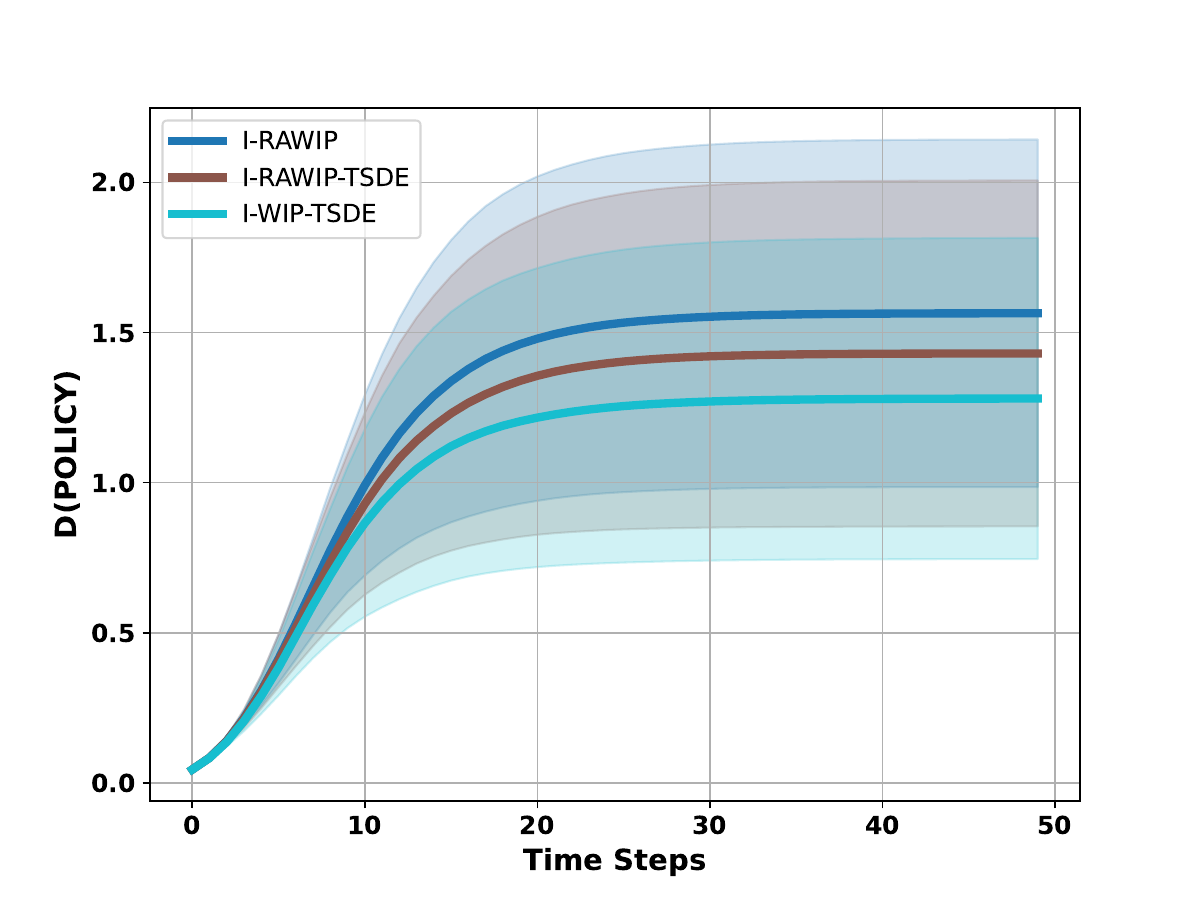}
    \caption{\small $|\mathcal{X}| = 3$ (Model 2)}
\end{subfigure}
\hfill
\begin{subfigure}{0.325\textwidth}
    \centering
    \includegraphics[width=\linewidth]{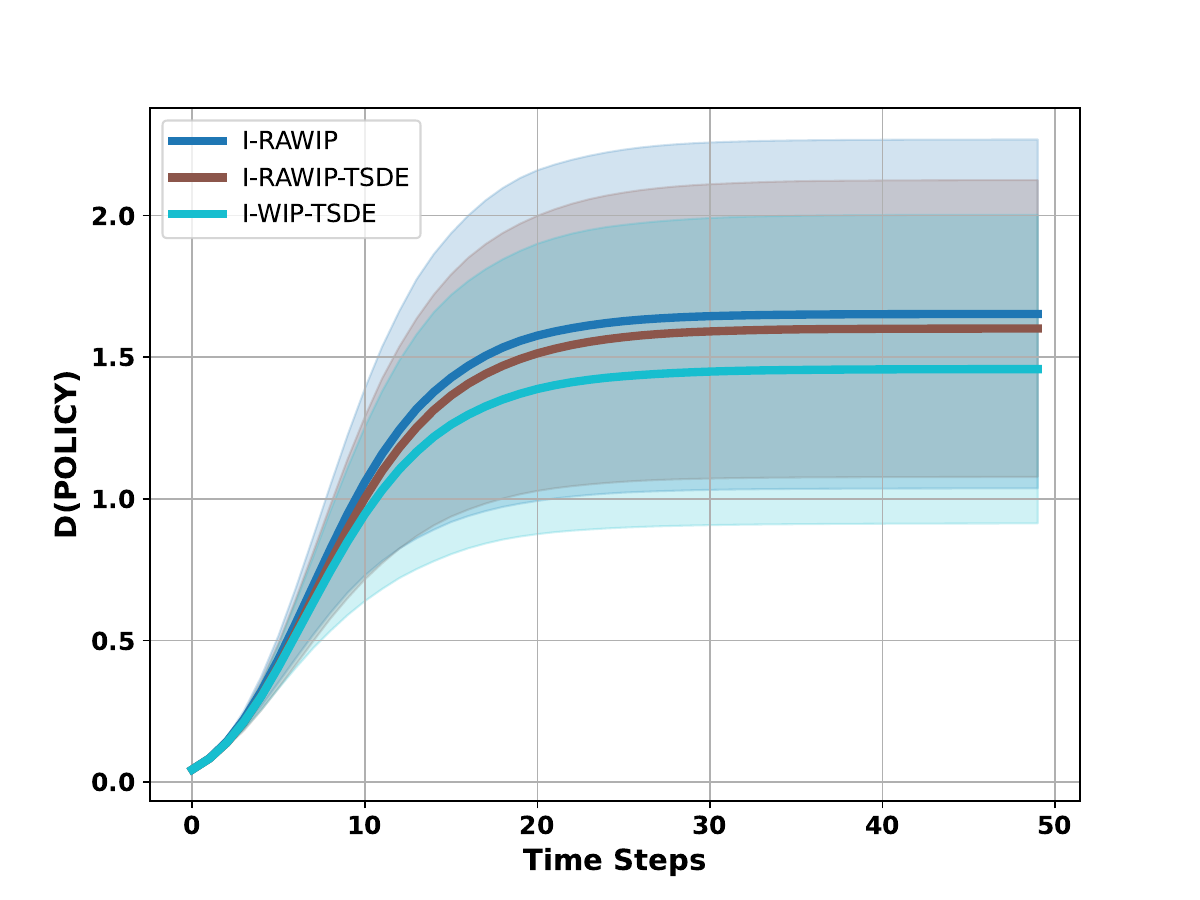}
    \caption{\small $|\mathcal{X}| = 3$ (Model 3)}
\end{subfigure} 

\caption{Cumulated performance of the infinite horizon approaches \textsc{I-RAWIP}, \textsc{I-RAWIP-TSDE}, and \textsc{I-WIP-TSDE} for \NimaEdits{three} different setups with the utility function  $(\alpha = 3, o=8, \tau=0.8)$. For all these experiments, $\beta=0.9$ $M = 3$, $N = 10$, the augmented states discretization are of size $10$ and $50$, and $200$ sample paths are run. 
\EDcomment{Why $T=50$? This is infinite horizon... Also $x$-axis says episode but the episode are dynamic so unclear how relevant the figure would be. }\NimaResponse{The x-axis naming should be updated to time-steps! Also, T=50 is the same reason as in planning...}
\EDcomment{I edited the caption,  please check that all is coherent. }\NimaResponse{Agreed.}}
\label{fig:regret-inf}
\end{figure}

At the start of each episode, the \textsc{RAWIP} is computed based on estimated parameters. During the episode, the learner observes a sample path of either finite-horizon or (a portion of) infinite-horizon rounds. State-action-state observations for each arm are counted, and Dirichlet distributions are updated over the unknown parameters. 

In Figure~\ref{fig:regret}, we illustrate $\REGRET{(k)}$ for a learner implementing \textsc{RAWIP-TS}, compared to an oracle with full knowledge of the model (i.e. transition probabilityies) who implements the finite-horizon \textsc{RAWIP}. The results suggest the learning mechanism is effective. Note as the \textsc{RAWIP} is generally suboptimal, the \textsc{RAWIP-TS} may outperform it and hence, the slope of cumulative regret can become negative. However, the learner's policy eventually converges to the \textsc{RAWIP}.

In Figure~\ref{fig:regret-inf}, we illustrate the risk-aware objective $\bs{D}(\bs{\pi}):= \mathbb{E}[\sum_{i\in\mathcal{N}} U^i(\sum_{t=0}^{\infty} \beta^t r^i(X^i_t,A^i_t)) | {\bs X}_0=\bs{x}_0]$ 
for a learner implementing \textsc{I-RAWIP-TSDE}, compared to an oracle with full knowledge of the true parameters who implements \textsc{I-RAWIP} and a baseline learner who implements \textsc{I-WIP-TSDE}, i.e., the risk-neutral learning policy from \cite{akbarzadeh2023learning}. The result indicates that \textsc{I-RAWIP-TSDE} closely follows \textsc{I-RAWIP} and outperforms \textsc{I-WIP-TSDE} in \NimaEdits{all} scenarios.

\section{Conclusion} \label{sec:conclusion}
Our study extends the traditional RB by incorporating risk-awareness, providing a robust framework for risk-aware decision-making. We establish indexability conditions for risk-aware objectives and propose a Thompson sampling approach that achieves bounded regret, scaling sublinearly with episodes and quadratically with arms. Rigorous experiments on numerous setups \ confirm the potential of our methodology in practical applications to effectively control risk exposure.
Future work could explore alternative episode structures in \NimaEdits{the learning problem of infinite-horizon stationary restless bandits} specifically designed for discounted objectives or investigate whether modified regret bounds can be established for this setting. Additionally, incorporating function approximation could enable the algorithm to handle larger state spaces, making it applicable to more complex resource allocation problems.

\section*{Acknowledgements}
Nima Akbarzadeh was partially funded by GERAD and FRQNT [https://doi.org/10.69777/352729].
Erick Delage was partially supported by the Canadian Natural Sciences and Engineering Research Council [Grant RGPIN-2022-05261] and by the Canada Research Chair program [950-230057].

\bibliographystyle{apalike}
\bibliography{mybibfile}

\removed{\section{Proof of Lemma \ref{lem:ns-conditions}} \label{app:ns-thm-0}
We start by characterizing the sufficient conditions under which the optimal
policy is increasing in the state space of the non-stationary finite horizon MDP~$(\mathcal{T}, {\cal X}, {\cal A}, P, r, \mu)$. 
\begin{proposition} \label{prop:puterman-ns}
Suppose for $t \in \mathcal{T}$ that
\begin{enumerate}
\item $r_t(x, a)$ is non-decreasing in $x, \forall a \in \mathcal{A}$, 
\item $q_t(x'|x, a)$ is non-decreasing in $x, \forall x' \in \mathcal{X}$ and $\forall a \in \mathcal{A}$, 
\item $r_t(x, a)$ is superadditive on $\mathcal{X} \times \mathcal{A}$, 
\item $q_t(x'|x, a)$ is superadditive on $\mathcal{X}^i \times \mathcal{A}$ for every $x' \in \mathcal{X}^i$,
\item $r_T(x)$ is non-decreasing in $x$,
\end{enumerate}
where $q_t(x'|x, a):= \sum_{z \geq x'} P_t^i(z|x, a)$.
Then there exist an optimal policy which is non-decreasing in $x \in \mathcal{X}$, $\forall t \in \mathcal{T}$.
\end{proposition}
See \citet[Theorem 4.7.4]{puterman2014markov}.}

\section{Proof of Lemma \ref{lem:monotone-policy}} \label{app:thm-ns-indexable}

Given any restless bandit arm, one can write the value function associated to the augmented arm risk neutral MDP as the solution to the following Bellman equation:
\[
V_{\lambda,t}(x, s) = \max_{a \in \{0,1\}} \left\{ -\frac{\lambda}{T} a + \sum_{x'} P_t(x'|x, a) V_{\lambda,t+1}(x', s + r_t(x)) \right\} \forall (x,s)\in \mathcal{X}\times \mathcal{S}_t, t\in\mathcal{T}/\{T-1\}, \lambda\in\mathbb{R}_+,
\]
with $V_{\lambda,T-1}(x,s) = \max_{a\in\mathcal{A}}U(s+r(x))$ for all $(x,s)\in \mathcal{X}\times \mathcal{S}_{T-1}$ and $\lambda\in\mathbb{R}_+$, where we dropped indexing $i$ to simplify presentation and replaced $r_t(x,a)$ with $r_t(x)$ due to Assumption \ref{ass:mdp}. To help with the presentation of our analysis and make our findings applicable to the infinite horizon setting, we consider a sequence of W functions defined recursively. Starting with $W_{T}^\beta(x,s,\phi):=U(s)$, and for $t\leq T-1$:
\[
W_t^\beta(x, s, \phi) := \max_{a \in \{0,1\}} \left\{ \frac{\phi\beta^t}{T} a + \sum_{x'} P_t(x'|x, a) W_{t+1}^\beta(x', s + \beta^t r_t(x), \phi) \right\}  \forall (x,s)\in \mathcal{X}\times \mathcal{S}_t, t\in\mathcal{T}, \phi\in\mathbb{R}_-,
\] 
Clearly, $V_{\lambda,t}(x,s)=W_t^1(x, s, -\lambda)$ for all $t\in\mathcal{T}$, 
with the set of maximizers in the definition for $W_t^1(x, s, -\lambda)$ being the same as for the set in the Bellman equation of $V_t(x,s,\lambda)$.

In proving Lemma \ref{lem:monotone-policy}, we will make use of 6 useful properties that  can be satisfied by $W_t^\beta(x,s,\phi)$. 

\begin{definition}[Value Function Conditions]\label{ass:value-properties}
We define the following conditions for a value function $W_t^\beta(x,s,\phi)$ with  $t\in\mathcal{T}$:
\begin{enumerate}[label=\ref*{ass:value-properties}.\alph*]
    \item \label{prop:w1} Non-decreasing with respect to $x$.
    \item \label{prop:w2} Convex and non-decreasing with respect to $s$.
    \item \label{prop:w3} Non-decreasing with respect to $\phi$.
    \item \label{prop:w4} Super-additive with respect to $(x,s)$.
    \item \label{prop:w5} Super-additive with respect to $(x,\phi)$.
    \item \label{prop:w6} Super-additive with respect to $(s,\phi)$.
\end{enumerate}
\end{definition}

In particular, a subset of these conditions will directly lead to the existence of an optimal policy for the augmented restless bandit arm that is monotone at time $t-1$.



\begin{lemma}\label{thm:monotonePolicy}
If the MDP satisfies Assumption \ref{ass:mdp} and $W_{t+1}^\beta(x,s,\phi)$ satisfies conditions \ref{prop:w1}, \ref{prop:w4}, \ref{prop:w5}, then there is an optimal policy for the augmented restless bandit arm that is non-decreasing with respect to $(x,s,\phi)$, with $\phi=-\lambda$ at time $t$.
\end{lemma}

Informed with Lemma \ref{thm:monotonePolicy}, the claim made in Lemma \ref{lem:monotone-policy} therefore can be reduced to establishing that Assumption \ref{ass:mdp} is sufficient to ensure that the conditions \ref{prop:w1}, \ref{prop:w4}, \ref{prop:w5} are satisfied at all $t\in\mathcal{T}$. Our work will actually demonstrate a stronger statement.

\begin{lemma}\label{thm:WtProperties}
    Given that the MDP satisfies Assumption \ref{ass:mdp}, the value function $W_t^\beta(x,s,\phi)$ satisfies the conditions \ref{prop:w1}-\ref{prop:w6} for all  $t\in\mathcal{T}$.
\end{lemma}

In the remaining subsections, we first present useful technical lemmas and follow with the proof of lemmas \ref{thm:monotonePolicy} and \ref{thm:WtProperties}.

\subsection{Technical Lemmas}

The key technical results that enable the proof are the following lemmas about a property of convex function and about conditions under which different form of super-additivity applies.

\begin{lemma}\label{thm:increasingSecant}
    Given a convex function $f(x)$, for all $\Delta\geq 0$ and $x_1\geq x_2$, we have that:
    \[f(x_2+\Delta)-f(x_2) \leq f(x_1+\Delta)-f(x_1).\]
\end{lemma}
\textit{\textbf{Proof.}}

We start by showing that given $x_1\geq x_2\geq x_3$, we must have that:
\[s_3:=\frac{f(x_2)-f(x_3)}{x_2-x_3}\leq s_2:=\frac{f(x_1)-f(x_3)}{x_1-x_3}\leq s_1:=\frac{f(x_1)-f(x_2)}{x_1-x_2}.\]
One can first use the convexity of $f(x)$ to confirm:
\begin{align*}
f(x_2)-f(x_3) &= f(\frac{x_2-x_3}{x_1-x_3}x_1+\left(1-\frac{x_2-x_3}{x_1-x_3}\right)x_3)-f(x_3)\\
&\leq \frac{x_2-x_3}{x_1-x_3}f(x_1)+\left(1-\frac{x_2-x_3}{x_1-x_3}\right)f(x_3)-f(x_3)\\
&=     \frac{x_2-x_3}{x_1-x_3}(f(x_1)-f(x_3)).
\end{align*}
Hence, $s_3\leq s_2$. Now, given that $s_2(x_1-x_3)=s_3(x_2-x_3)+s_1(x_1-x_2)$, we must have:
\begin{align*}
    &s_2(x_1-x_3)= s_3(x_2-x_3)+s_1(x_1-x_2)\leq  s_2(x_2-x_3)+s_1(x_1-x_2)\\
    &\qquad\qquad\qquad\;\Rightarrow\;s_2(x_1-x_2)\leq s_1(x_1-x_2)\;\Rightarrow\;s_2\leq s_1.
\end{align*}
Similarly
\begin{align*}
    s_3(x_1-x_3) \leq s_2 (x_1-x_3) = s_3(x_2-x_3)+s_1(x_1-x_2) \;\Rightarrow\;s_3(x_1-x_2)\leq s_1(x_1-x_2) \;\Rightarrow\; s_3\leq s_1. 
\end{align*}

Now getting back at our claim, one can identify two cases. In the first case, we have that $x_3\leq x_3+\Delta \leq x_2 \leq x_2 + \Delta$. Applying the ordering of secants established above twice over this sequence, we get:
\[\frac{f(x_3+\Delta)-f(x_3)}{\Delta}\leq \frac{f(x_2)-f(x_3+\Delta)}{x_2 -(x_3+\Delta)}\leq  \frac{f(x_2+\Delta)-f(x_2)}{\Delta},\]
concluding that $f(x_3+\Delta)-f(x_3) \leq f(x_2+\Delta)-f(x_2)$ since $\Delta\geq 0$.

Alternatively, in the second case we have $x_3 \leq x_2 \leq x_3+\Delta \leq x_2 + \Delta$. A similar argument leads to:
\[\frac{f(x_3+\Delta)-f(x_3)}{\Delta}\leq \frac{f(x_3+\Delta)-f(x_2)}{x_3+\Delta - x_2}\leq  \frac{f(x_2+\Delta)-f(x_2)}{\Delta}.\]
Hence, we have again that $f(x_3+\Delta)-f(x_3) \leq f(x_2+\Delta)-f(x_2)$.
\Halmos

\begin{lemma}[Super-additivity Preservation I]\label{lem:sa1}
Given probability functions $p_1(x), p_2(x)$ satisfying $\sum_{x\geq k} p_1(x) \geq \sum_{x\geq k} p_2(x)$ for all $k$, if $f(x,y)$ is super-additive in $(x,y)$, then for all $y_1 \geq y_2$:
\[\mathbb{E}_{p_1}[f(X,y_1)]- \mathbb{E}_{p_1}[f(X,y_2)] \geq \mathbb{E}_{p_2}[f(X,y_1)]- \mathbb{E}_{p_2}[f(X,y_2)].\]
\end{lemma}

\textbf{\textit{Proof.}} Let $q_j(k) := \sum_{x \geq k} p_j(x)$ with $j\in\{1,2\}$. The proof is as follows:
\begin{align*}
    \mathbb{E}_{p_1}&[f(X,y_1)- f(X,y_2)] \\&=  (f(1,y_1)- f(1,y_2)) + \sum_{k=2}^{|\mathcal{X}|} q_1(k) (f(k,y_1)- f(k,y_2) - (f(k-1,y_1)- f(k-1,y_2))\\
    &\geq (f(1,y_1)- f(1,y_2)) + \sum_{k=2}^{|\mathcal{X}|} q_2(k) (f(k,y_1)- f(k,y_2) - (f(k-1,y_1)- f(k-1,y_2))\\        
    & = \mathbb{E}_{p_2}[f(X,y_1)- f(X,y_2)],    
\end{align*}
where we exploited the fact that $q_1(k)\geq q_2(k)$ and the super-additivity of $f(x,y)$, which implies that 
\(f(k,y_1)- f(k,y_2)\geq f(k-1,y_1)- f(k-1,y_2).\Halmos\)

\begin{lemma}[Super-additivity Preservation II]\label{lem:sa2}
Given probability functions $p_{1,1}, p_{1,2}, p_{2,1}, p_{2,2}$ satisfying $\sum_{x\geq k} (p_{1,1}(x) - p_{1,2}(x)) \geq \sum_{x\geq k} (p_{2,1}(x) - p_{2,2}(x))$ for all $k$, if $f(x)$ is non-decreasing, then:
\[\mathbb{E}_{p_{1,1}}[f(X)] - \mathbb{E}_{p_{1,2}}[f(X)] \geq \mathbb{E}_{p_{2,1}}[f(X)] - \mathbb{E}_{p_{2,2}}[f(X)].\]
\end{lemma}


\textbf{\textit{Proof.}}  Let $q_{i,j}(k) := \sum_{x \geq k} p_{i,j}(x)$ with $(i,j)\in\{1,2\}^2$. This result follows from:
\begin{align*}
    \mathbb{E}_{p_{1,1}}&[f(X)] - \mathbb{E}_{p_{1,2}}[f(X)] \\&=  f(1) + \sum_{k=2}^{|\mathcal{X}|} (q_{1,1}(k)(f(k)- f(k-1)) - \left(f(1) + \sum_{k=2}^{|\mathcal{X}|} (q_{1,2}(k)(f(k)- f(k-1))\right)\\
    &= \sum_{k=2}^{|\mathcal{X}|} (q_{1,1}(k) - q_{1,2}(k)) (f(k)- f(k-1))\\
    &\geq  \sum_{k=2}^{|\mathcal{X}|} (q_{2,1}(k) - q_{2,2}(k)) (f(k)- f(k-1))\\
    &=\mathbb{E}_{p_{2,1}}[f(X)] - \mathbb{E}_{p_{2,2}}[f(X)]    ,
\end{align*}
where we exploited the fact that $f(k)\geq f(k-1)$ and the super-additivity of $q_{i,j}(k)$ in $(i,j)$, which implies that
\(q_{1,1}(k) - q_{1,2}(k) \geq q_{2,1}(k) - q_{2,2}(k). \Halmos\)

\subsection{Proof of Lemma \ref{thm:monotonePolicy}}

Define the $Q_t^\beta$ function as:
\[Q_t^\beta(x,s,\phi,a):=\frac{\phi\beta^t}{T} a + \sum_{x'} P_t(x'|x, a) W_{t+1}^\beta(x', s + \beta^tr_t(x), \phi).\]
One can show that $Q_t^\beta$ is super-additive with respect to $(x,a)$, $(s,a)$, $(\phi,a)$.
Starting with $(x,a)$, we have for $x_1\geq x_2$:
\begin{align*}
    Q_t^\beta&(x_1,s,\phi,1)-Q_t^\beta(x_1,s,\phi,0) = \frac{\phi\beta^t}{T}+ \mathbb{E}_{p_{1,1}}[W_{t+1}^\beta(x', s + \beta^t r_t(x_1), \phi)]-\mathbb{E}_{p_{1,2}}[W_{t+1}^\beta(x', s + \beta^t r_t(x_1), \phi)]\\
    &\geq \frac{\phi\beta^t}{T}+ \mathbb{E}_{p_{2,1}}[W_{t+1}^\beta(x', s + \beta^t r_t(x_1), \phi)]-\mathbb{E}_{p_{2,2}}[W_{t+1}^\beta(x', s + \beta^t r_t(x_1), \phi)]\\
    &\geq \frac{\phi\beta^t}{T}+ \mathbb{E}_{p_{2,1}}[W_{t+1}^\beta(x', s + \beta^t r_t(x_2), \phi)]-\mathbb{E}_{p_{2,2}}[W_{t+1}^\beta(x', s + \beta^t r_t(x_2), \phi)]\\
    &= Q_t^\beta(x_2,s,\phi,1)-Q_t^\beta(x_2,s,\phi,0), 
\end{align*}    
where $p_{1,1}(x):= P_t(x|x_1, 1)$, $p_{1,2}(x):= P_t(x|x_1, 0)$, $p_{2,1}(x):= P_t(x|x_2, 1)$, and $p_{2,2}(x):= P_t(x|x_2, 0)$. The first inequality follows from Lemma \ref{lem:sa2} given property \ref{prop:w1} and super-additivity of $p_{ij}$ (i.e. Assumption \ref{ass:p3}). The second inequality follows from Lemma \ref{lem:sa1} when we exploit Assumption \ref{ass:r1} and \ref{ass:p1} and property \ref{prop:w4}.

Following with $(s,a)$, we have for $s_1\geq s_2$:
\begin{align*}
    Q_t^\beta(x,s_1,\phi,1)-Q_t^\beta(x,s_1,\phi,0) &= \frac{\phi\beta^t}{T}+ \mathbb{E}_{p_{1}}[W_{t+1}^\beta(x', s_1 + \beta^t r_t(x), \phi)]-\mathbb{E}_{p_{2}}[W_{t+1}^\beta(x', s_1 + \beta^t r_t(x), \phi)]\\
    &\geq \frac{\phi\beta^t}{T}+ \mathbb{E}_{p_{1}}[W_{t+1}^\beta(x', s_2 + \beta^t r_t(x), \phi)]-\mathbb{E}_{p_{2}}[W_{t+1}^\beta(x', s_2 + \beta^t r_t(x), \phi)]\\ 
    &= Q_t^\beta(x,s_2,\phi,1)-Q_t^\beta(x,s_2,\phi,0), 
\end{align*} 
where $p_1(x'):=P(x'|x,1)$ and $p_2(x'):=P(x'|x,0)$ and the inequality follows from Lemma \ref{lem:sa1} using Assumption \ref{ass:p1} and property \ref{prop:w4}.

Next, for $(\phi,a)$, we have for $\phi_1\geq \phi_2$:
\begin{align*}
    Q_t^\beta(x,s,\phi_1,1)&-Q_t^\beta(x,s,\phi_1,0) \\
    &= \frac{\phi_1\beta^t}{T}+ \mathbb{E}_{p_{1}}[W_{t+1}^\beta(x', s + \beta^t r_t(x), \phi_1)]-\mathbb{E}_{p_{2}}[W_{t+1}^\beta(x', s + \beta^t r_t(x), \phi_1)]\\
    &\geq \frac{\phi_2\beta^t}{T}+ \mathbb{E}_{p_{1}}[W_{t+1}^\beta(x', s + \beta^t r_t(x), \phi_2)]-\mathbb{E}_{p_{2}}[W_{t+1}^\beta(x', s + \beta^t r_t(x), \phi_2)]\\ 
    &= Q_t^\beta(x,s,\phi_2,1)-Q_t^\beta(x,s,\phi_2,0), 
\end{align*} 
again using  Lemma \ref{lem:sa1} with Assumption \ref{ass:p1} and property \ref{prop:w5}.

The three-way super-additivity of $Q$ ensures that the policy 
\[\pi(x,s,\phi):=\min\arg\max_a Q_t^\beta(x,s,\phi,a)\]
is non-decreasing with respect to $(x,s,\phi)$. Indeed, if $\pi(x_2,s_2,\phi_2)=1$, then $Q_t(x_2,s_2,\phi_2,1)> Q_t(x_2,s_2,\phi_2,0)$, based on the definition of $\pi(x,s,\phi)$. Moreover, if $x_1\geq x_2$, $s_1\geq s_2$, and $\phi_1\geq \phi_2$, then  
\[ 0 > Q_t(x_2,s_2,\phi_2,0)- Q_t(x_2,s_2,\phi_2,1)\geq Q_t(x_1,s_1,\phi_1,0)- Q_t(x_1,s_1,\phi_1,1).\]
This implies that $Q_t(x_1,s_1,\phi_1,1)>Q_t(x_1,s_1,\phi_1,0)$ thus $\pi(x_1,s_1,\phi_1)=1$. 
\Halmos

\subsection{Proof of Lemma \ref{thm:WtProperties}}

We now turn to  showing that Assumption~\ref{ass:mdp} is sufficient for the properties \ref{prop:w1}-\ref{prop:w6} to hold for all $t\in\mathcal{T}$. We do  starting from $t=T$ and then using an inductive argument from $T-1,\dots,0$.

\begin{lemma}\label{thm:W0-inf}
    If $U(y)$ is convex and non-decreasing, then $W_0^\beta(x,y,z,\phi) = U(y)$ satisfies conditions~\ref{prop:w1}-\ref{prop:w6}. 
\end{lemma}
\textbf{\textit{Proof.}} 
Indeed, it is constant with respect to $x$ and $\phi$ (properties \ref{prop:w1} and \ref{prop:w3}). It satisfies property \ref{prop:w2} by our assumption on $U(s)$. Finally, the follow confirm the three super-additivity properties:
\begin{align*}
(\ref{prop:w4})\quad& W_T^\beta(x_1,s_1,\phi)-W_T^\beta(x_1,s_2,\phi)=U(s_1)-U(s_2) = W_T^\beta(x_2,s_1,\phi)-W_T^\beta(x_2,s_2,\phi)\\
(\ref{prop:w5}) \quad& W_T^\beta(x_1,s,\phi_1)-W_T^\beta(x_1,s,\phi_2)=U(s)-U(s) = W_T^\beta(x_2,s,\phi_1)-W_T^\beta(x_2,s,\phi_2)\\
(\ref{prop:w6})\quad& W_T^\beta(x,s_1,\phi_1)-W_T^\beta(x,s_1,\phi_2)=0 = W_T^\beta(x,s_2,\phi_1)-W_T^\beta(x,s_2,\phi_2)    \Halmos
\end{align*}

The above Lemma validates that  the properties hold for time~$T$. Now, consider for $t < T$ next.

\begin{lemma}
    If the MDP satisfies Assumption \ref{ass:mdp} and $W_{t+1}^\beta$ satisfies conditions (\ref{prop:w1}-\ref{prop:w6}), $W_{t}(x,s,\phi)$ satisfies properties \ref{prop:w1}-\ref{prop:w6} for all $t\in\mathcal{T}$.
\end{lemma}

\textit{\textbf{Proof.}}
We prove this inductively. First, Lemma \ref{lem:monotone-policy} ensures that it is the case for $W_T^\beta$. Now, given that the properties are satisfied at $t+1$, we wish to show that they also apply at $t$. 

\textbf{$W_t^\beta$ is non-decreasing in $x$:} For $x_1\geq x_2$,
\begin{align*}
    W_t^\beta&(x_1,s,\phi) = \max_{a \in \{0,1\}} \left\{ \frac{\phi\beta^t}{T} a + \sum_{x'} P_t(x'|x_1, a) W_{t+1}^\beta(x', s + \beta^t r_t(x_1), \phi) \right\}\\
    &\geq \max_{a \in \{0,1\}} \left\{ \frac{\phi\beta^t}{T} a + \sum_{x'} P_t(x'|x_1, a) W_{t+1}^\beta(x', s + \beta^t r_t(x_2), \phi) \right\}\\
    &= \max_{a \in \{0,1\}} \Big\{ \frac{\phi\beta^t}{T} a + W_{t+1}^\beta(1, s + \beta^t r_t(x_2), \phi) \\
    & \qquad + \sum_{k \geq 2} \sum_{x'\geq k} P_t(x'|x_1, a) (W_{t+1}^\beta(k, s + \beta^t r_t(x_2), \phi) - W_{t+1}^\beta(k-1, s + r_t(x_2), \phi)) \Big\} \\
    &\geq \max_{a \in \{0,1\}} \Big\{ \frac{\phi\beta^t}{T} a + W_{t+1}^\beta(1, s + \beta^t r_t(x_2), \phi) \\
    & \qquad + \sum_{k \geq 2} \sum_{x'\geq k} P_t(x'|x_2, a) (W_{t+1}^\beta(k, s + \beta^t r_t(x_2), \phi) - W_{t+1}^\beta(k-1, s + r_t(x_2), \phi)) \Big\} \\
    &=W_t^\beta(x_2,s,\phi),  
\end{align*}

where we first exploited that $W_{t+1}^\beta$ is non-decreasing in $s$, then exploited the fact that $\mathcal{X}$ is totally ordered (see Assumption \ref{ass:ordered}). Finally, we made use of Assumption \ref{ass:p2} and the non-decreasingness of $W_{t+1}^\beta$ in $x$ (i.e. property \ref{prop:w1}). 

\textbf{$W_t^\beta$ is non-decreasing and convex in $s$:}  For $s_1\geq s_2$,
\begin{align*}
    W_t^\beta(x,s_1,\phi) &= \max_{a \in \{0,1\}} \left\{ \frac{\phi\beta^t}{T} a + \sum_{x'} P_t(x'|x, a) W_{t+1}^\beta(x', s_1 + \beta^t r_t(x), \phi) \right\}\\
&\geq \max_{a \in \{0,1\}} \left\{ \frac{\phi\beta^t}{T} a + \sum_{x'} P_t(x'|x, a) W_{t+1}^\beta(x', s_2 + \beta^t r_t(x), \phi) \right\}\\        
    &=W_t^\beta(x,s_2,\phi), 
\end{align*}    
given that $W_{t+1}^\beta$ is non-decreasing in $s$. Convexity follows from the convexity of $W_{t+1}^\beta$ and the fact that $W_t^\beta(x,s,\phi)$ is the maximum of the sum of convex functions in $s$.

\textbf{$W_t^\beta$ is non-decreasing with respect to $\phi$:} For $\phi_1\geq \phi_2$,
\begin{align*}
    W_t^\beta(x,s,\phi_1) &= \max_{a \in \{0,1\}} \left\{ \frac{\phi_1\beta^t}{T} a + \sum_{x'} P_t(x'|x, a) W_{t+1}^\beta(x', s + \beta^t r_t(x), \phi_1) \right\}\\
&\geq \max_{a \in \{0,1\}} \left\{ \frac{\phi_2\beta^t}{T} a + \sum_{x'} P_t(x'|x, a) W_{t+1}^\beta(x', s + \beta^t r_t(x), \phi_2) \right\}\\        
    &=W_t^\beta(x,s,\phi_2), 
\end{align*}    
given that $W_{t+1}^\beta$ is non-decreasing in $\phi$ and $a\geq 0$. 

\textbf{Super-additive with respect to $(x,s)$:} For $x_1\geq x_2$ and $s_1\geq s_2$, let $a_{1,1},a_{1,2},a_{2,1}$, and $a_{2,2}$ be an action that achieves optimality in $W_t^\beta(x_1,s_1,\phi)$, $W_t^\beta(x_1,s_2,\phi)$, $W_t^\beta(x_2,s_1,\phi)$, and $W_t^\beta(x_2,s_2,\phi)$ respectively. Based on Lemma \ref{lem:monotone-policy}, there exists a monotone policy at time $t$. This implies that a tuple of optimal actions is necessarily in the following list:
\[(a_{1,1},a_{1,2},a_{2,1},a_{2,2})\in\{(1,1,1,1), (1, 1, 1, 0), (1,1,0,0), (1,0,0,0), (0,0,0,0), (1,0,1,0)\}.\]
We will show that in each case, we have that $W_{t}$ is super-additive with respect to $(x,s)$.

For $(1,1,1,1)$, when denoting $p_{1,1}(x):= P_t(x|x_1, 1)$, $p_{1,2}(x):= P_t(x|x_1, 0)$, $p_{2,1}(x):= P_t(x|x_2, 1)$, and $p_{2,2}(x):= P_t(x|x_2, 0)$, we can see that:\EDcomment{Replace $r(x)$ with $r_t(x)$. Also check that $p(\cdot|x,a)$ is ed by $t$ in the right places.} \NimaResponse{Checked and applied.}
\begin{align*}
    W_t^\beta(x_1,s_1,\phi)-W_t^\beta(x_1,s_2,\phi) &= \mathbb{E}_{p_{1,1}}[W_{t+1}^\beta(x',s_1+\beta^t r_t(x_1),\phi)-W_{t+1}^\beta(x',s_2+\beta^t r_t(x_1),\phi)]\\
    &\geq \mathbb{E}_{p_{1,1}}[W_{t+1}^\beta(x',s_1+\beta^t r_t(x_2),\phi)-W_{t+1}^\beta(x',s_2+\beta^t r_t(x_2),\phi)]\\
    &\geq \mathbb{E}_{p_{2,1}}[W_{t+1}^\beta(x',s_1+\beta^t r_t(x_2),\phi)-W_{t+1}^\beta(x',s_2+\beta^t r_t(x_2),\phi)]\\
    &=W_t^\beta(x_2,s_1,\phi)-W_t^\beta(x_2,s_2,\phi)
\end{align*}
where we first exploit property \ref{prop:w2}, i.e. the convexity of $W_{t+1}^\beta$ and property that $r(x_2)\leq r(x_1)$, which implies, due to Lemma \ref{thm:increasingSecant}, that 
\begin{multline*}
    W_{t+1}^\beta(x',(s_1-s_2)+s_2+\beta^t r_t(x_1),\phi)-W_{t+1}^\beta(x',s_2+\beta^t r_t(x_1),\phi) \geq \\
    W_{t+1}^\beta(x',(s_1-s_2)+s_2+\beta^t r_t(x_2),\phi)-W_{t+1}^\beta(x',s_2+\beta^t r_t(x_2),\phi).
\end{multline*}

We then employ Lemma \ref{lem:sa1} using Assumption \ref{ass:p2} and the super-additivity of $W_{t+1}^\beta$ with respect to $(x,s)$ (i.e. property \ref{prop:w4}).

For $(1,1,0,0)$, we can see that:
\begin{align*}
    W_t^\beta(x_1,s_1,\phi)-W_t^\beta(x_1,s_2,\phi) &= \mathbb{E}_{p_{1,1}}[W_{t+1}^\beta(x',s_1+\beta^t r_t(x_1),\phi)-W_{t+1}^\beta(x',s_2+\beta^t r_t(x_1),\phi)]\\
    &\geq  \mathbb{E}_{p_{2,1}}[W_{t+1}^\beta(x',s_1+\beta^t r_t(x_2),\phi)-W_{t+1}^\beta(x',s_2+\beta^t r_t(x_2),\phi)]\\
    &\geq  \mathbb{E}_{p_{2,2}}[W_{t+1}^\beta(x',s_1+\beta^t r_t(x_2),\phi)-W_{t+1}^\beta(x',s_2+\beta^t r_t(x_2),\phi)]\\
    &=W_t^\beta(x_2,s_1,\phi)-W_t^\beta(x_2,s_2,\phi)
\end{align*}
where we first employed a step derived for $(1,1,1,1)$, and then Lemma \ref{lem:sa1} using Assumption \ref{ass:p1} and the super-additivity of $W_{t+1}^\beta$ with respect to $(x,s)$ (i.e. property \ref{prop:w4}).

The case of $(0,0,0,0)$ is similar as we have
\begin{align*}
    W_t^\beta(x_1,s_1,\phi)-W_t^\beta(x_1,s_2,\phi) &= \mathbb{E}_{p_{1,2}}[W_{t+1}^\beta(x',s_1+\beta^t r_t(x_1),\phi)-W_{t+1}^\beta(x',s_2+\beta^t r_t(x_1),\phi)]\\
    &\geq \mathbb{E}_{p_{1,2}}[W_{t+1}^\beta(x',s_1+\beta^t r_t(x_2),\phi)-W_{t+1}^\beta(x',s_2+\beta^t r_t(x_2),\phi)]\\
    &\geq \mathbb{E}_{p_{2,2}}[W_{t+1}^\beta(x',s_1+\beta^t r_t(x_2),\phi)-W_{t+1}^\beta(x',s_2+\beta^t r_t(x_2),\phi)]\\
    &=W_t^\beta(x_2,s_1,\phi)-W_t^\beta(x_2,s_2,\phi)
\end{align*}
using again first the convexity of $W_{t+1}^\beta$ in $s$ (see Lemma \ref{thm:increasingSecant}), followed with Lemma \ref{lem:sa1} using Assumption \ref{ass:p2} and the super-additivity of $W_{t+1}^\beta$ with respect to $(x,s)$.

The case $(1,0,0,0)$, follows straightforwardly from:
\begin{align*}
    W_t^\beta(x_1,s_1,\phi) &-W_t^\beta(x_1,s_2,\phi) \\
    &= \frac{\phi\beta^t}{T}+\mathbb{E}_{p_{1,1}}[W_{t+1}^\beta(x',s_1+\beta^t r_t(x_1),\phi)]-\mathbb{E}_{p_{1,2}}[W_{t+1}^\beta(x',s_2+\beta^t r_t(x_1),\phi)]\\
    &\geq \mathbb{E}_{p_{1,2}}[W_{t+1}^\beta(x',s_1+\beta^t r_t(x_1),\phi)-W_{t+1}^\beta(x',s_2+\beta^t r_t(x_1),\phi)]\\
    &\geq \mathbb{E}_{p_{2,2}}[W_{t+1}^\beta(x',s_1+\beta^t r_t(x_2),\phi)-W_{t+1}^\beta(x',s_2+\beta^t r_t(x_2),\phi)]\\
    &=W_t^\beta(x_2,s_1,\phi)-W_t^\beta(x_2,s_2,\phi)
\end{align*}
where the first inequality comes from optimality of $a_{11}=1$ over $a_{11}=0$, while the second comes from the derivations for $(0,0,0,0)$.

Finally, $(1,0,1,0)$ is obtained through
\begin{align*}
    W_t^\beta(x_1,s_1,\phi)&-W_t^\beta(x_1,s_2,\phi) \\
    & = \frac{\phi\beta^t}{T}+\mathbb{E}_{p_{1,1}}[W_{t+1}^\beta(x',s_1+\beta^t r_t(x_1),\phi)]-\mathbb{E}_{p_{1,2}}[W_{t+1}^\beta(x',s_2+\beta^t r_t(x_1),\phi)]\\
    &\geq \frac{\phi\beta^t}{T}+\mathbb{E}_{p_{2,1}}[W_{t+1}^\beta(x',s_1+\beta^t r_t(x_1),\phi)]-\mathbb{E}_{p_{2,1}}[W_{t+1}^\beta(x',s_2+\beta^t r_t(x_1),\phi)]\\&\qquad+\mathbb{E}_{p_{1,1}}[W_{t+1}^\beta(x',s_2+\beta^t r_t(x_1),\phi)]-\mathbb{E}_{p_{1,2}}[W_{t+1}^\beta(x',s_2+\beta^t r_t(x_1),\phi)]\\
    &\geq \frac{\phi\beta^t}{T}+\mathbb{E}_{p_{2,1}}[W_{t+1}^\beta(x',s_1+\beta^t r_t(x_2),\phi)]-\mathbb{E}_{p_{2,1}}[W_{t+1}^\beta(x',s_2+\beta^t r_t(x_2),\phi)]\\&\qquad+\mathbb{E}_{p_{1,1}}[W_{t+1}^\beta(x',s_2+\beta^t r_t(x_1),\phi)]-\mathbb{E}_{p_{1,2}}[W_{t+1}^\beta(x',s_2+\beta^t r_t(x_1),\phi)]\\    
    &\geq \frac{\phi\beta^t}{T}+\mathbb{E}_{p_{2,1}}[W_{t+1}^\beta(x',s_1+\beta^t r_t(x_2),\phi)]-\mathbb{E}_{p_{2,1}}[W_{t+1}^\beta(x',s_2+\beta^t r_t(x_2),\phi)]\\&\qquad+\mathbb{E}_{p_{1,1}}[W_{t+1}^\beta(x',s_2+\beta^t r_t(x_2),\phi)]-\mathbb{E}_{p_{1,2}}[W_{t+1}^\beta(x',s_2+\beta^t r_t(x_2),\phi)]\\&\qquad+\mathbb{E}_{p_{1,2}}[W_{t+1}^\beta(x',s_2+\beta^t r_t(x_1),\phi)]-\mathbb{E}_{p_{1,2}}[W_{t+1}^\beta(x',s_2+\beta^t r_t(x_1),\phi)]\\   
        &= \frac{\phi\beta^t}{T}+\mathbb{E}_{p_{2,1}}[W_{t+1}^\beta(x',s_1+\beta^t r_t(x_2),\phi)]-\mathbb{E}_{p_{2,1}}[W_{t+1}^\beta(x',s_2+\beta^t r_t(x_2),\phi)]\\&\qquad+\mathbb{E}_{p_{1,1}}[W_{t+1}^\beta(x',s_2+\beta^t r_t(x_2),\phi)]-\mathbb{E}_{p_{1,2}}[W_{t+1}^\beta(x',s_2+\beta^t r_t(x_2),\phi)]\\   
    &\geq \frac{\phi\beta^t}{T}+\mathbb{E}_{p_{2,1}}[W_{t+1}^\beta(x',s_1+\beta^t r_t(x_2),\phi)]-\mathbb{E}_{p_{2,1}}[W_{t+1}^\beta(x',s_2+\beta^t r_t(x_2),\phi)]\\&\qquad+\mathbb{E}_{p_{2,1}}[W_{t+1}^\beta(x',s_2+\beta^t r_t(x_2),\phi)]-\mathbb{E}_{p_{2,2}}[W_{t+1}^\beta(x',s_2+\beta^t r_t(x_2),\phi)]\\
    &= \frac{\phi\beta^t}{T}+\mathbb{E}_{p_{2,1}}[W_{t+1}^\beta(x',s_1+\beta^t r_t(x_2),\phi)]-\mathbb{E}_{p_{2,2}}[W_{t+1}^\beta(x',s_2+\beta^t r_t(x_2),\phi)]\\
    &=W_t^\beta(x_2,s_1,\phi)-W_t^\beta(x_2,s_2,\phi)
\end{align*}
where we first employ Lemma \ref{lem:sa1} with super-additivity of $W_{t+1}^\beta$ with respect to $(x,s)$ and Assumption \ref{ass:p2}. We then exploit the convexity of $W_{t+1}^\beta$ in $s$ (see  Lemma \ref{thm:increasingSecant}), followed with Lemma \ref{lem:sa1} using super-additivity of $W_{t+1}^\beta$ in $(x,s)$ and Assumption \ref{ass:p1}. The fourth inequality employs Lemma \ref{lem:sa2} using Assumption \ref{ass:p3} and the non-decreasingness of $W_{t+1}^\beta$ in $x$ (i.e. property \ref{prop:w1}).  

The case $(1,1,1,0)$ follows from:
\begin{align*}
    W_t^\beta(x_1,s_1,\phi)-W_t^\beta(x_1,s_2,\phi) &= Q_t^\beta(x_1,s_1,\phi,1)-Q_t^\beta(x_1,s_2,\phi,1) \\
    &= \mathbb{E}_{p_{1,1}}[W_{t+1}^\beta(x',s_1+\beta^t r_t(x_1),\phi)-W_{t+1}^\beta(x',s_2+\beta^t r_t(x_1),\phi)] \\
    &\geq \mathbb{E}_{p_{1, 1}}[W_{t+1}^\beta(x',s_1+\beta^t r_t(x_2),\phi)-W_{t+1}^\beta(x',s_2+\beta^t r_t(x_2),\phi)] \\
    &\geq \mathbb{E}_{p_{2, 1}}[W_{t+1}^\beta(x',s_1+\beta^t r_t(x_2),\phi)-W_{t+1}^\beta(x',s_2+\beta^t r_t(x_2),\phi)] \\
    &= Q_t^\beta(x_2,s_1,\phi,1) - Q_t^\beta(x_2,s_2,\phi,1) \\
    & \geq Q_t^\beta(x_2,s_1,\phi,1) - Q_t^\beta(x_2,s_2,\phi,0) \\
    & = W_t^\beta(x_2,s_1,\phi)-W_t^\beta(x_2,s_2,\phi)
\end{align*}
where the last inequality holds because action $a_{2,2}=0$ is optimal at $(x_2,s_2)$.

\textbf{Super-additive with respect to $(x,\phi)$:} For $x_1\geq x_2$ and $\phi_1\geq \phi_2$, let $a_{1,1},a_{1,2},a_{2,1}$, and $a_{2,2}$ be an action that achieves optimality in $W_t^\beta(x_1,s,\phi_1)$, $W_t^\beta(x_1,s,\phi_2)$, $W_t^\beta(x_2,s,\phi_1)$, and $W_t^\beta(x_2,s,\phi_2)$ respectively. Based on Lemma \ref{lem:monotone-policy}, there exists a monotone policy at time $t$. This implies that a tuple of optimal actions is necessarily in the following list:
\[(a_{1,1},a_{1,2},a_{2,1},a_{2,2})\in\{(1,1,1,1), (1, 1, 1, 0), (1,1,0,0), (0,0,0,0), (1,0,0,0), (1,0,1,0)\}.\]
We will show that in each case, we have that $W_{t}$ is super-additive with respect to $(x,\phi)$.

For $(1,1,1,1)$, we can see that:
\begin{align*}
    W_t^\beta(x_1,s,\phi_1)-W_t^\beta(x_1,s,\phi_2) &= \mathbb{E}_{p_{1,1}}[W_{t+1}^\beta(x',s+\beta^t r_t(x_1),\phi_1)-W_{t+1}^\beta(x',s+\beta^t r_t(x_1),\phi_2)]\\
    &\geq \mathbb{E}_{p_{1,1}}[W_{t+1}^\beta(x',s+\beta^t r_t(x_2),\phi_1)-W_{t+1}^\beta(x',s+\beta^t r_t(x_2),\phi_2)]\\
    &\geq \mathbb{E}_{p_{2,1}}[W_{t+1}^\beta(x',s+\beta^t r_t(x_2),\phi_1)-W_{t+1}^\beta(x',s+\beta^t r_t(x_2),\phi_2)]\\
    &=W_t^\beta(x_2,s,\phi_1)-W_t^\beta(x_2,s,\phi_2)
\end{align*}
where we first exploit property \ref{prop:w6}, i.e. the super-additivity of $W_{t+1}^\beta$ in $(s,\phi)$.
We then employ Lemma \ref{lem:sa1} using Assumption \ref{ass:p2} and the super-additivity of $W_{t+1}^\beta$ with respect to $(x,\phi)$ (i.e. property \ref{prop:w5}).

For $(1,1,0,0)$, we can see that:
\begin{align*}
    W_t^\beta(x_1,s,\phi_1)-W_t^\beta(x_1,s,\phi_2) &= \mathbb{E}_{p_{1,1}}[W_{t+1}^\beta(x',s+\beta^t r_t(x_1),\phi_1)-W_{t+1}^\beta(x',s+\beta^t r_t(x_1),\phi_2)]\\
    &\geq  \mathbb{E}_{p_{2,1}}[W_{t+1}^\beta(x',s+\beta^t r_t(x_2),\phi_1)-W_{t+1}^\beta(x',s+\beta^t r_t(x_2),\phi_2)]\\
    &\geq  \mathbb{E}_{p_{2,2}}[W_{t+1}^\beta(x',s+\beta^t r_t(x_2),\phi_1)-W_{t+1}^\beta(x',s+\beta^t r_t(x_2),\phi_2)]\\
    &=W_t^\beta(x_2,s,\phi_1)-W_t^\beta(x_2,s,\phi_2)
\end{align*}
where we first employed a step derived for $(1,1,1,1)$, and then Lemma \ref{lem:sa1} using Assumption \ref{ass:p1} and the super-additivity of $W_{t+1}^\beta$ with respect to $(x,\phi)$ (i.e. property \ref{prop:w5}).

The case of $(0,0,0,0)$ is similar as we have
\begin{align*}
    W_t^\beta(x_1,s,\phi_1)-W_t^\beta(x_1,s,\phi_2) &= \mathbb{E}_{p_{1,2}}[W_{t+1}^\beta(x',s+\beta^t r_t(x_1),\phi_1)-W_{t+1}^\beta(x',s+\beta^t r_t(x_1),\phi_2)]\\
    &\geq \mathbb{E}_{p_{1,2}}[W_{t+1}^\beta(x',s+\beta^t r_t(x_2),\phi_1)-W_{t+1}^\beta(x',s+\beta^t r_t(x_2),\phi_2)]\\
    &\geq \mathbb{E}_{p_{2,2}}[W_{t+1}^\beta(x',s+\beta^t r_t(x_2),\phi_1)-W_{t+1}^\beta(x',s+\beta^t r_t(x_2),\phi_2)]\\
    &=W_t^\beta(x_2,s,\phi_1)-W_t^\beta(x_2,s,\phi_2)
\end{align*}
using again first the super-additivity of $W_{t+1}^\beta$ in $(s,\phi)$, followed with Lemma \ref{lem:sa1} using Assumption \ref{ass:p2} and the super-additivity of $W_{t+1}^\beta$ with respect to $(x,\phi)$.

The case $(1,0,0,0)$, follows straightforwardly from:
\begin{align*}
    W_t^\beta(x_1,s,\phi_1) &-W_t^\beta(x_1,s,\phi_2) \\&= \frac{\phi_1 - \phi_2}{T}+\mathbb{E}_{p_{1,1}}[W_{t+1}^\beta(x',s+\beta^t r_t(x_1),\phi_1)]-\mathbb{E}_{p_{1,2}}[W_{t+1}^\beta(x',s+\beta^t r_t(x_1),\phi_2)]\\
    &\geq \frac{(\phi_1 - \phi_2)\beta^t}{T}+\mathbb{E}_{p_{1,2}}[W_{t+1}^\beta(x',s+\beta^t r_t(x_1),\phi_1)-W_{t+1}^\beta(x',s+\beta^t r_t(x_1),\phi_2)]\\
    &\geq \frac{(\phi_1 - \phi_2)\beta^t}{T}+\mathbb{E}_{p_{2,2}}[W_{t+1}^\beta(x',s+\beta^t r_t(x_2),\phi_1)-W_{t+1}^\beta(x',s+\beta^t r_t(x_2),\phi_2)]\\
    &=W_t^\beta(x_2,s,\phi_1)-W_t^\beta(x_2,s,\phi_2)
\end{align*}
where the first inequality comes from optimality of $a_{11}=1$ over $a_{11}=0$, while the second comes from the derivations for $(0,0,0,0)$.

Finally, $(1,0,1,0)$ is obtained through
\begin{align*}
    W_t^\beta(x_1,s,\phi_1)&-W_t^\beta(x_1,s,\phi_2) \\&= \frac{\phi_1\beta^t}{T}+\mathbb{E}_{p_{1,1}}[W_{t+1}^\beta(x',s+\beta^t r_t(x_1),\phi_1)]-\mathbb{E}_{p_{1,2}}[W_{t+1}^\beta(x',s+\beta^t r_t(x_1),\phi_2)]\\
    &\geq \frac{(\phi_1 - \phi_2)\beta^t}{T}+\mathbb{E}_{p_{2,1}}[W_{t+1}^\beta(x',s+\beta^t r_t(x_1),\phi_1)]-\mathbb{E}_{p_{2,1}}[W_{t+1}^\beta(x',s+\beta^t r_t(x_1),\phi_2)]\\&\qquad+\mathbb{E}_{p_{1,1}}[W_{t+1}^\beta(x',s+\beta^t r_t(x_1),\phi_2)]-\mathbb{E}_{p_{1,2}}[W_{t+1}^\beta(x',s+\beta^t r_t(x_1),\phi_2)]\\
    &\geq \frac{(\phi_1 - \phi_2)\beta^t}{T}+\mathbb{E}_{p_{2,1}}[W_{t+1}^\beta(x',s+\beta^t r_t(x_2),\phi_1)]-\mathbb{E}_{p_{2,1}}[W_{t+1}^\beta(x',s+\beta^t r_t(x_2),\phi_2)]\\&\qquad+\mathbb{E}_{p_{1,1}}[W_{t+1}^\beta(x',s+\beta^t r_t(x_1),\phi_2)]-\mathbb{E}_{p_{1,2}}[W_{t+1}^\beta(x',s+\beta^t r_t(x_1),\phi_2)]\\    
    &\geq \frac{(\phi_1 - \phi_2)\beta^t}{T}+\mathbb{E}_{p_{2,1}}[W_{t+1}^\beta(x',s+\beta^t r_t(x_2),\phi_1)]-\mathbb{E}_{p_{2,1}}[W_{t+1}^\beta(x',s+\beta^t r_t(x_2),\phi_2)]\\&\qquad+\mathbb{E}_{p_{1,1}}[W_{t+1}^\beta(x',s+\beta^t r_t(x_2),\phi_2)]-\mathbb{E}_{p_{1,2}}[W_{t+1}^\beta(x',s+\beta^t r_t(x_2),\phi_2)]\\   
    &\geq \frac{(\phi_1 - \phi_2)\beta^t}{T}+\mathbb{E}_{p_{2,1}}[W_{t+1}^\beta(x',s+\beta^t r_t(x_2),\phi_1)]-\mathbb{E}_{p_{2,1}}[W_{t+1}^\beta(x',s+\beta^t r_t(x_2),\phi_2)]\\&\qquad+\mathbb{E}_{p_{2,1}}[W_{t+1}^\beta(x',s+\beta^t r_t(x_2),\phi_2)]-\mathbb{E}_{p_{2,2}}[W_{t+1}^\beta(x',s+\beta^t r_t(x_2),\phi_2)]\\
    &= \frac{(\phi_1 - \phi_2)\beta^t}{T}+\mathbb{E}_{p_{2,1}}[W_{t+1}^\beta(x',s+\beta^t r_t(x_2),\phi_1)]-\mathbb{E}_{p_{2,2}}[W_{t+1}^\beta(x',s+\beta^t r_t(x_2),\phi_2)]\\
    &=W_t^\beta(x_2,s,\phi_1)-W_t^\beta(x_2,s,\phi_2)
\end{align*}
where we first employ Lemma \ref{lem:sa1} with super-additivity of $W_{t+1}^\beta$ with respect to $(x,\phi)$ and Assumption \ref{ass:p2}. We then exploit the super-additivity  of $W_{t+1}^\beta$ with respect to $(s,\phi)$ (i.e. property \ref{prop:w6}), followed with Lemma \ref{lem:sa1} using super-additivity of $W_{t+1}^\beta$ in $(x,s)$ and Assumption \ref{ass:p1}. The fourth inequality employs Lemma \ref{lem:sa2} using Assumption \ref{ass:p3} and the non-decreasingness of $W_{t+1}^\beta$ in $x$ (i.e. property \ref{prop:w1}).  

The case $(1,1,1,0)$ follows from:
\begin{align*}
    W_t^\beta(x_1,s,\phi_1) &-W_t^\beta(x_1,s,\phi_2)
    = Q_t(x_1,s,\phi_1,1)-Q_t(x_1,s,\phi_2,1) \\
    &= \frac{(\phi_1 - \phi_2)\beta^t}{T} + \mathbb{E}_{p_{1, 1}}[W_{t+1}^\beta(x',s+\beta^t r_t(x_1),\phi_1)-W_{t+1}^\beta(x',s+\beta^t r_t(x_1),\phi_2)] \\
    &\geq \frac{(\phi_1 - \phi_2)\beta^t}{T} + \mathbb{E}_{p_{1,1}}[W_{t+1}^\beta(x',s+\beta^t r_t(x_2),\phi_1)-W_{t+1}^\beta(x',s+\beta^t r_t(x_2),\phi_2)] \\
    &\geq \frac{(\phi_1 - \phi_2)\beta^t}{T} + \mathbb{E}_{p_{2, 1}}[W_{t+1}^\beta(x',s+\beta^t r_t(x_2),\phi_1)-W_{t+1}^\beta(x',s+\beta^t r_t(x_2),\phi_2)] \\
    &= Q_t(x_2,s,\phi_1,1) - Q_t(x_2,s,\phi_2,1) \\
    &\geq Q_t(x_2,s,\phi_1,1) - Q_t(x_2,s,\phi_2,0) \\
    & = W_t^\beta(x_2,s,\phi_1)-W_t^\beta(x_2,s,\phi_2)
\end{align*}
where the last inequality holds because action $a_{2,2}=0$ is optimal at $(x_2,s_2)$.

\textbf{Super-additive with respect to $(s,\phi)$:} For $s_1\geq s_2$ and $\phi_1\geq \phi_2$, let $a_{1,1},a_{1,2},a_{2,1}$, and $a_{2,2}$ be an action that achieves optimality in $W_t^\beta(x,s_1,\phi_1)$, $W_t^\beta(x,s_1,\phi_2)$, $W_t^\beta(x,s_2,\phi_1)$, and $W_t^\beta(x,s_2,\phi_2)$ respectively. Based on Lemma \ref{lem:monotone-policy}, there exists a monotone policy at time $t$. This implies that a tuple of optimal actions is necessarily in the following list:
\[(a_{1,1},a_{1,2},a_{2,1},a_{2,2})\in\{(1,1,1,1), (1, 1, 1, 0), (1,1,0,0), (0,0,0,0), (1,0,0,0), (1,0,1,0)\}.\]
We will show that in each case, we have that $W_t^\beta$ is super-additive with respect to $(s,\phi)$.

For $(1,1,1,1)$, we can see that:
\begin{align*}
    W_t^\beta(x,s_1,\phi_1)-W_t^\beta(x,s_1,\phi_2) &= \mathbb{E}_{p_{1}}[W_{t+1}^\beta(x',s_1+\beta^t r_t(x),\phi_1)-W_{t+1}^\beta(x',s_1+\beta^t r_t(x),\phi_2)]\\
    &\geq \mathbb{E}_{p_{1}}[W_{t+1}^\beta(x',s_2+\beta^t r_t(x),\phi_1)-W_{t+1}^\beta(x',s_2+\beta^t r_t(x),\phi_2)]\\
    &=W_t^\beta(x,s_2,\phi_1)-W_t^\beta(x,s_1,\phi_2)
\end{align*}
where we simply exploited property \ref{prop:w6}, i.e. the super-additivity of $W_{t+1}^\beta$ in $(s,\phi)$.

For $(1,1,0,0)$, we can see that:
\begin{align*}
    W_t^\beta(x,s_1,\phi_1)-W_t^\beta(x,s_1,\phi_2) &= \mathbb{E}_{p_{1}}[W_{t+1}^\beta(x',s_1+\beta^t r_t(x),\phi_1)-W_{t+1}^\beta(x',s_1+\beta^t r_t(x),\phi_2)]\\
    &\geq  \mathbb{E}_{p_{1}}[W_{t+1}^\beta(x',s_2+\beta^t r_t(x),\phi_1)-W_{t+1}^\beta(x',s_2+\beta^t r_t(x),\phi_2)]\\
    &\geq  \mathbb{E}_{p_{2}}[W_{t+1}^\beta(x',s_2+\beta^t r_t(x),\phi_1)-W_{t+1}^\beta(x',s_2+\beta^t r_t(x),\phi_2)]\\
    &=W_t^\beta(x,s_2,\phi_1)-W_t^\beta(x,s_2,\phi_2)
\end{align*}
where we first employed a step derived for $(1,1,1,1)$, and then Lemma \ref{lem:sa1} using Assumption \ref{ass:p1} and the super-additivity of $W_{t+1}^\beta$ with respect to $(x,\phi)$ (i.e. property \ref{prop:w4}).

The case of $(0,0,0,0)$ is similar as we have
\begin{align*}
    W_t^\beta(x,s_1,\phi_1)-W_t^\beta(x,s_1,\phi_2) &= \mathbb{E}_{p_{2}}[W_{t+1}^\beta(x',s_1+\beta^t r_t(x),\phi_1)-W_{t+1}^\beta(x',s_1+\beta^t r_t(x),\phi_2)]\\
    &\geq \mathbb{E}_{p_{2}}[W_{t+1}^\beta(x',s_2+\beta^t r_t(x),\phi_1)-W_{t+1}^\beta(x',s_2+\beta^t r_t(x),\phi_2)]\\
    &=W_t^\beta(x,s_2,\phi_1)-W_t^\beta(x,s_2,\phi_2)
\end{align*}
using again first the super-additivity of $W_{t+1}^\beta$ in $(s,\phi)$.

The case $(1,0,0,0)$, follows straightforwardly from:
\begin{align*}
    W_t^\beta(x,s_1,\phi_1) &-W_t^\beta(x,s_1,\phi_2) \\&= \frac{(\phi_1 - \phi_2)\beta^t}{T}+\mathbb{E}_{p_{1}}[W_{t+1}^\beta(x',s_1+\beta^t r_t(x),\phi_1)]-\mathbb{E}_{p_{2}}[W_{t+1}^\beta(x',s_1+\beta^t r_t(x),\phi_2)]\\
    &\geq \frac{(\phi_1 - \phi_2)\beta^t}{T} + \mathbb{E}_{p_{2}}[W_{t+1}^\beta(x',s_1+\beta^t r_t(x),\phi_1)-W_{t+1}^\beta(x',s_1+\beta^t r_t(x),\phi_2)]\\
    &\geq \frac{(\phi_1 - \phi_2)\beta^t}{T} + \mathbb{E}_{p_{2}}[W_{t+1}^\beta(x',s_2+\beta^t r_t(x),\phi_1)-W_{t+1}^\beta(x',s_2+\beta^t r_t(x),\phi_2)]\\
    &=W_t^\beta(x,s_2,\phi_1)-W_t^\beta(x,s_2,\phi_2)
\end{align*}
where the first inequality comes from optimality of $a_{11}=1$ over $a_{11}=0$, while the second comes from the derivations for $(0,0,0,0)$.

Finally, $(1,0,1,0)$ is obtained through
\begin{align*}
    W_t^\beta(x,s_1,\phi_1)&-W_t^\beta(x,s_1,\phi_2) \\&= \frac{(\phi_1 - \phi_2)\beta^t}{T}+\mathbb{E}_{p_{1}}[W_{t+1}^\beta(x',s_1+\beta^t r_t(x),\phi_1)]-\mathbb{E}_{p_{2}}[W_{t+1}^\beta(x',s_1+\beta^t r_t(x),\phi_2)]\\
    &\geq \frac{(\phi_1 - \phi_2)\beta^t}{T}+\mathbb{E}_{p_{1}}[W_{t+1}^\beta(x',s_2+\beta^t r_t(x),\phi_1)]-\mathbb{E}_{p_{1}}[W_{t+1}^\beta(x',s_2+\beta^t r_t(x),\phi_2)]\\&\qquad+\mathbb{E}_{p_{1}}[W_{t+1}^\beta(x',s_1+\beta^t r_t(x),\phi_2)]-\mathbb{E}_{p_{2}}[W_{t+1}^\beta(x',s_1+\beta^t r_t(x),\phi_2)]\\ 
    &\geq \frac{(\phi_1 - \phi_2)\beta^t}{T}+\mathbb{E}_{p_{1}}[W_{t+1}^\beta(x',s_2+\beta^t r_t(x),\phi_1)]-\mathbb{E}_{p_{2}}[W_{t+1}^\beta(x',s_2+\beta^t r_t(x),\phi_2)]\\&\qquad+\mathbb{E}_{p_{2}}[W_{t+1}^\beta(x',s_1+\beta^t r_t(x),\phi_2)]-\mathbb{E}_{p_{2}}[W_{t+1}^\beta(x',s_1+\beta^t r_t(x),\phi_2)]\\
    &= \frac{(\phi_1 - \phi_2)\beta^t}{T}+\mathbb{E}_{p_{1}}[W_{t+1}^\beta(x',s_2+\beta^t r_t(x),\phi_1)]-\mathbb{E}_{p_{2}}[W_{t+1}^\beta(x',s_2+\beta^t r_t(x),\phi_2)]\\
    &=W_t^\beta(x,s_2,\phi_1)-W_t^\beta(x,s_2,\phi_2)
\end{align*}
where we first employ the super-additivity of $W_{t+1}^\beta$ with respect to $(s,\phi)$. We then exploit Lemma \ref{lem:sa1} using super-additivity of $W_{t+1}^\beta$ in $(x,s)$ and Assumption \ref{ass:p1}. 

The case $(1,1,1,0)$ for $(s,\phi)$ follows from:
\begin{align*}
    W_t^\beta(x,s_1,\phi_1) &-W_t^\beta(x,s_1,\phi_2) = Q_t^\beta(x,s_1,\phi_1,1)-Q_t^\beta(x,s_1,\phi_2,1) \\
    &= \frac{(\phi_1 - \phi_2)\beta^t}{T} + \mathbb{E}_{P_t(\cdot|x,1)}[W_{t+1}^\beta(x',s_1+\beta^t r_t(x),\phi_1)-W_{t+1}^\beta(x',s_1+\beta^t r_t(x),\phi_2)] \\
    &\geq \frac{(\phi_1 - \phi_2)\beta^t}{T} + \mathbb{E}_{P_t(\cdot|x,1)}[W_{t+1}^\beta(x',s_2+\beta^t r_t(x),\phi_1)-W_{t+1}^\beta(x',s_2+\beta^t r_t(x),\phi_2)] \\
    &= Q_t^\beta(x,s_2,\phi_1,1) - Q_t^\beta(x,s_2,\phi_2,1) \\
    &\geq Q_t^\beta(x,s_2,\phi_1,1) - Q_t^\beta(x,s_2,\phi_2,0) \\
    & = W_t^\beta(x,s_2,\phi_1)-W_t^\beta(x,s_2,\phi_2)
\end{align*}
where the first inequality follows from the super-additivity of $W_{t+1}^\beta$ in $(s,\phi)$ (property \ref{prop:w6}). The second inequality holds because action $a_{2,2}=0$ is optimal at $(x,s_2,\phi_2)$. \Halmos

\section{Proof of Theorem \ref{thm:ns-indexable2}} \label{app:ns-indexable2}

Recall that when condition \ref{ass:r2} is satisfied, the Bellman equation associated to the augmented arm risk neutral MDP: 
\[
V_{\lambda,t}(x, s) = \max_{a \in \{0,1\}} \left\{ \mathbb{I}(t=T-1)U(s+r_t(x))-\frac{\lambda}{T} a + \sum_{x'} P_t(x'|x, a) V_{\lambda,t+1}(x', s + r_t(x)) \right\}.
\]
for any given state $(x,s)$ and given $\lambda\in\mathbb{R}_+$ and $T\in\mathcal{T}$, and with $V_{\lambda,T}(x,s) = 0$. 

\begin{lemma}[Difference of Value Bound]
For any time $t \in\mathcal{T}$, any state $(x,s)$ and any $\lambda_2\geq \lambda_1\geq 0$. the difference in value is bounded by: 
\[
-\frac{T-1-t}{T}(\lambda_2-\lambda_1) \leq V_{\lambda_2,t}(x, s) - V_{\lambda_1,t}(x, s) \leq 0.
\]
\end{lemma}
\textbf{\textit{Proof.}}
The proof is based on backward induction.

\textbf{Base Case:} At the final decision step, $t = T-1$, the Bellman equation is
\[
V_{\lambda,T-1}(x, s, \lambda) = \max_{a \in \{0,1\}} \left\{ U(s+r_{T-1}(x))-\frac{\lambda}{T} a \right\} = U(s+r_{T-1}(x)).
\]
Hence, we have that $V_{\lambda_2,T-1}(x, s) - V_{\lambda_1,T-1}(x, s)=0$, which is clearly inside $[0,0]$. Thus, the bounds hold.

\textbf{Inductive Step:} Assume the bounds hold for time $t+1$, one can first verify at $t$ that:
\begin{align*}
   &V_{\lambda_2,t}(x, s) - V_{\lambda_1,t}(x, s) =  \max_{a \in \{0,1\}} \left\{ -\frac{\lambda_2}{T} a + \sum_{x'} P_t(x'|x, a) V_{\lambda_2,t+1}(x', s + r_t(x)) \right\} \\&\quad - \max_{a \in \{0,1\}} \left\{ -\frac{\lambda_1}{T} a + \sum_{x'} P_t(x'|x, a) V_{\lambda_1,t+1}(x', s + r_t(x)) \right\}\\
   &\quad\leq \max_{a \in \{0,1\}} \left\{ -\frac{\lambda_2-\lambda_1}{T} a + \sum_{x'} P_t(x'|x, a) (V_{\lambda_2,t+1}(x', s + r_t(x)) - V_{\lambda_1,t+1}(x', s + r_t(x))\right\}\\
   &\quad\leq \max_{a \in \{0,1\}} \left\{ -\frac{\lambda_2-\lambda_1}{T} a\right\} = 0.
\end{align*}
where we used $V_{\lambda_2,t+1}(x, s) - V_{\lambda_1,t+1}(x, s) \leq 0$ for all $(x,s)$. Next, one can see that 
\begin{align*}
   V_{\lambda_2,t}(x, s) &- V_{\lambda_1,t}(x, s)
   \\ & \geq \min_{a \in \{0,1\}} \left\{ -\frac{\lambda_2-\lambda_1}{T} a + \sum_{x'} P_t(x'|x, a) (V_{\lambda_2,t+1}(x', s + r_t(x)) - V_{\lambda_1,t+1}(x', s + r_t(x))\right\}\\
   & \geq \min_{a \in \{0,1\}} \left\{ -\frac{\lambda_2-\lambda_1}{T} a\right\}  -\frac{T-t-2}{T}(\lambda_2-\lambda_1)= -\frac{T-1-t}{T}(\lambda_2-\lambda_1),
\end{align*}
where we used $V_{\lambda_2,t+1}(x, s) - V_{\lambda_1,t+1}(x, s) \geq -(T-1-(t+1))(\lambda_2-\lambda_1)/T$.\Halmos

\begin{lemma}\label{thm:ass2increasingPol}
If a restless bandit arm satisfies condition \ref{ass:r2} and Assumption~\ref{ass:mdp2}, then there exists a family of optimal policies $\{f_\lambda^{*}\}_{\lambda\geq 0}$, for its augmented arm risk neutral MDP, that is non-increasing with respect to $\lambda$.
\end{lemma}

\textbf{\textit{Proof.}}
We start by considering an advantage function 
\[
\Delta_t(x, s,\lambda) := \frac{\lambda}{T} + \sum_{x'} (P_t(x'|x, 0) - P_t(x'|x, 1)) V_{\lambda,t+1}(x', s + r_t(x)).
\]
and showing that it is non-decreasing in $\lambda$. Namely, for $t=T-1$ we have
\[\Delta_{T-1}(x, s,\lambda) = \frac{\lambda}{T} + \sum_{x'} (p_{T-1}(x'|x, 0) - p_{T-1}(x'|x, 1)) V_{\lambda,T}(x', s + r_{T-1}(x)) = \frac{\lambda}{T},\]
since $V_{\lambda,T}(x,s)=0$, thus non-decreasing in $\lambda$. 
\NimaEdits{We next show that $\Delta_{T-2}(x, s, \lambda)$ is non-decreasing in $\lambda$. From previous lemma, we know $V_{\lambda_2,T-1}(x,s) - V_{\lambda_1,T-1}(x,s) = 0$. Therefore, for any $\lambda_2 \geq \lambda_1 \geq 0$, we have:
\begin{align*}
    \Delta_{T-2}(x, s, \lambda_2) - \Delta_{T-2}(x, s, \lambda_1)
    &= \frac{\lambda_2-\lambda_1}{T} + \sum_{x'} [P_{T-2}(x'|x, 0) - P_{T-2}(x'|x, 1)] \\
    & \qquad \qquad \qquad (V_{\lambda_2,T-1}(x', s + r_{T-2}(x)) - V_{\lambda_1,T-1}(x', s + r_{T-2}(x))) \\
    &= \frac{\lambda_2-\lambda_1}{T} + \sum_{x'} [P_{T-2}(x'|x, 0) - P_{T-2}(x'|x, 1)] \cdot (0) \\
    &= \frac{\lambda_2-\lambda_1}{T}.
\end{align*}
Since $\lambda_2 \geq \lambda_1$, this difference is non-negative. Thus, $\Delta_{T-2}(x, s, \lambda)$ is non-decreasing in $\lambda$.}
Whereas when \NimaEdits{$t\leq T-3$}, one can show that if $\lambda_2 \geq \lambda_1\geq 0$, then
\begin{align*}
    \Delta_t(x, s,\lambda_2) &- \Delta_t(x, s,\lambda_1)= \frac{\lambda_2}{T} + \sum_{x'} (P_t(x'|x, 0) - P_t(x'|x, 1)) V_{\lambda_2,t+1}(x', s + r_t(x)) \\
    &- (\frac{\lambda_1}{T} + \sum_{x'} (P_t(x'|x, 0) - P_t(x'|x, 1)) V_{\lambda_1,t+1}(x', s + r_t(x)))\\
    &= \frac{\lambda_2-\lambda_1}{T} + \sum_{x'} [P_t(x'|x, 0) - P_t(x'|x, 1)] (V_{\lambda_2,t+1}(x', s + r_t(x))-V_{\lambda_1,t+1}(x', s + r_t(x)))\\
    &\geq \frac{\lambda_2-\lambda_1}{T} - \sum_{x'} |P_t(x'|x, 0) - P_t(x'|x, 1)| |V_{\lambda_2,t+1}(x', s + r_t(x))-V_{\lambda_1,t+1}(x', s + r_t(x))|\\
    &\geq \frac{\lambda_2-\lambda_1}{T} - \sum_{x'} |P_t(x'|x, 0) - P_t(x'|x, 1)| \frac{T-2-t}{T}(\lambda_2-\lambda_1)\\
    &\geq \frac{\lambda_2-\lambda_1}{T} - \frac{1}{T-t-2} \frac{T-2-t}{T}(\lambda_2-\lambda_1) \geq 0.
\end{align*}

Consider the policy
$f_{\lambda,t}^{*}(x,s):=\min(\arg\max_{a \in \{0,1\}} \left\{ -\frac{\lambda}{T} a + \sum_{x'} P_t(x'|x, a) V_{\lambda,t+1}(x', s + r_t(x)) \right\})$. We will show that $f_{\lambda,t}^{*}(x,s)$ is non-increasing in $\lambda$. Specifically, for $\lambda_2\geq \lambda_1\geq 0$, if $f_{\lambda_1,t}^{*}(x,s)=0$, then 
\[-\frac{\lambda_1}{T}\cdot 1 + \sum_{x'} P_t(x'|x, 1) V_{\lambda_1,t+1}(x', s + r_t(x)) \leq -\frac{\lambda_1}{T}\cdot 0 + \sum_{x'} P_t(x'|x, 0) V_{\lambda_1,t+1}(x', s + r_t(x))\]
by definition of the policy, thus implying that $\Delta_t(x, s,\lambda_1)\geq 0$. Moreover, we have $0\leq \Delta_t(x, s,\lambda_1)\leq \Delta_t(x, s,\lambda_2)$, which implies that $f_{\lambda_2,t}^{*}(x,s)=0$.
\Halmos

The rest of the proof of Theorem \ref{thm:ns-indexable2} follows directly as the proof of Theorem \ref{thm:ns-indexable} with the difference that it is Lemma \ref{thm:ass2increasingPol} that ensures the existence of  a family of optimal policies $\{f_\lambda^{i*}\}_{
\lambda\geq 0}$ that is non-increasing in $\lambda$. \Halmos

\removed{
\begin{definition}[Advantage Function]
For a given penalty $\lambda$, we define the advantage function as the difference between the value of taking the passive action (action 0) and the value of taking the active action (action 1). The advantage at time $t$ in state $(x,s)$ for some $\lambda\geq 0$ is:
\[
\Delta_t(x, s, \lambda) = \frac{\lambda}{T} + \sum_{x'} [P_t(x'|x, 0) - P_t(x'|x, 1)] V_{\lambda,t+1}(x', s + r_t(x)).
\]
\end{definition}

\begin{lemma}[Indexability Characterization]
If all restless bandit arms are such that $\Delta_t(x,s,\lambda)$ is non-decreasing in $\lambda$ for all state $(x,s)$ and time $t$, then Problem there exists a family of optimal policies $\{f^{i*}_\lambda\}_{\lambda\geq 0}$, for its augmented arm risk-neutral MDP, that is non-increasing with respect to $\lambda$.
\end{lemma}

\textbf{\textit{Proof.}}
Consider $\lambda_1\leq \lambda_2$ and $(x,s)\in $

By definition, action 0 is optimal in a given state $(x,s)$ for a penalty $\lambda \geq 0$ if and only if $\Delta(\cdot, \lambda) \geq 0$. For the set of states satisfying this condition to expand as $\lambda$ increases, the function $\Delta(\cdot, \lambda)$ must be non-decreasing in $\lambda$. This means we must show that its derivative with respect to $\lambda$ is nonnegative.
\Halmos

\begin{definition}[Marginal Value of Penalty]
The marginal value of the penalty $\lambda$ is defined as the partial derivative of the value function with respect to $\lambda$:
\[
m(\cdot) := \frac{\partial V(\cdot, \lambda)}{\partial \lambda}
\]

Based on the Bellman equations, since a higher penalty $\lambda$ can only decrease the total value, $m(\cdot)$ must be nonpositive.
\end{definition}

The derivative of the advantage function with respect to $\lambda$ is:
\[
\frac{\partial \Delta_t(x, s, \lambda)}{\partial \lambda} = \frac{1}{T} - \sum_{x'} [P_t(x'|x, 0) - P_t(x'|x, 1)] m_{t+1}(x', s + r_t(x))
\]
To proceed, we first need to establish a bound on the marginal value function $m_t(x,s)$.

We now return to the main proof. We apply the total variation inequality. The maximum absolute value of the marginal function is 
\[
\sup |m_{t+1}| = -\inf m_{t+1} \leq \frac{T-(t+1)}{T}.
\]
Therefore, by applying the bound and Condition (C1), we get
\begin{align*}
\left|\sum_{x'} [P_t(x'|x, 0) - P_t(x'|x, 1)] m_{t+1}(x', s + r_t(x))\right| &\leq 2\|P_t(\cdot|x, 0) - P_t(\cdot|x, 1)\|_{TV} \cdot \sup|m_{t+1}| \\
&\leq 2\|P_t(\cdot|x, 0) - P_t(\cdot|x, 1)\|_{TV} \cdot \frac{T - t - 1}{T} \\
& \leq 2 \left(\frac{1}{2(T - t - 1)}\right) \left(\frac{T - t - 1}{T}\right) = \frac{1}{T}
\end{align*}
This implies that the sum is bounded above by $1/T$. Therefore,
\[
\frac{\partial \Delta_t(x, s, \lambda)}{\partial \lambda} = \frac{1}{T} + \sum_{x'} [P_t(x'|x, 0) - P_t(x'|x, 1)] m_{t+1}(x', s + r_t(x)) \geq \frac{1}{T} - \frac{1}{T} = 0
\]
The derivative is nonnegative, so the advantage function is non-decreasing in $\lambda$, which establishes indexability.
}

\section{Simple Illustrative Models which Satisfy the Assumptions of Theorem~\ref{thm:ns-indexable}} \label{app:models}

A class of models which satisfy the Assumptions of Theorem~\ref{thm:ns-indexable} is presented below:
The state space for each arm is sorted from the worst to the best state. Per-step reward is only a function of the state for each arm and is non-decreasing over the state space. The transition probabilities follow any of the following pairs:
\begin{itemize}
\item Given an arm with $n$ states and a parameter $p \in [0, 1]$, let the transition probability matrix under passive and active actions be:
\begin{equation*}
{\cal P}_{0}(p) = 
\begin{bmatrix}
1 & 0 & 0 & 0   & 0   & 0   & 0 & \dots  & 0 \\
1-p & p & 0 & 0 & 0   & 0   & 0 & \dots  & 0 \\
\vdots & \vdots & \vdots & \vdots & \vdots & \vdots & \vdots & \vdots & \vdots \\
1-p & \dots & 0 & 0 & 0 & 0 & 0 & p & 0 \\
1-p & \dots & 0 & 0 & 0 & 0 & 0 & 0 & p
\end{bmatrix},
\quad 
{\cal P}_{1}(p) = 
\begin{bmatrix}
1 & 0 & 0 & 0   & 0   & 0   & 0 & \dots  & 0 \\
0 & 1 & 0 & 0 & 0   & 0   & 0 & \dots  & 0 \\
\vdots & \vdots & \vdots & \vdots & \vdots & \vdots & \vdots & \vdots & \vdots \\
0      & \ldots & 0 & 0 & 0 & 0 & 0 & 1  	& 0 \\
0      & \ldots & 0 & 0 & 0 & 0 & 0 & 0  & 1
\end{bmatrix}.
\end{equation*}

\item Given an arm with $n$ states and parameter $p1 > p2$, let the transition probability matrix under passive and active actions be:
\begin{equation*}
{\cal P}_{0}(p) = 
\begin{bmatrix}
1 & 0 & 0 & 0   & 0   & 0   & 0 & \dots  & 0 \\
1-p2 & p2 & 0 & 0 & 0   & 0   & 0 & \dots  & 0 \\
\vdots & \vdots & \vdots & \vdots & \vdots & \vdots & \vdots & \vdots & \vdots \\
1-p2 & \dots & 0 & 0 & 0 & 0 & 0 & p2 & 0 \\
1-p2 & \dots & 0 & 0 & 0 & 0 & 0 & 0 & p2
\end{bmatrix}, \quad 
{\cal P}_{1}(p) = 
\begin{bmatrix}
1 & 0 & 0 & 0   & 0   & 0   & 0 & \dots  & 0 \\
1-p1 & p1 & 0 & 0 & 0   & 0   & 0 & \dots  & 0 \\
\vdots & \vdots & \vdots & \vdots & \vdots & \vdots & \vdots & \vdots & \vdots \\
1-p1 & \dots & 0 & 0 & 0 & 0 & 0 & p1 & 0 \\
1-p1 & \dots & 0 & 0 & 0 & 0 & 0 & 0 & p1
\end{bmatrix}.
\end{equation*}

\item Given an arm with $n$ states and a parameter $p \in [0, 0.5]$, let the transition probability matrix under passive and active actions be:
\begin{equation*}
{\cal P}_{0}(p) = 
\begin{bmatrix}
1 & 0 & 0 & 0   & 0   & 0   & 0 & \dots  & 0 \\
1-p & p & 0 & 0 & 0   & 0   & 0 & \dots  & 0 \\
\vdots & \vdots & \vdots & \vdots & \vdots & \vdots & \vdots & \vdots & \vdots \\
0 & \dots & 0 & 0 & 0 & 0 & 1-p & p & 0 \\
0 & \dots & 0 & 0 & 0 & 0 & 0 & 1-p & p
\end{bmatrix}, \quad
{\cal P}_{1}(p) = 
\begin{bmatrix}
1 & 0 & 0 & 0   & 0   & 0   & 0 & \dots  & 0 \\
p & 1-p & 0 & 0 & 0   & 0   & 0 & \dots  & 0 \\
\vdots & \vdots & \vdots & \vdots & \vdots & \vdots & \vdots & \vdots & \vdots \\
0      & \ldots & 0 & 0 & 0 & 0 & p & 1-p  	& 0 \\
0      & \ldots & 0 & 0 & 0 & 0 & 0 & p & 1-p
\end{bmatrix}.
\end{equation*}

\item Given an arm with $n$ states and a parameter $p \in [0, 1/(n-1)]$, let the transition probability matrix under passive action be:
\begin{equation*}
{\cal P}_{0}(p) = 
\begin{bmatrix}
1 & 0 & 0 & 0   & \dots  & 0 \\
1 - (n-1) p & (n-1) p & 0  & 0 & \dots  & 0 \\
1 - (n-1) p & p & (n-2) p  & 0 & \dots  & 0 \\
\vdots & \vdots & \vdots & \vdots & \vdots & \vdots \\
1 - (n-1) p & \dots & p & p & 2 p & 0 \\
1 - (n-1) p & \dots & p & p & p & p
\end{bmatrix},
{\cal P}_{1}(p) = 
\begin{bmatrix}
(n-1) p & 0 & 0    & \dots  & 1 - (n-1)p \\
0 & (n-2) p & 0   & \dots  & 1- (n-2) p \\
\vdots & \vdots & \vdots & \vdots & \vdots \\
0      & \ldots & 0 & \dots  	& 1 - p \\
0      & \ldots & 0 & \dots  & 1
\end{bmatrix}
\end{equation*}
\end{itemize}

\section{Proof of Lemma \ref{lemma:rl-Pdiff}} \label{app:lem-pdiff}
We follow the steps provided in \cite{akbarzadeh2023learning} to prove the lemma.

First, we state some basic properties of Thompson sampling algorithm.
\begin{lemma}[Lemma 1 of \cite{russo2014learning}]\label{lem:TS1}
Suppose the true parameters $\theta^\star$ and the estimated ones $\theta_k$ have the same distribution given the same history~$\mathcal{H}$. For any $\mathcal{H}$-measurable function~$f$, we have 
\( \EXP[f(\theta^\star)|\mathcal{H}] = \EXP[f(\theta_k)|\mathcal{H}]. \)
\end{lemma}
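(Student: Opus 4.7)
The plan is to reduce the claimed identity to the defining property of conditional expectation as integration against the regular conditional distribution of the parameter given $\mathcal{H}$. The key conceptual point is that conditioning on $\mathcal{H}$ freezes every $\mathcal{H}$-measurable quantity into a constant, so $f$ — viewed as a map that may depend on both the parameter and on $\mathcal{H}$-measurable randomness — becomes, after conditioning, a deterministic measurable function of the parameter alone. Once that reduction is in place, the equality reduces to integrating the \emph{same} deterministic function against two probability measures that are equal by hypothesis.

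More concretely, I would first fix a realization $\mathcal{H}=h$ and define $g_h(\theta)$ to be the evaluation of $f$ with the $\mathcal{H}$-measurable randomness frozen at $h$. By the disintegration theorem, let $\mu^{\star}_h$ and $\mu^{k}_h$ denote the regular conditional distributions of $\theta^\star$ and $\theta_k$ given $\mathcal{H}=h$. By the definition of conditional expectation,
\begin{equation*}
\EXP[f(\theta^\star)\mid\mathcal{H}=h] \;=\; \int g_h(\theta)\,\mu^{\star}_h(d\theta), \qquad \EXP[f(\theta_k)\mid\mathcal{H}=h] \;=\; \int g_h(\theta)\,\mu^{k}_h(d\theta).
\end{equation*}
The hypothesis of the lemma is precisely that $\mu^{\star}_h = \mu^{k}_h$ for almost every $h$ — this is the defining property of Thompson sampling, since $\theta_k$ is drawn from the posterior, i.e., from the conditional law of $\theta^\star$ given $\mathcal{H}$. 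Hence the two integrals coincide almost surely in $h$, which yields the claimed equality of $\mathcal{H}$-measurable random variables.

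The only subtlety — and the one place I would be careful — is joint measurability: I need the map $(h,\theta)\mapsto g_h(\theta)$ to be measurable, so that the two displayed integrals define genuine versions of the conditional expectation. This is handled by a standard monotone-class argument: establish the identity first when $f$ is the indicator of a product measurable subset of (parameter space)$\times$(sample space restricted to $\mathcal{H}$), extend by linearity to simple functions, and then pass to general nonnegative measurable (or integrable) $f$ by monotone convergence. No deeper obstacle arises: all of the probabilistic content of the lemma is already contained in the hypothesis that $\theta^\star$ and $\theta_k$ share a conditional distribution given $\mathcal{H}$, and the rest is an application of the transfer formula for regular conditional distributions.
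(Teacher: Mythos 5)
Your proof is correct: it is the standard ``posterior sampling'' argument --- freeze the $\mathcal{H}$-measurable part of $f$, integrate the resulting deterministic function against the two regular conditional distributions, and invoke the hypothesis that these coincide --- and your handling of the joint-measurability point via a monotone-class argument is the right way to make it rigorous. The paper itself gives no proof of this statement (it simply cites Lemma~1 of \cite{russo2014learning}), and your argument matches the reasoning behind that cited result, so there is nothing to reconcile.
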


Let 
\[
\hat P^i_{k, t}(x'^{i}_{t+1} | x^i_t, a^i_t) := 
\begin{cases}
\bar{p}^i_0(x'^{i}_{{t+1}} | x^i_t, a^i_t), & \text{ if } N^i_{k, t}(x^i_t, a^i_t) = 0,
\\
N^i_{k, t}(x^i_t, a^i_t, x'^i_{t+1})/ N^i_{k, t}(x^i_t, a^i_t) , & \text{ otherwise, }
\end{cases}
\]

denote the sample mean estimation of $P^i_t(x'^i_{t+1} | x^i_t, a^i_t)$ based on observations up to end of episode~$k$. For the ease of notation, for a given $\delta \in (0,1)$, we define
\begin{equation}\label{eq:rl-eps}
\epsilon^i_\delta(\ell) := 
\begin{cases}
\sqrt{\frac{2 |\mathcal{X}|^i \log(1/\delta)}{\ell}} & \text{ if } \ell \geq 1 \\
\sqrt{2 |\mathcal{X}|^i \log(1/\delta)} & \text{ if } \ell = 0
\end{cases}
.
\end{equation}

\begin{lemma}\label{lem:rl-ineq-ns}
Consider any arm~$i$, episode~$k$, $\delta \in (0,1)$, $\ell > 1$, state-action pair $(x^i_t,a^i_t)$ at time~$t$. 
Define events $\mathcal{E}^i_{k, t, \ell}(x^i_t, a^i_t) := \{ N^i_{k, t}(x^i_t, a^i_t) = \ell \}$ and $\mathcal{F}^{\star i}_{k, t}(x^i_t, a^i_t) := \{ \lVert P^{\star i}_t(\cdot   | x^i_t, a^i_t) - \hat P^i_{k, t}(\cdot   | x^i_t, a^i_t) \rVert_1 \leq \epsilon_\delta(N^i_{k, t}(x^i_t, a^i_t)) \}$, and $\mathcal{F}^{i}_{k, t}(x^i_t, a^i_t) := \{ \lVert P^i_{k, t}(\cdot   | x^i_t, a^i_t) - \hat P^i_{k, t}(\cdot   | x^i_t, a^i_t) \rVert_1 \leq \epsilon_\delta(N^i_{k, t}(x^i_t, a^i_t)) \}$. Then, we have
\begin{align}
\PR\Bigl( \bigl\lVert P^{\star i}_t(\cdot   | x^i_t, a^i_t) - \hat P^i_{k, t}(\cdot   | x^i_t, a^i_t) \bigr\rVert_1
&>
\epsilon^i_\delta(\ell) \Bigm| \mathcal{E}^i_{k, t, \ell}(x^i_t, a^i_t)
\Bigr) \le \delta,
\label{eq:rl-ineq-1} \\
\PR\Bigl( \bigl\lVert P^i_{k, t}(\cdot   | x^i_t, a^i_t) - \hat P^i_{k, t}(\cdot   | x^i_t, a^i_t) \bigr\rVert_1
&>
\epsilon^i_\delta(\ell) \Bigm| \mathcal{E}^i_{k, t, \ell}(x^i_t, a^i_t)
\Bigr) \le \delta, \label{eq:rl-ineq-2}
\end{align}
where $\epsilon^i_{\delta}(\ell)$ is given by~\eqref{eq:rl-eps}. The above inequalities imply that
\begin{align}
&\EXP\Bigl[ \bigl\lVert P^{\star i}_t(\cdot   | x^i_t, a^i_t) - \hat P^i_{k, t}(\cdot   | x^i_t, a^i_t) \bigr\rVert_1 \Bigr] \le
\EXP[\epsilon^i_\delta(N^i_{k, t}(x^i_t, a^i_t)) | \mathcal{F}^{\star i}(x^i_t, a^i_t)] + 2\delta,
\label{eq:rl-ineq-3} \\
&\EXP\Bigl[ \bigl\lVert P^i_{k, t}(\cdot   | x^i_t, a^i_t) - \hat P^i_{k, t}(\cdot   | x^i_t, a^i_t) \bigr\rVert_1 \Bigr] \le
\EXP[\epsilon^i_\delta(N^i_{k, t}(x^i_t, a^i_t)) | \mathcal{F}^{\star i}_{k, t}(x^i_t, a^i_t)] + 2\delta.
\label{eq:rl-ineq-4}
\end{align}
\end{lemma}
\begin{proof}{Proof:}
Given arm~$i$, the state-action pair~$(x^i_t, a^i_t)$ of the arm at time~$t$ and $N^{i}_{k, t}(x^i_t, a^i_t) = \ell$ samples we have
\[ 
\mathbb{P}\Big( \lVert P^{\star i}_t(\cdot   | x^i_t, a^i_t) - \hat P^i_{k, t}(\cdot   | x^i_t, a^i_t) \rVert_1 \geq \varepsilon \Bigm| \mathcal{E}^i_{k, t, \ell} \Big) 
\leq 2^{|\mathcal{X}^i|} \exp\biggl( - \dfrac{\ell \varepsilon^2}{2} \biggr) 
< \exp\biggl( |\mathcal{X}^i| - \dfrac{\ell \varepsilon^2}{2} \biggr).
\]
Therefore, setting $\delta = \exp(|\mathcal{X}^i| - \ell \varepsilon^2/2)$, we get
\[
\PR\Biggl( \bigl\lVert P^{\star i}_t(\cdot   | x^i_t, a^i_t) - \hat P^i_{k, t}(\cdot   | x^i_t, a^i_t) \bigr\rVert_1
>
\sqrt{\frac{2 (|\mathcal{X}^i| + \log(1/\delta))}{1 \vee \ell}} 
\Bigm| \mathcal{E}^i_{k, t, \ell} \Biggr) \le \delta.
\]
This proves~\eqref{eq:rl-ineq-1}. Eq.~\eqref{eq:rl-ineq-2} follows from the Thompson sampling Lemma (Lemma~\ref{lem:TS1}). 
To prove \eqref{eq:rl-ineq-3} and \eqref{eq:rl-ineq-4}, we first show
\begin{align} \label{eqn:rl-PFc_bound}
P((\mathcal{F}^{\star i}_{k, t}(x^i_t, a^i_t))^c) & = \PR\Bigl( \bigl\lVert P^{\star i}_t(\cdot   | x^i_t, a^i_t) - \hat P^i_{k, t}(\cdot   | x^i_t, a^i_t) \bigr\rVert_1 > \epsilon^i_\delta(N^i_{k, t}(x^i_t, a^i_t)) \Bigr) \notag \\
& = \sum_{\ell = 0}^{\infty} \PR\Bigl( \bigl\lVert P^{\star i}_t(\cdot   | x^i_t, a^i_t) - \hat P^i_{k, t}(\cdot   | x^i_t, a^i_t) \bigr\rVert_1 > \epsilon^i_\delta(\ell) \Bigm| \mathcal{E}^i_{k, t, \ell} \Bigr) \PR(\mathcal{E}^i_{k, t, \ell}) \notag \\
& \leq \sum_{\ell = 0}^{\infty} \delta \PR(\mathcal{E}^i_{k, t, \ell}) = \delta.
\end{align}
Now consider
\begin{align*}
\hskip 2em & \hskip -2em
\EXP\Bigl[ \bigl\lVert P^{\star i}_t(\cdot   | x^i_t, a^i_t) - \hat P^i_{k, t}(\cdot   | x^i_t, a^i_t) \rVert_1 \Bigr]
=
\EXP\Bigl[ \bigl\lVert P^{\star i}_t(\cdot   | x^i_t, a^i_t) - \hat P^i_{k, t}(\cdot   | x^i_t, a^i_t) \bigr\rVert_1 \Bigm| \mathcal{F}^{\star i}_{k, t}(x^i_t, a^i_t) \Bigr] \PR(\mathcal{F}^{\star i}_{k, t}(x^i_t, a^i_t)) \notag \\
& + \EXP\Bigl[ \bigl\lVert P^{\star i}_t(\cdot   | x^i_t, a^i_t) - \hat P^i_{k, t}(\cdot   | x^i_t, a^i_t) \bigr\rVert_1 \Bigm| (\mathcal{F}^{\star i}_{k, t}(x^i_t, a^i_t))^c \Bigr] \PR((\mathcal{F}^{\star i}_{k, t}(x^i_t, a^i_t))^c) \notag \\
& \stackrel{(a)}\le 2 P((\mathcal{F})^c) + \EXP[\epsilon^i_\delta(N^i_{k, t}(x^i_t, a^i_t)) | \mathcal{F}^{\star i}_{k, t}(x^i_t, a^i_t)] \notag \\
& \stackrel{(b)}\le 2 \delta + \EXP[\epsilon^i_\delta(N^i_{k, t}(x^i_t, a^i_t)) | \mathcal{F}^{\star i}_{k, t}(x^i_t, a^i_t)],
\end{align*}
where $(a)$ uses $\lVert \cdot \rVert_1 \le 2$ and $P(\mathcal{F}) \le 1$ and $(b)$ uses \eqref{eqn:rl-PFc_bound}. This proves \eqref{eq:rl-ineq-3}. Eq.~\eqref{eq:rl-ineq-4} follows from a similar argument. \Halmos
\end{proof}

\begin{lemma} \label{lemma:rl-Pdiff-ns}
Consider episode~$k$, $\delta \in (0,1)$. Let denote the joint state-action pair of the learning policy for the system at time step~$t$ is $(\bs{X}_{k, t}, \bs{A}_{k, t})$. Define events $\mathcal{F}^{\star i}_{k, t}$ and $\mathcal{F}^{i}_{k, t}$ as in Lemma~\ref{lem:rl-ineq-ns}. Then we have 
\begin{multline*}
\EXP\Bigl[ \bigl\lVert
\bs{P}^{\star}_t( \cdot | \bs{X}_{k, t}, \bs{A}_{k, t})
-
\bs{P}_{k, t}(\cdot | \bs{X}_{k, t}, \bs{A}_{k, t}) 
\bigr\rVert_1 \Bigr] 
\\
\le 4 N \delta + \sum_{i \in \mathcal{N}} \left(  \EXP\bigl[ \epsilon^i_\delta(N^i_{k, t}(X^i_{k, t}, A^i_{k, t})) | \mathcal{F}^{\star i}_{k, t}(X^i_{k, t}, A^i_{k, t}) \bigr] + \EXP\bigl[ \epsilon^i_\delta(N^i_{k, t}(X^i_{k, t}, A^i_{k, t})) | \mathcal{F}^{i}_{k, t}(X^i_{k, t}, A^i_{k, t}) \bigr] \right). 
\end{multline*}
\end{lemma}
\begin{proof}{Proof:}
From \cite[Lemma 13]{jung2019thompson}, we have
\begin{equation*}
\EXP\Bigl[ \bigl\lVert
\bs{P}^{\star}_t( \cdot | \bs{x}, \bs{a})
-
\bs{P}_{k, t}(\cdot | \bs{x}, \bs{a}) 
\bigr\rVert_1 \Bigr] 
\leq \sum_{i \in \mathcal{N}} \EXP\Bigl[ \bigl\lVert P^{\star i}_t(\cdot   | x^i_t, a^i_t) - P^i_{k, t}(\cdot   | x^i_t, a^i_t) \bigr\rVert_1 \Bigr].
\end{equation*}
Then, the rest of the proof is as follows:
\begin{align*}
& \EXP\Bigl[ \bigl\lVert
\bs{P}^{\star}_t( \cdot | \bs{X}_{k, t}, \bs{A}_{k, t})
-
\bs{P}_{k, t}(\cdot | \bs{X}_{k, t}, \bs{A}_{k, t}) 
\bigr\rVert_1 \Bigr] \leq
\sum_{i = 1}^{N} \EXP\Bigl[ \bigl\lVert P^{\star i}_t(\cdot   | X^i_{k, t}, A^i_{k, t}) - P^i_{k, t}(\cdot   | X^i_{k, t}, A^i_{k, t}) \bigr\rVert_1 \Bigr] \\
& \quad \leq \sum_{i = 1}^{N} \EXP\Bigl[ \bigl\lVert P^{\star i}_t(\cdot   | X^i_{k, t}, A^i_{k, t}) - \hat P^i_{k, t}(\cdot   | X^i_{k, t}, A^i_{k, t}) \bigr\rVert_1 + \bigl\lVert P^{i}_k(\cdot   | X^i_{k, t}, A^i_{k, t}) - \hat P^i_{k, t}(\cdot   | X^i_{k, t}, A^i_{k, t}) \bigr\rVert_1 \Bigr] \\
& \quad \le 4 N \delta + \sum_{i = 1}^{N} \EXP\bigl[ \epsilon^i_\delta(N^i_{k, t}(X^i_{k, t}, A^i_{k, t})) | \mathcal{F}^{\star i}_{k, t}(X^i_{k, t}, A^i_{k, t}) \bigr] + \EXP\bigl[ \epsilon^i_\delta(N^i_{k, t}(X^i_{k, t}, A^i_{k, t})) | \mathcal{F}^{i}_{k, t} (X^i_{k, t}, A^i_{k, t})\bigr] ,
\end{align*}
where the second inequality follows from triangle inequality, and the third follows from Lemma~\ref{lem:rl-ineq-ns}. \Halmos
\end{proof}

Finally, to prove the result of Theorem \ref{thm:regret} we have the following:
\begin{multline*}
\sum_{k = 1}^{K} \sum_{t = 0}^{T-1} \EXP\Bigl[ \bigl\lVert \bs{P}^{\star}_{t}(\cdot   | \bs{X}_{k, t}, \bs{A}_{k, t}) - \bs{P}_{k, t}(\cdot   | \bs{X}_{k, t}, \bs{A}_{k, t}) \bigr\rVert_1 \Bigr] \\
\le
\sum_{k = 1}^{K} \sum_{t = 0}^{T-1} 4 n \delta + \sum_{i \in \mathcal{N}} \left(  \EXP\bigl[ \epsilon^i_\delta(N^i_{k, t}(X^i_{k, t}, A^i_{k, t})) | \mathcal{F}^{\star i}_{k, t}(X^i_{k, t}, A^i_{k, t}) \bigr] + \EXP\bigl[ \epsilon^i_\delta(N^i_{k, t}(X^i_{k, t}, A^i_{k, t})) | \mathcal{F}^{i}_{k, t}(X^i_{k, t}, A^i_{k, t}) \bigr] \right).
\end{multline*}

For the first term, $\sum_{k = 1}^{K} \sum_{t = 0}^{T-1} 4 N \delta = 4 N \delta K T$.

For the second term, we follow the steps from \cite{osband2013more}. Therefore, by definition of $\epsilon^i_\delta(\cdot)$, we have 
\begin{align}
& \sum_{k = 1}^{K} \sum_{t = 0}^{T-1}  \sum_{i = 1}^{N} \EXP\bigg[ \sqrt{\dfrac{\left(4 |\mathcal{X}|^i + \log (1/\delta)\right)}{1 \vee N^i_{k, t}(X^i_{k, t}, A^i_{k, t})}} \Biggm| \mathcal{F}^{\star i}_{k, t}(X^i_{k, t}, A^i_{k, t}) \bigg] \\
& \leq \sum_{i=1}^{N} \sqrt{4 \left(|\mathcal{X}|^i + \log (1/\delta)\right)} \sum_{k = 1}^{K} \sum_{t = 0}^{T-1} \EXP\bigg[ \sqrt{\dfrac{1}{1 \vee N^i_{k, t}(X^i_{k, t}, A^i_{k, t})}} \Biggm| \mathcal{F}^{\star i}_{k, t}(X^i_{k, t}, A^i_{k, t}) \bigg] \notag \\
& \leq \sum_{i=1}^{N} \sqrt{4 \left(|\mathcal{X}|^i + \log (1/\delta)\right)} \sum_{k = 1}^{K} \sum_{t = 0}^{T-1} \notag \\
& \quad \left( \IND(N^i_{k, t}(X^i_{k, t}, A^i_{k, t}) \leq 1) + \IND(N^i_{k, t}(X^i_{k, t}, A^i_{k, t}) > 1) \EXP\bigg[ \sqrt{\dfrac{1}{1 \vee N^i_{k, t}(X^i_{k, t}, A^i_{k, t})}} \Biggm| \mathcal{F}^{\star i}_{k, t}(X^i_{k, t}, A^i_{k, t}) \bigg]\right). \notag
\end{align}

Let assume $(X^i_{k, t}, A^i_{k, t}) = (x, a)$. Consider $N^i_{k, t}(x, a) \leq 1$. This can happen fewer than $2$ times per state-action-time tuple. Therefore, $\sum_{k = 1}^{K} \sum_{t = 0}^{T-1}\IND(N^i_{k, t}(x, a) \leq 1) \leq 2 (T 2|\mathcal{X}^i|) = 4T|\mathcal{X}^i|$.

Now, consider $N^i_{k, t}(x, a) > 1$. Hence,
\begin{align*}
& 
\sum_{k = 1}^{K} \sum_{t = 0}^{T-1}  \IND(N^i_{k, t}(X^i_{k, t}, A^i_{k, t}) > 1) \EXP\bigg[ \sqrt{\dfrac{1}{1 \vee N^i_{k, t}(X^i_{k, t}, A^i_{k, t})}} \Biggm| \mathcal{F}^{\star i}_{k, t}(X^i_{k, t}, A^i_{k, t}) \bigg] \\
& \leq 
\sum_{k = 1}^{K} \sum_{t = 0}^{T-1}  \EXP\bigg[ \sqrt{\dfrac{1}{N^i_{k, t}(X^i_{k, t}, A^i_{k, t})}} \Biggm| \mathcal{F}^{\star i}_{k, t}(X^i_{k, t}, A^i_{k, t}) \bigg] \leq
\sum_{(x^i_t, a^i_t, t)} \sum_{j = 1}^{N^i_{K+1, 0}(x^i_t, a^i_t)} \dfrac{1}{\sqrt{j}} \\
& \leq
\sqrt{2|\mathcal{X}^i|T \sum_{(x^i_t, a^i_t, t)} N^i_{K+1, 0}(x^i_t, a^i_t)} = \sqrt{2 |\mathcal{X}^i| TK}.
\end{align*}
Finally, we get
\begin{align} \label{eqn:rl-R2-1}
\sum_{k = 1}^{K} \sum_{t = 0}^{T-1}  \sum_{i = 1}^{N} \EXP\bigg[ \sqrt{\dfrac{\left(4 |\mathcal{X}|^i + \log (1/\delta) \right)}{1 \vee N^i_{k, t}(X^i_{k, t}, A^i_{k, t})}} \Biggm| \mathcal{F}^{\star i}_{k, t}(X^i_{k, t}, A^i_{k, t}) \bigg] & \leq \sum_{i=1}^{N} \sqrt{4 \left(|\mathcal{X}|^i + \log (1/\delta)\right)} \sqrt{2 |\mathcal{X}^i| KT} \notag \\
& = 2\sqrt{2} \sum_{i = 1}^{N} \sqrt{\left(|\mathcal{X}|^i + \log (1/\delta)\right) |\mathcal{X}^i| KT} \notag \\
& = 2\sqrt{2} \sum_{i = 1}^{N} \sqrt{\left((|\mathcal{X}|^i)^2 KT + \log (1/\delta) |\mathcal{X}^i| KT \right) } \notag \\
& \leq 2\sqrt{2} \sum_{i = 1}^{N} |\mathcal{X}|^i \sqrt{KT (1 + \log (1/\delta)) } \notag \\
& \leq 2\sqrt{2} N|\bar{\mathcal{X}}| \sqrt{KT (1 + \log (1/\delta)) }
\end{align}

The same approach works for the third term and we get
\begin{equation} \label{eqn:rl-R2-11}
\sum_{k = 1}^{K} \sum_{t = 0}^{T-1} \sum_{i = 1}^{N} \EXP\bigg[ \sqrt{\dfrac{2\sqrt{2} |\mathcal{X}|^i \log (1/\delta)}{1 \vee N^i_{k, t}(X^i_{k, t}, A^i_{k, t})}} \Biggm| \mathcal{F}^{i}_{k, t}(X^i_{k, t}, A^i_{k, t}) \bigg]
\leq 2\sqrt{2} N|\bar{\mathcal{X}}| \sqrt{KT (1 + \log (1/\delta)) }.
\end{equation}

Finally, by setting $\delta = 1/(KT)$, and substituting the upper-bounds, we have
\begin{multline*}
\sum_{k = 1}^{K} \sum_{t = 0}^{T-1} \EXP\Bigl[ \bigl\lVert
\bs{P}^{\star}_{t}( \cdot | \bs{X}_{k, t}, \bs{A}_{k, t})
-
\bs{P}_{k, t}(\cdot | \bs{X}_{k, t}, \bs{A}_{k, t}) 
\bigr\rVert_1 \Bigr] 
\\ \leq
\sum_{k = 1}^{K} \sum_{t = 0}^{T-1} \sum_{i = 1}^{N} \EXP\Bigl[ \bigl\lVert P^{\star i}_t(\cdot   | X^i_{k, t}, A^i_{k, t}) - P^i_{k, t}(\cdot   | X^i_{k, t}, A^i_{k, t}) \bigr\rVert_1 \Bigr] 
\\ \leq 
2\sqrt{2} N + 8 N|\bar{\mathcal{X}}| \sqrt{KT (1 + \log (KT)) } < 
12 N \bar{|\mathcal{X}|} \sqrt{KT (1 + \log (KT))}
\end{multline*}
Hence, the first term can be upper-bounded by the second term.
\Halmos

\section{Proof of Theorem \ref{thm:regret}} \label{app:thm-reg}

The technique used in Section~\ref{subsec:augment} can also be applied to the overall system prior to the decomposition. Let construct a time-dependent MDP for the overall system as $(\bar{\bs{\mathcal{X}}}, \bs{\mathcal{A}}, \{\bs{\tilde{P}}_t(\bs{a})\}_{\bs{a} \in \bs{\mathcal{A}}, t \in \mathcal{T}}, \{\bs{r}_t\}_{t\in\mathcal{T}}, \bs{x}_0)$ with $\bar{\bs{\mathcal{X}}}:=\bs{\mathcal{X}}\times \bs{\mathcal{S}}$, $\bs{\tilde{P}}_t(\bs{x}',\bs{s}'|\bs{x},\bs{s},\bs{a}):=\bs{P}_t(\bs{x}'|\bs{x},\bs{a})\mathbb{I}(\bs{s}',\bs{s}+\bs{r}_t(\bs{x},\bs{a}))$, $\bs{r}_t(\bs{x},\bs{s},\bs{a}):=\mathbb{I}(t,T-1)U_{\bs{\tau}}(\bs{s}+\bs{r}_t(\bs{x},\bs{a}))$ and $\bar{\bs{x}}_0:=(\bs{x_0},\bs{0})$.

The value function under optimal policy~$\bs{\pi}^\star$ for the overall system with the true parameter set~${\bs \theta}^\star$ is ${\bs V}^{{\bs \pi}^\star, {\bs \theta}^\star}_{t-1} = \langle \tilde{\bs P}_t^{{\bs \pi}^\star, {\bs \theta}^\star}, {\bs V}^{{\bs \pi}^\star, {\bs \theta}^\star}_{t} \rangle$.
The value function under the estimated policy~$\bs{\pi}^k$ for the overall system with the estimated parameter set~${\bs \theta}^k$ is ${\bs V}^{{\bs \pi}^k, {\bs \theta}^k}_{t-1} = \langle \tilde{\bs P}_t^{{\bs \pi}^k, {\bs \theta}^k}, {\bs V}^{{\bs \pi}^k, {\bs \theta}^k}_{k, t} \rangle$. And the value function under the estimated policy~$\bs{\pi}^k$ for the overall system with the true parameter set~${\bs \theta}^\star$ is ${\bs V}^{{\bs \pi}^k, {\bs \theta}^\star}_{t-1} = \langle \tilde{\bs P}_t^{{\bs \pi}^k, {\bs \theta}^\star}, {\bs V}^{{\bs \pi}^k, {\bs \theta}^\star}_{k, t} \rangle$.

Note that in the definition of the regret, the objectives are obtained using the parameters of the true MDP but the policy is the estimated one. Hence, we have
\begin{align}
\REGRET(K) 
& = \sum_{k = 1}^{K} \EXP\biggl[ 
\mathbb{E}\left[ \bs{D}_{{\bs x}_0}(\bs{\pi}^*) \right] 
-
\mathbb{E}\left[ \bs{D}_{{\bs x}_0}(\bs{\pi}_k) \right]
\biggr] \notag = \sum_{k = 1}^{K} 
\EXP\biggl[ {\bs V}^{{\bs \pi}^\star, {\bs \theta}^\star}_{0} ({\bs x}_0, {\bs s}_0) - {\bs V}^{{\bs \pi}^k, {\bs \theta}^\star}_{0}({\bs x}_0, {\bs s}_0)
\biggr] \notag \\
& = \sum_{k = 1}^{K} 
\EXP\biggl[ {\bs V}^{{\bs \pi}^k, {\bs \theta}^k}_{0} ({\bs x}_0, {\bs s}_0) - {\bs V}^{{\bs \pi}^k, {\bs \theta}^\star}_{0}({\bs x}_0, {\bs s}_0)
\biggr] \label{eq:RegDef}
\end{align}
where the last equality holds by Lemma~\ref{lem:TS1}.
Then, we present the main result of this section.

From Eq.~\eqref{eq:RegDef}, by adding and subtracting $\langle \tilde{\bs P}_t^{{\bs \pi}^k, {\bs \theta}^\star} {\bs 1}_{\{{\bs x}_0, {\bs s}_0\}}, {\bs V}^{{\bs \pi}^k, {\bs \theta}^k}_{1} \rangle$ we get 
\begin{align*}
\REGRET(K) 
& = \sum_{k = 1}^{K} \EXP\biggl[ 
{\bs V}^{{\bs \pi}^k, {\bs \theta}^k}_{0} ({\bs x}_0, {\bs s}_0) - {\bs V}^{{\bs \pi}^k, {\bs \theta}^\star}_{0}({\bs x}_0, {\bs s}_0)
\biggr] \\
& = \sum_{k = 1}^{K} \EXP\biggl[ 
\langle \tilde{\bs P}_t^{{\bs \pi}^k, {\bs \theta}^k} {\bs 1}_{\{{\bs x}_0, {\bs s}_0\}}, {\bs V}^{{\bs \pi}^k, {\bs \theta}^k}_{1} \rangle 
- 
\langle \tilde{\bs P}_t^{{\bs \pi}^k, {\bs \theta}^\star} {\bs 1}_{\{{\bs x}_0, {\bs s}_0\}}, {\bs V}^{{\bs \pi}^k, {\bs \theta}^\star}_{1} \rangle 
\\
& \qquad \qquad + 
\langle \tilde{\bs P}_t^{{\bs \pi}^k, {\bs \theta}^\star} {\bs 1}_{\{{\bs x}_0, {\bs s}_0\}}, {\bs V}^{{\bs \pi}^k, {\bs \theta}^k}_{1} \rangle 
- 
\langle \tilde{\bs P}_t^{{\bs \pi}^k, {\bs \theta}^\star} {\bs 1}_{\{{\bs x}_0, {\bs s}_0\}}, {\bs V}^{{\bs \pi}^k, {\bs \theta}^k}_{1} \rangle
\biggr] \\
& = \sum_{k = 1}^{K} \EXP\biggl[ 
\langle (\tilde{\bs P}_t^{{\bs \pi}^k, {\bs \theta}^k} - \tilde{\bs P}_t^{{\bs \pi}^k, {\bs \theta}^\star}) {\bs 1}_{\{{\bs x}_0, {\bs s}_0\}}, {\bs V}^{{\bs \pi}^k, {\bs \theta}^k}_{1} \rangle 
+ 
\langle \tilde{\bs P}_t^{{\bs \pi}^k, {\bs \theta}^\star} {\bs 1}_{\{{\bs x}_0, {\bs s}_0\}}, {\bs V}^{{\bs \pi}^k, {\bs \theta}^k}_{1} - {\bs V}^{{\bs \pi}^k, {\bs \theta}^\star}_{1} \rangle
\biggr].
\end{align*}
Then, by adding and subtracting $\left({\bs V}^{{\bs \pi}^k, {\bs \theta}^k}_{1} - {\bs V}^{{\bs \pi}^k, {\bs \theta}^\star}_{1}\right)({\bs X}_{k, 1}, {\bs S}_{k, 1})$, we get
\begin{align*}
\REGRET(K) 
& = \sum_{k = 1}^{K} \EXP\biggl[ 
\langle (\tilde{\bs P}_t^{{\bs \pi}^k, {\bs \theta}^k} - \tilde{\bs P}_t^{{\bs \pi}^k, {\bs \theta}^\star}) {\bs 1}_{\{{\bs x}_0, {\bs s}_0\}}, {\bs V}^{{\bs \pi}^k, {\bs \theta}^k}_{1} \rangle 
+ 
\langle \tilde{\bs P}_t^{{\bs \pi}^k, {\bs \theta}^\star} {\bs 1}_{\{{\bs x}_0, {\bs s}_0\}}, {\bs V}^{{\bs \pi}^k, {\bs \theta}^k}_{1} - {\bs V}^{{\bs \pi}^k, {\bs \theta}^\star}_{1} \rangle \\
& \qquad \qquad + \left({\bs V}^{{\bs \pi}^k, {\bs \theta}^k}_{1} - {\bs V}^{{\bs \pi}^k, {\bs \theta}^\star}_{1}\right)({\bs X}_{k, 1}, {\bs S}_{k, 1}) - \left({\bs V}^{{\bs \pi}^k, {\bs \theta}^k}_{1} - {\bs V}^{{\bs \pi}^k, {\bs \theta}^\star}_{1}\right)({\bs X}_{k, 1}, {\bs S}_{k, 1})
\biggr] \\
& = \sum_{k = 1}^{K} \EXP\biggl[ 
\langle (\tilde{\bs P}_t^{{\bs \pi}^k, {\bs \theta}^k} - \tilde{\bs P}_t^{{\bs \pi}^k, {\bs \theta}^\star}) {\bs 1}_{\{{\bs x}_0, {\bs s}_0\}}, {\bs V}^{{\bs \pi}^k, {\bs \theta}^k}_{1} \rangle 
+
\left( {\bs V}^{{\bs \pi}^k, {\bs \theta}^k}_{1} - {\bs V}^{{\bs \pi}^k, {\bs \theta}^\star}_{1} \right)({\bs X}_{k, 1}, {\bs S}_{k, 1})
\biggr]
\end{align*}

where we have used the fact that for any arbitrary policy ${\bs \pi}^k$,
\begin{align}
\EXP\biggl[ 
\langle \tilde{\bs P}_t^{{\bs \pi}^k , {\bs \theta}^\star} {\bs 1}_{\{{\bs x}_0, {\bs s}_0\}}, {\bs V}^{{\bs \pi}^k, {\bs \theta}^k}_{1} \rangle
\biggr] & = \EXP[{\bs V}^{{\bs \pi}^k, {\bs \theta}^k}_{1} ({\bs X}_{k, 1}, {\bs S}_{k, 1})], \notag \\
\EXP\biggl[ 
\langle \tilde{\bs P}_t^{{\bs \pi}^k , {\bs \theta}^\star} {\bs 1}_{\{{\bs x}_0, {\bs s}_0\}}, {\bs V}^{{\bs \pi}^k, {\bs \theta}^\star}_{1} \rangle
\biggr] & = \EXP[{\bs V}^{{\bs \pi}^k, {\bs \theta}^\star}_{1} ({\bs X}_{k, 1}, {\bs S}_{k, 1})] \label{eq:Reg2}
\end{align}
and hence,
\[
\EXP\biggl[ 
\langle \tilde{\bs P}_t^{{\bs \pi}^k , {\bs \theta}^\star} {\bs 1}_{\{{\bs x}_0, {\bs s}_0\}}, {\bs V}^{{\bs \pi}^k, {\bs \theta}^k}_{1} - {\bs V}^{{\bs \pi}^k, {\bs \theta}^\star}_{1} \rangle - \left( {\bs V}^{{\bs \pi}^k, {\bs \theta}^k}_{1} - {\bs V}^{{\bs \pi}^k, {\bs \theta}^\star}_{1} \right)({\bs X}_{k, 1}, {\bs S}_{k, 1})
\biggr] = 0.
\]
Now, recursively, the second inner term of \eqref{eq:Reg2} can be decomposed until the end of the finite time horizon.
Therefore, for a horizon of length $T$, we have
\begin{align*}
\REGRET(K) 
& = \sum_{k = 1}^{K} \EXP\biggl[ 
{\bs V}^{{\bs \pi}^k, {\bs \theta}^k}_{0} ({\bs x}_0, {\bs s}_0) - {\bs V}^{{\bs \pi}^k, {\bs \theta}^\star}_{0}({\bs x}_0, {\bs s}_0)
\biggr] \\
& = \sum_{k = 1}^{K} \sum_{t = 0}^{T-1} \EXP\biggl[ 
\langle (\tilde{\bs P}_t^{{\bs \pi}^k, {\bs \theta}^k} - \tilde{\bs P}_t^{{\bs \pi}^k, {\bs \theta}^\star}) {\bs 1}_{\{{\bs X}_{k, t}, {\bs S}_{k, t}\}}, {\bs V}^{{\bs \pi}^k, {\bs \theta}^k}_{k, t+1} \rangle
\biggr].
\end{align*}

As the per-step reward is upper-bounded by $r_{\max}$, then $|{\bs V}^{{\bs \pi}^k, {\bs \theta}^k}_{k, t+1} ({\bs X}_{k, t}, {\bs S}_{k, t}) | \leq N T r_{\max}$ for any $t$,
\begin{align*}
\REGRET(K) & \leq N T r_{\max} \sum_{k = 1}^{K} \sum_{t = 0}^{T-1} \EXP\biggl[ 
\bigl\lVert (\tilde{\bs P}_t^{{\bs \pi}^k, {\bs \theta}^k} - \tilde{\bs P}_t^{{\bs \pi}^k, {\bs \theta}^\star}) {\bs 1}_{\{{\bs X}_{k, t}, {\bs S}_{k, t}\}} \bigr\rVert_1 \biggr] \\
& = N T r_{\max} \sum_{k = 1}^{K} \sum_{t = 0}^{T-1} \EXP\biggl[ 
\bigl\lVert ({\bs P}^{{\bs \pi}^k}_{k, t} - {\bs P}_t^{\star {\bs \pi}^k}) {\bs 1}_{\{{\bs X}_{k, t}\}} \bigl\rVert_1 \biggr] \\ 
& \leq N T r_{\max} \sum_{k = 1}^{K} \sum_{t = 0}^{T-1} \EXP\biggl[ 
\bigl\lVert \bs{P}^{\star}_t(\cdot   | \bs{X}_{k, t}, \bs{A}_{k, t}) - \bs{P}_{k, t}(\cdot   | \bs{X}_{k, t}, \bs{A}_{k, t}) \bigr\rVert_1 \biggr] \\
& \leq 12 N^2 T r_{\max} \bar{|\mathcal{X}|} \sqrt{KT (1 + \log (KT))}.
\end{align*}
where the last inequality holds by Lemma~\ref{lemma:rl-Pdiff-ns}.\Halmos

\section{Proof of Theorem \ref{thm:discounted-finite}} \label{app:discounted-finite}

We prove part (1) in two steps. For \(n=0\), by definition we have
\[
V_{\lambda, 0}(x,y,z)=\max_{\pi\in\Pi_H^\infty} \mathbb{E}^{\pi}_x \Biggl[ U\Bigl(y+\sum_{k=0}^{-1}z\beta^k r(X_k,A_k)\Bigr) - \lambda \sum_{k=0}^{-1}z\beta^k A_k \Biggr]=U(y).
\]
Now turning to an arbitrary $n>0$, we have that
\begin{align*}
    V_{\lambda, n}&(x,y,z)=\max_{\pi\in\Pi_H^\infty} \mathbb{E}^{\pi}_x \Biggl[ U\Bigl(y+\sum_{k=0}^{n-1}z\beta^k r(X_k,A_k)\Bigr) - \lambda \sum_{k=0}^{n-1}z\beta^k A_k \Biggr]\\
&=\max_{\pi\in\Pi_H^\infty} \sum_{x'\in\mathcal{X}} \mathbb{E}_x^\pi\Biggl[ U\Bigl(y+\sum_{k=0}^{n-1}z\beta^k r(X_k,A_k)\Bigr) - \lambda \sum_{k=0}^{n-1}z\beta^k A_k \Biggm| x_1=x' \Biggr]P(x'|x,\pi_0(x))\\    
    &=\max_{a\in\mathcal{A}} \sum_{x'\in\mathcal{X}} \max_{\pi\in\Pi_H^\infty}\mathbb{E}_{x'}^\pi\Biggl[ U\Bigl(y+zr(x,a)+\sum_{k=0}^{n-1}(z\beta)\beta^k r(X_k,A_k)\Bigr) -\lambda za - \lambda \sum_{k=0}^{n-1}(z\beta)\beta^{k} A_k  \Biggr]P(x'|x,a)\\    
&= \max_{a\in\mathcal{A}} \sum_{x'\in\mathcal{X}} \Bigl[ V_{\lambda, n-1}\Bigl(x',  y+z r(x,a),  z\beta\Bigr) - \lambda z a \Bigr] P(x'|x,a)\\
    &=(\BellOp_\lambda v)(x,y,z).
\end{align*}

For the second part, we will prove by induction that for all $n\geq 1$, and all$(x,y,z)\in\hat{\mathcal{X}}$, the policy $\pi_{\lambda|x,y,z,n}^*=(\pi_{\lambda,0|x,y,z,n}^*,\pi_{\lambda,1|x,y,z,n}^*,\dots,\pi_{\lambda,n-1|x,y,z,n}^*)$, where we indicate with  $|x,y,z,n$ the fact that it was constructed for an initialization at $(x,y,z)$ with a horizon of $n$, satisfies
\begin{align}
    \mathbb{E}^{\pi_{\lambda|x,y,z,n}^*}_x \Biggl[ U\Bigl(y+\sum_{k=0}^{n-1}z\beta^k r(X_k,A_k)\Bigr) - \lambda \sum_{k=0}^{n-1}z\beta^k A_k \Biggr] = V_{\lambda, n}(x,y,z).\label{eq:policyMatchN}
\end{align}
Starting at $n=1$, we have that:
\begin{align*}
    \mathbb{E}^{\pi_{\lambda|x,y,z,1}^*}_x &\Biggl[ U\Bigl(y+z r(x,A_0)\Bigr) - \lambda z A_0 \Biggr]\\
    &=\sum_{x'} \Biggl[ V_{\lambda, 0}(x,y+zr(x,f_{\lambda, 1}^*(x,y,z)),\beta z) - \lambda z f_{\lambda, 1}^*(x,y,z)\Biggr] P(x'|x,f_{\lambda, 1}^*(x,y,z)) \\
    &= (\BellOp_{f_{\lambda, 1}^*,\lambda} V_{\lambda, 0})(x,y,z) = V_{\lambda, 1}(x,y,z).
\end{align*}
Similarly, given that condition \eqref{eq:policyMatchN} is satisfied at $n-1$ for all $(x,y,z)$, we can inductively establish at $n$ that:
\begin{align*}
    \mathbb{E}^{\pi_{\lambda|x,y,z,n}^*}_x &\Biggl[ U\Bigl(y+\sum_{k=0}^{n-1}z\beta^k r(X_k,A_k)\Bigr) - \lambda \sum_{k=0}^{n-1}z\beta^k A_k \Biggr]\\
    &=\sum_{x'} \mathbb{E}^{\pi_{\lambda|x,y,z,n}^*}_x \Biggl[ U\Bigl(y+\sum_{k=0}^{n-1}z\beta^k r(X_k,A_k)\Bigr) - \lambda \sum_{k=0}^{n-1}z\beta^k A_k \Biggm| x_1=x' \Biggr] P(x'|x,f_{\lambda, n}^*(x,y,z)) \\
    &=\sum_{x'} \mathbb{E}^{\pi_{\lambda|x',y',z',n-1}^*}_{x'} \Biggl[ U\Bigl(y+zr(x,f_{\lambda, n}^*(x,y,z))+\sum_{k=0}^{n-2}z\beta^{k+1} r(X_k,A_k)\Bigr) \\
    &\qquad-\lambda zf_{\lambda, n}^*(x,y,z) - \lambda \sum_{k=0}^{n-2}z\beta^{k+1} A_k  \Biggr] P(x'|x,f_{\lambda, n}^*(x,y,z)) \\  
    &=\sum_{x'} \Biggl[V_{\lambda, n-1}(x',y+zr(x,f_{\lambda, n}^*(x,y,z)),\beta z) -\lambda zf_{\lambda, n}^*(x,y,z) \Biggr] P(x'|x,f_{\lambda, n}^*(x,y,z)) \\   
    &= (\BellOp_{f_{\lambda, n}^*,\lambda} V_{\lambda, n-1})(x,y,z) = V_{\lambda, n}(x,y,z).
\end{align*}
with $y':=y+zr(x,f_{\lambda, n}^*(x,y,z))$ and $z':=z\beta$, and where we exploit:
\[\pi_{\lambda,k|x,y,z,n}^*(h_k)=f_{n-k,\lambda}^*\Bigl(x_k,  y+z\sum_{k'=0}^{k-1}\beta^{k'} r(X_{k'},A_{k'}), z \beta^k\Bigr)=\pi_{\lambda,k-1|x_1,y+zr(x_0,a_0),\beta z,n-1}^*([x_1, a_1,\dots, x_k]).\Halmos\]

\section{Proof of Theorem \ref{thm:inf-convergence-policy2}} \label{app:discounted-infinite}

We follow similar steps as in the proof of Theorem 3 of \cite{bauerle2014more}.

\textbf{Step 1. $V_{\lambda, \infty}(x,y,z)\in\mathbb{R}$} \\
For any $(x,y,z)\in\hat{\mathcal{X}}$, $\lambda\geq 0$, and $\pi\in\Pi_H^\infty$, we let 
\[V_{\lambda, n}^\pi(x,y,z):=\mathbb{E}_x^\pi\Biggl[ U\Bigl(y+z\sum_{t=0}^{n-1}\beta^t r(X_t,A_t)\Bigr) - \lambda\sum_{t=0}^{n-1}z\beta^t A_t  \Biggr].\]
One can show that the limit of $V_{\lambda, n}^\pi(x,y,z)$ exists as follows. First, for all $n'\geq n\geq 0$, we have that 
\begin{align*}
    V_{\lambda, n}^\pi(x,y,z)&=\mathbb{E}^{\pi}_x\Bigl[ U\Bigl(y+z\sum_{t=0}^{n-1}\beta^t r(X_t,A_t)\Bigr)-\lambda \sum_{t=0}^{n-1}z\beta^t A_t \Bigr]\\
    &\geq \mathbb{E}^{\pi}_x\Bigl[ U\Bigl(y+z\sum_{t=0}^{n'-1}\beta^t r(X_t,A_t) - z\beta^n r_{\max}\frac{1-\beta^{(n'-n)}}{1-\beta}\Bigr)-\lambda\sum_{t=0}^{n'-1}z\beta^t A_t \Bigr]\\
    &\geq \mathbb{E}^{\pi}_x\Bigl[ U\Bigl(y+z\sum_{t=0}^{n'-1}\beta^t r(X_t,A_t) \Bigr)-\lambda\sum_{t=0}^{n'-1}\beta^t A_t \Bigr] - Lr_{\max}z\beta^n \frac{1-\beta^{(n'-n)}}{1-\beta}\\
    &\geq V_{n',\lambda}^\pi(x,y,z) - \varepsilon_{\lambda, n}(x,y,z)    
    \end{align*}
where we used  the Lipchitz property of \(U\), i.e. for any \(y_1 \ge y_2 \ge0\), we have
\[
U(y_1)\le U(y_2)+L(y_1-y_2),
\]
for some $L\geq 0$, and where $\varepsilon_{\lambda, n}(x,y,z):=  (Lr_{\max}+\lambda)z\beta^n (1-\beta)^{-1}$. On the other hand,
    \begin{align*}
    V_{n',\lambda}^\pi(x,y,z)&=\mathbb{E}^{\pi}_x\Bigl[ U\Bigl(y+z\sum_{t=0}^{n'-1}\beta^t r(X_t,A_t) \Bigr)-\lambda\sum_{t=0}^{n'-1}\beta^t A_t \Bigr]\\
    &\geq \mathbb{E}^{\pi}_x\Bigl[ U\Bigl(y+z\sum_{t=0}^{n-1}\beta^t r(X_t,A_t) - z\beta^n r_{\min}\frac{1-\beta^{(n'-n)}}{1-\beta}\Bigr)-\lambda\sum_{t=0}^{n-1}\beta^t A_t \Bigr] \\
    &\geq \mathbb{E}^{\pi}_x\Bigl[ U\Bigl(y+z\sum_{t=0}^{n-1}\beta^t r(X_t,A_t) - z\beta^n r_{\max}\frac{1-\beta^{(n'-n)}}{1-\beta}\Bigr)-\lambda\sum_{t=0}^{n-1}\beta^t A_t \Bigr] \\    
    &\geq \mathbb{E}^{\pi}_x\Bigl[ U\Bigl(y+z\sum_{t=0}^{n-1}\beta^t r(X_t,A_t) \Bigr)-\lambda\sum_{t=0}^{n-1}\beta^t A_t \Bigr] -  (Lr_{\max}+\lambda)z\beta^n \frac{1-\beta^{(n'-n)}}{1-\beta}\\
    &\geq V_{\lambda, n}^\pi(x,y,z)-\varepsilon_{\lambda, n}(x,y,z).
\end{align*}
where we used $0\leq r_{\min}\leq r_{\max}$.
This implies that:
\begin{equation}\label{eq:nandnprimebounds}
    V_{\lambda, n}^\pi(x,y,z)-\varepsilon_{\lambda, n}(x,y,z)\leq  V_{n',\lambda}^\pi(x,y,z) \leq V_{\lambda, n}^\pi(x,y,z)+\varepsilon_{\lambda, n}(x,y,z).
\end{equation}
As $\varepsilon_{\lambda, n}(x,y,z)\rightarrow 0$ as $n\rightarrow \infty$, one can conclude that $\{V_{\lambda, n}^\pi(x,y,z)\}_{n=0}^\infty$ is a Cauchy sequence and must therefore converge to a value in $\mathbb{R}$.

Since we also have established that for all $n>0$ and $\pi\in\Pi_H^\infty$:
\begin{align*}
    \infty & < U(y)-(Lr_{\max}+\lambda)\ (1-\beta)^{-1} \leq U(y)-\varepsilon_{\lambda, 0}(x,y,z) \\
    & \leq V_{\lambda, n}^\pi(x,y,z)\leq U(y)+\varepsilon_{\lambda, 0}(x,y,z) \leq U(y)+(Lr_{\max}+\lambda) (1-\beta)^{-1} <\infty.
\end{align*}
We must therefore have:
\begin{align*}
    \infty & < U(y)-(Lr_{\max}+\lambda) (1-\beta)^{-1} \leq \lim_{n\rightarrow\infty}V_{\lambda, n}^\pi(x,y,z) \\
    & \leq U(y)+(Lr_{\max}+\lambda) (1-\beta)^{-1} <\infty \quad \forall \pi\in\Pi_H^\infty.
\end{align*}
The supremum of $\lim_{n\rightarrow \infty} V_{\lambda, n}^\pi(x,y,z)$ over all $\pi\in\Pi_H^\infty$ must therefore exist in the reals.

\textbf{Step 2. $\BellOp^n V_{\lambda, 0}\rightarrow V_{\lambda, \infty}$}

 For any $(x,y,z)\in\hat{\mathcal{X}}$, $\pi\in\Pi_H^\infty$, and $n'>n>0$, on can take the limit as $n'\rightarrow \infty$ and the supremum over all $\pi\in\bar{\Pi}_{H}$ in equation \eqref{eq:nandnprimebounds} to get 
 \begin{equation}
V_{\lambda, n}(x,y,z)-\varepsilon_{\lambda, n}(x,y,z)\leq V_{\lambda, \infty}(x,y,z)\leq V_{\lambda, n}(x,y,z)+\varepsilon_{\lambda, n}(x,y,z).     \label{eq:sandwich}
 \end{equation}
Letting $n\rightarrow\infty$ gives $V_{\lambda, n}(x,y,z)\rightarrow V_{\lambda, \infty}(x,y,z)$ as $\varepsilon_{\lambda, n}(x,y,z)\rightarrow 0$.

\textbf{Step 3. $V_{\lambda, \infty}=\BellOp_\lambda V_{\lambda, \infty}$} 

We can consider any $n\geq 0$ and exploit $|V_{\lambda, \infty}- V_{\lambda, n}|\leq \varepsilon_{\lambda, n}$ to obtain:
\begin{align*}
    (\BellOp_\lambda V_{\lambda, \infty})(x,y,z) &= \max_{a\in\mathcal{A}} \sum_{x'\in\mathcal{X}} \Bigl[ V_{\lambda, \infty}\Bigl(x',  y+z r(x,a),  z\beta\Bigr) - \lambda z a \Bigr] P(x'|x,a)\\
    &\leq \max_{a\in\mathcal{A}} \sum_{x'\in\mathcal{X}} \Bigl[ V_{\lambda, n}\Bigl(x',  y+z r(x,a),  z\beta\Bigr) + \varepsilon_{\lambda, n}(x,y+z r(x,a),  z\beta) - \lambda z a \Bigr] P(x'|x,a)\\
    &= V_{\lambda, n+1}(x,y,z) + \varepsilon_{\lambda, n+1}(x,y,z)\\
    &\leq V_{\lambda, \infty}(x,y,z) + 2\varepsilon_{\lambda, n+1}(x,y,z).
\end{align*}
and similarly
\[(\BellOp_\lambda V_{\lambda, \infty})(x,y,z) \geq V_{\lambda, \infty}(x,y,z)  - 2\varepsilon_{\lambda, n+1}(x,y,z).\]
This implies that 
\[|\BellOp_\lambda V_{\lambda, \infty})(x,y,z)- V_{\lambda, \infty}(x,y,z)|\leq 2\varepsilon_{\lambda, n+1}(x,y,z)\]
for all $n>0$. Hence, $\BellOp_\lambda V_{\lambda, \infty}= V_{\lambda, \infty}$.

\textbf{Step 4. Optimality of $\pi_\lambda^*$}

Let \(f_{\lambda}^*\) satisfy  
\(
\BellOp_{\lambda}V_{\lambda, \infty}
=\BellOp_{f_{\lambda}^*,\lambda}V_{\lambda, \infty}.
\)  
Define the history‐dependent policy \(\pi_{\lambda}^*\) by  
\[
\pi_{\lambda, 0}^*(\hat x)=f_{\lambda}^*(\hat x,\hat y,\hat z),
\quad
\pi_{\lambda, n}^*(h_n)
=f_{\lambda}^*\bigl(x_n, \hat y+\hat z\sum_{t=0}^{n-1}\beta^t r(x_t,a_t), \hat z\beta^n\bigr).
\]  

Based on Theorem \ref{thm:discounted-finite}, we have that 
\begin{align*}
    V_{\lambda, n}^{\pi_\lambda^*}(x,y,z)&=\mathbb{E}_x^{\pi_\lambda^*}\Biggl[ U\Bigl(y+z\sum_{t=0}^{n-1}\beta^t r(X_t,A_t)\Bigr) - \lambda\sum_{t=0}^{n-1}z\beta^t A_t  \Biggr] \\
    &=(\BellOp_{f_\lambda^*,\lambda}^n V_{\lambda, 0})(x,y,z) = V_{\lambda,n}(x,y,z).
\end{align*}

Moreover, exploiting equation \eqref{eq:sandwich}, we get $\bigl|V_{\lambda, n}^{\pi_{\lambda}^*}-V_{\lambda, \infty}\bigr|
\le\varepsilon_{\lambda, n}$. Letting \(n\to\infty\) shows  that
\[
\lim_{n\to\infty}
\mathbb E_x^{\pi_{\lambda}^*}\Bigl[ U\bigl(\hat y+\hat z\sum_{t=0}^{n-1}\beta^t r(X_t,A_t)\bigr)
-\lambda\sum_{t=0}^{n-1}\hat z\beta^t A_t\Bigr]
=V_{\lambda, \infty}(x,\hat y,\hat z),
\]  
so \(\pi_{\lambda}^*\) attains the supremum in \eqref{eq:infHorizonProbExt}. \Halmos

\removed{
Then by induction and monotonicity of \(\BellOp_{f_{\lambda}^*,\lambda}\), for each \(n\)  
\[
V_{\lambda, n}^{\pi_{\lambda}^*}
=(\BellOp_{f_{\lambda}^*,\lambda}^nV_{\lambda, 0})
\quad\text{and}\quad
\bigl|V_{\lambda, n}^{\pi_{\lambda}^*}-V_{\lambda, \infty}\bigr|
\le\varepsilon_{\lambda, n}.
\]  
Letting \(n\to\infty\) shows  
\[
\lim_{n\to\infty}
\mathbb E_x^{\pi_{\lambda}^*}\Bigl[ U\bigl(\hat y+\hat z\sum_{t=0}^{n-1}\beta^t r(X_t,A_t)\bigr)
-\lambda\sum_{t=0}^{n-1}\hat z\beta^t A_t\Bigr]
=V_{\lambda, \infty}(x,\hat y,\hat z),
\]  
so \(\pi_{\lambda}^*\) attains the supremum in \eqref{eq:infHorizonProbExt}.

\begin{align*}
    V_{\lambda, n}^{\pi_\lambda^*}(x,y,z)&=\mathbb{E}_x^{\pi_\lambda^*}\Biggl[ U\Bigl(y+z\sum_{t=0}^{n-1}\beta^t r(X_t,A_t)\Bigr) - \lambda\sum_{t=0}^{n-1}z\beta^t A_t  \Biggr] \\
    &=(\BellOp_{f_\lambda^*,\lambda}^n V_{\lambda, 0})(x,y,z)
\end{align*}

Hence,
\begin{align*}
V_{\lambda, n}^{\pi_\lambda^*}(x,y,z)-V_{\lambda, \infty}(x,y,z)
&=(\BellOp_{f_\lambda^*,\lambda}^n V_{\lambda, 0})(x,y,z)-(\BellOp_{f_\lambda^*,\lambda}^n V_{\lambda, \infty})(x,y,z)\\
&\leq (\BellOp_{f_\lambda^*,\lambda}^n V_{\lambda, 0})(x,y,z)-(\BellOp_{f_\lambda^*,\lambda}^n (V_{\lambda, 0}-\varepsilon_{\lambda, 0}))(x,y,z)\\
&\leq \varepsilon_{\lambda, n}(x,y,z)
\end{align*}
and similarly
\begin{align*}
V_{\lambda, n}^{\pi_\lambda^*}(x,y,z)-V_{\lambda, \infty}(x,y,z)&=(\BellOp_{f_\lambda^*,\lambda}^n V_{\lambda, 0})(x,y,z)-(\BellOp_{f_\lambda^*,\lambda}^n V_{\lambda, \infty})(x,y,z)\\
&\geq (\BellOp_{f_\lambda^*,\lambda}^n V_{\lambda, 0})(x,y,z)-(\BellOp_{f_\lambda^*,\lambda}^n (V_{\lambda, 0}+\varepsilon_{\lambda, 0}))(x,y,z)\\
&\geq -\varepsilon_{\lambda, n}(x,y,z)
\end{align*}
This means that
\[V_{\lambda, \infty}(x,y,z)-\varepsilon_{\lambda, n}(x,y,z)\leq \mathbb{E}_x^{\pi_\lambda^*}\Biggl[ U\Bigl(y+z\sum_{t=0}^{n-1}\beta^t r(X_t,A_t)\Bigr) - \lambda\sum_{t=0}^{n-1}z\beta^t A_t  \Biggr] \leq V_{\lambda, \infty}(x,y,z)+\varepsilon_{\lambda, n}(x,y,z)\]
thus confirming that 
\[V_{\lambda, \infty}(x,y,z) = \lim_{n\rightarrow \infty} \mathbb{E}_x^{\pi_\lambda^*}\Biggl[ U\Bigl(y+z\sum_{t=0}^{n-1}\beta^t r(X_t,A_t)\Bigr) - \lambda\sum_{t=0}^{n-1}z\beta^t A_t  \Biggr].\]
We conclude that $\pi_\lambda^*$ achieves optimality in equation \eqref{eq:infHorizonProbExt}.\Halmos
}

\section{Proof of Lemma \ref{thm:monotoneSuperAdd}:}
Recall the definition:
\[Q_{\lambda, \infty}(x,y,z,a):=- \lambda z a + \sum_{x'\in\mathcal{X}} \Bigl[ V_{\lambda, \infty}\Bigl(x',  y+z r(x,a),  z\beta\Bigr)  \Bigr] P(x'|x,a).\]

Consider the family of decision rules $f_\lambda^*(x,y,z):=\min(\arg\max_a Q_{\lambda, \infty}(x,y,z,a))$. We first show that $f_\lambda^*$ satisfies $\mathcal{T}_\lambda V_{\lambda, \infty}=\mathcal{T}_{f_\lambda^*,\lambda} V_{\lambda, \infty}$. Namely,
\begin{align*}
    (\mathcal{T}_{f_\lambda^*,\lambda}V_{\lambda, \infty})(x,y,z) &= \sum_{x'\in\mathcal{X}} \Bigl[ V_{\lambda, \infty}\Bigl(x',  y+z r(x,f_\lambda^*(x,y,z)),  z\beta\Bigr) - \lambda z f_{\lambda}^*(x,y,z) \Bigr] P(x'|x,f_{\lambda}^*(x,y,z))\\
    &=Q_{\lambda, \infty}(x,y,z,f_\lambda^*(x,y,z))\\
    &=\max_a Q_{\lambda, \infty}(x,y,z,a)\\
    &=\max_a \sum_{x'\in\mathcal{X}} \Bigl[ V_{\lambda, \infty}\Bigl(x',  y+z r(x,a),  z\beta\Bigr) - \lambda z a \Bigr] P(x'|x,a) = \mathcal{T}_\lambda V_{\lambda, \infty}(x,y,z).
\end{align*}

Next, we show that the decision rule is monotone in $\lambda$. For any $(x,y,z)$, consider $\lambda_2\geq \lambda_1$. 

If $f_{\lambda_1}^*(x,y,z)=1$, then $f_{\lambda_2}^*(x,y,z)\leq 1 = f_{\lambda_1}^*(x,y,z)$ trivially.

If $f_{\lambda_1}^*(x,y,z)=0$, then by optimality we have $Q_{\infty,\lambda_1}(x,y,z,0) \geq Q_{\infty,\lambda_1}(x,y,z,1)$.

By the superadditivity condition \eqref{cond:superAdditiveQ}, 
we have, for $\lambda_1 \leq \lambda_2$:
\[0 \geq Q_{\infty,\lambda_1}(x,y,z,1)-Q_{\infty,\lambda_1}(x,y,z,0) \geq Q_{\infty,\lambda_2}(x,y,z,1)-Q_{\infty,\lambda_2}(x,y,z,0).\]

Therefore, $Q_{\infty,\lambda_2}(x,y,z,0) \geq Q_{\infty,\lambda_2}(x,y,z,1)$, which implies that
\(f_{\lambda_2}^*(x,y,z) \leq 0 = f_{\lambda_1}^*(x,y,z).\)

Thus, the decision rule is non-increasing in $\lambda$. \Halmos

\removed{
Next, we present a corollary of Theorem \ref{thm:inf-convergence-policy2}.

\begin{corollary}\label{thm:convergeQinfty}
For all $(x,y,z)\in\hat{\mathcal{X}}$, we have that $Q_{\lambda, n}(x,y,z,a)\rightarrow Q_{\lambda, \infty}(x,y,z,a)$ and $a\in\mathcal{A}$.
\end{corollary}

\begin{proof}{Proof:}
    This follows from the definition of $Q_{\lambda, n}$ and $Q_{\lambda, \infty}$, and from Theorem \ref{thm:inf-convergence-policy2}. Namely,
\[Q_{\lambda, n}(x,y,z,a)=\sum_{x'\in\mathcal{X}} \Bigl[ V_{\lambda, n-1}\Bigl(x',  y+z r(x,a),  z\beta\Bigr) - \lambda z a \Bigr] P(x'|x,a)\]
Taking the limit on both sides of the equality we get:
\begin{align*}
\lim_{n\rightarrow\infty}Q_{\lambda, n}(x,y,z,a)&=\lim_{n\rightarrow\infty}\sum_{x'\in\mathcal{X}} \Bigl[ V_{\lambda, n-1}\Bigl(x',  y+z r(x,a),  z\beta\Bigr) - \lambda z a \Bigr] P(x'|x,a)\\
&=\sum_{x'\in\mathcal{X}} \Bigl[ \lim_{n\rightarrow\infty}V_{\lambda, n-1}\Bigl(x',  y+z r(x,a),  z\beta\Bigr) - \lambda z a \Bigr] P(x'|x,a)\\
&=\sum_{x'\in\mathcal{X}} \Bigl[ V_{\lambda, \infty}\Bigl(x',  y+z r(x,a),  z\beta\Bigr) - \lambda z a \Bigr] P(x'|x,a) =Q_{\lambda, \infty}(x,y,z,a).   \Halmos
\end{align*}
\end{proof}}

\section{Proof of Theorem~\ref{thm:inf-indexable}}\label{app:proofThm6}

Consider the following finite horizon version of the Q function:
\[Q_{\lambda,n}(x,y,z,a):=- \lambda z a + \sum_{x'\in\mathcal{X}} \Bigl[ V_{\lambda, n-1}\Bigl(x',  y+z r(x),  z\beta\Bigr)  \Bigr] P(x'|x,a).\]
Reusing the notation of the proof of Lemma \ref{lem:monotone-policy} (see Appendix \ref{app:thm-ns-indexable}), we observe that 
\[Q_{\lambda,n}(x,y,z,a):=- \lambda z a + \sum_{x'\in\mathcal{X}} \Bigl[ W_0^\beta\Bigl(x',  y+z r(x),-\lambda z \beta (n-1)\Bigr)  \Bigr] P(x'|x,a)\]
with $W_0^\beta(x,s,\phi)$ defined on a finite horizon \NimaEdits{$T$ $=$ $n$ $-$ $1$} MDP using the tuple $(\mathcal{X},\mathcal{A},\{P_t(a)\}_{a\in\{0,1\}},\{\hat{r}_t\}_{t\in\mathcal{T}},x_0)$, where $\hat{r}_t(x,a):=z\beta r(x)$. Given that this $T=n-1$ MDP satisfies all conditions of Assumption \ref{ass:mdp}, we conclude from Lemma \ref{thm:WtProperties} that $W_0^\beta(x,s,\phi)$ is super-additive with respect to $(x,\phi)$. This further can be used to verify that $Q_{\lambda,n}(x,y,z,a)$ satisfies condition \eqref{cond:superAdditiveQ}. Namely, for $0\leq \lambda_1 \leq \lambda_2$:
\begin{align*}
    Q_{\lambda_1,n}&(x,y,z,1)-Q_{\lambda_1,n}(x,y,z,0) \\
    &= -\lambda_1 z+ \mathbb{E}_{p_{1}}[W_{0}^\beta(x', y + z r(x), -\lambda_1 z \beta(n-1))]-\mathbb{E}_{p_{2}}[W_{0}^\beta(x', y + z r(x), -\lambda_1 z \beta(n-1))]\\
    &\geq -\lambda_2 z+ \mathbb{E}_{p_{1}}[W_{0}^\beta(x', y + z r(x), -\lambda_2 z \beta(n-1))]-\mathbb{E}_{p_{2}}[W_{0}^\beta(x', y + z r(x), -\lambda_2 z \beta(n-1))]\\
    &= Q_{\lambda_2,n}^\beta(x,y,z,1)-Q_{\lambda_2,n}^\beta(x,y,z,0), 
\end{align*} 
where $p_1(x'):=P(x'|x,1)$ and $p_2(x'):=P(x'|x,0)$, using  Lemma \ref{lem:sa1} with Assumption \ref{ass:p1} and the super-additivity of $W_0^\beta(x,s,\phi)$ with respect to $(x,\phi)$.



Taking the limit as $n$ goes to infinity we first get:
\begin{align*}
\lim_{n\rightarrow\infty}Q_{\lambda, n}(x,y,z,a)&=\lim_{n\rightarrow\infty}\sum_{x'\in\mathcal{X}} \Bigl[ V_{\lambda, n-1}\Bigl(x',  y+z r(x,a),  z\beta\Bigr) - \lambda z a \Bigr] P(x'|x,a)\\
&=\sum_{x'\in\mathcal{X}} \Bigl[ \lim_{n\rightarrow\infty}V_{\lambda, n-1}\Bigl(x',  y+z r(x,a),  z\beta\Bigr) - \lambda z a \Bigr] P(x'|x,a)\\
&=\sum_{x'\in\mathcal{X}} \Bigl[ V_{\lambda, \infty}\Bigl(x',  y+z r(x,a),  z\beta\Bigr) - \lambda z a \Bigr] P(x'|x,a) =Q_{\lambda, \infty}(x,y,z,a).
\end{align*}
where we exploit the convergence result from Theorem \ref{thm:inf-convergence-policy2}. The fact that $Q_{\lambda, n}(x,y,z,a)$ satisfies condition \eqref{cond:superAdditiveQ} can therefore be extended to $Q_{\lambda, \infty}(x,y,z,a)$. Indeed, for all $0\leq \lambda_1\leq \lambda_2$:
\begin{align*}
Q_{\infty,\lambda_1}(x,y,z,1) - Q_{\infty,\lambda_1}(x,y,z,0) &= \lim_{n \to \infty} \bigl[Q_{n,\lambda_1}(x,y,z,1) - Q_{n,\lambda_1}(x,y,z,0)\bigr] \\
&\geq \lim_{n \to \infty} \bigl[Q_{n,\lambda_2}(x,y,z,1) - Q_{n,\lambda_2}(x,y,z,0)\bigr]\\
&= Q_{\infty,\lambda_2}(x,y,z,1) - Q_{\infty,\lambda_2}(x,y,z,0).
\end{align*}

Thus, condition~\eqref{cond:superAdditiveQ} is satisfied for the infinite-horizon Q-function. By Lemma~\ref{thm:monotoneSuperAdd}, there exists a family of optimal decision rules $\{f_\lambda^*\}_{\lambda \in [0,\infty)}$ or the augmented risk neutral MDP associated to the
arm that is monotone in $\lambda$. The rest of the proof follows exactly as for the proof of Theorem \ref{thm:ns-indexable}. \Halmos



\section{Proof of Theorem \ref{thm:inf-indexable2}}

This proof follows exactly the same steps as the proof of Theorem \ref{thm:ns-indexable2}.
Recall that when condition \ref{ass:r2} is satisfied, the Bellman equation associated to the augmented arm risk neutral discounted finite horizon  MDP is: 
\[V_{\lambda, n}(x,y,z)   = \max_{a\in\mathcal{A}} \sum_{x'\in\mathcal{X}} \Bigl[ V_{\lambda,n-1}\Bigl(x',  y+z r(x),  z\beta\Bigr) - \lambda z a \Bigr] P(x'|x,a)\]
for any $n\geq 1$, augmented state $(x,y,z)\in\hat{\mathcal{X}}$, $\lambda\in\mathbb{R}_+$, whereas $V_{\lambda,0}(x,y,z) = U(y)$. 

\begin{lemma}[Difference of Value Bound]
For any time $n\geq 0$, any state $(x,y,z)$ and any $\lambda_2\geq \lambda_1\geq 0$. the difference in value is bounded by: 
\[
-\frac{z}{1-\beta}(\lambda_2-\lambda_1) \leq V_{\lambda_2,n}(x, y,z) - V_{\lambda_1,n}(x, y,z) \leq 0.
\]
\end{lemma}
\textbf{\textit{Proof.}}
The proof is based on backward induction.

\textbf{Base Case:} We start at $n=0$, the Bellman equation is $V_{\lambda,0}(x, y,z) = U(y)$. Hence, we have that $V_{\lambda_2,0}(x,y,z) - V_{\lambda_1,T-1}(x,y,z)=0$, which is clearly inside $[-z(\lambda_1-\lambda_2)/(1-\beta),0]$ since $-z(\lambda_1-\lambda_2)/(1-\beta)\leq 0$. Thus, the bounds hold.

\textbf{Inductive Step:} Assume the bounds hold for time $t+1$, one can first verify at $t$ that:\EDcomment{There is confusion between using $P(x'|x,a)$ or $p(x'|x,a)$} \NimaResponse{Noticed and tried to resolve it everywhere.}
\begin{align*}
   V_{\lambda_2,n}&(x,y,z) - V_{\lambda_1,n}(x,y,z) =  \max_{a \in \{0,1\}} \left\{ -\lambda_2 z a + \sum_{x'} P(x'|x, a) V_{\lambda_2,n-1}(x', y+zr(x),z\beta) \right\} \\
   &\qquad\qquad- \max_{a \in \{0,1\}} \left\{ -\lambda_1 z a + \sum_{x'} P(x'|x, a) V_{\lambda_1,n-1}(x', y+zr(x),z\beta) \right\}\\
   &\leq \max_{a \in \{0,1\}} \left\{ -(\lambda_2-\lambda_1)za + \sum_{x'} P(x'|x, a) (V_{\lambda_2,n-1}(x', y+z r(x), z\beta) - V_{\lambda_1,n-1}(x', y+z r(x), z\beta))\right\}\\
   &\leq \max_{a \in \{0,1\}} \left\{ -(\lambda_2-\lambda_1)z a\right\} = 0,
\end{align*}
where we used $V_{\lambda_2,n-1}(x,y,z) - V_{\lambda_1,n-1}(x,y,z) \leq 0$ for all $(x,y,z)$. Next, one can see that 
\begin{align*}
   V_{\lambda_2,n}&(x,y,z) - V_{\lambda_1,n}(x,y,z) \\
   &\geq \min_{a \in \{0,1\}} \left\{ -(\lambda_2-\lambda_1)za + \sum_{x'} P(x'|x, a) (V_{\lambda_2,n-1}(x', y+z r(x), z\beta) - V_{\lambda_1,n-1}(x', y+z r(x), z\beta))\right\}\\
   &\geq \min_{a \in \{0,1\}} \left\{ -(\lambda_2-\lambda_1)z a - \frac{z\beta}{1-\beta}(\lambda_2-\lambda_1)\right\} = - \frac{z}{1-\beta}(\lambda_2-\lambda_1).\Halmos
\end{align*}

\begin{lemma}\label{thm:ass3increasingPol}
If a restless bandit arm satisfies condition \ref{ass:r2} and Assumption~\ref{ass:mdp3}, then there exists a family of optimal policies $\{f_\lambda^{*}\}_{\lambda\geq 0}$, for its augmented arm risk neutral MDP, that is non-increasing with respect to $\lambda$.
\end{lemma}

\textbf{\textit{Proof.}}
We again consider the advantage function 
\[
\Delta_n(x,y,z,\lambda) := z\lambda + \sum_{x'} (P(x'|x, 0) - P(x'|x, 1)) V_{\lambda,n-1}(x', y+zr(x),z\beta).
\]
and showing that it is non-decreasing in $\lambda$. Namely, for \NimaEdits{$n=1$} we have
\[\Delta_{1}(x,y,z,\lambda) = z\lambda + \sum_{x'} (P(x'|x, 0) - P(x'|x, 1)) V_{\lambda,0}(x', y+zr(x),z\beta) = \NimaEdits{z\lambda} ,\]
thus non-decreasing in $\lambda$. Whereas when $n\geq 2$, one can show that if $\lambda_2 \geq \lambda_1\geq 0$, then
\begin{align*}
    \Delta_n&(x, y,z,\lambda_2)  - \Delta_n(x,y,z,\lambda_1)= z\lambda_2 + \sum_{x'} (P(x'|x, 0) - P(x'|x, 1)) V_{\lambda_2,n-1}(x', y+zr(x), z\beta) \\ &- (z\lambda_1 + \sum_{x'} (P(x'|x, 0) - P(x'|x, 1)) V_{\lambda_1,n-1}(x', y+zr(x), z\beta))\\
    &= z(\lambda_2-\lambda_1) + \sum_{x'} [P(x'|x, 0) - P(x'|x, 1)] (V_{\lambda_2,n-1}(x', y+zr(x),z\beta)-V_{\lambda_1,n-1}(x', y+zr(x),z\beta)\\
    &\geq z(\lambda_2-\lambda_1) - \sum_{x'} |P(x'|x, 0) - P(x'|x, 1)| |V_{\lambda_2,n-1}(x', y+zr(x),z\beta)-V_{\lambda_1,n-1}(x', y+zr(x),z\beta)|\\
    &\geq z(\lambda_2-\lambda_1) -\frac{1-\beta}{\beta} \frac{z\beta}{1-\beta}(\lambda_2-\lambda_1) \geq 0.
\end{align*}

Letting $n$ go to infinity, we can define:
\[\Delta_\infty(x,y,z,\lambda) := \lim_{n\rightarrow \infty} \Delta_n(x,y,z,\lambda),\]
and consider that
\[\Delta_\infty(x,y,z,\lambda) = z\lambda + \sum_{x'} (P(x'|x, 0) - P(x'|x, 1)) V_{\lambda,\infty}(x', y+zr(x),z\beta)\]
and satisfies $\Delta_\infty(x,y,z,\lambda_2)\geq \Delta_\infty(x,y,z,\lambda_1)$ for all $\lambda_2\geq\lambda_1\geq 0$.
Considering the policy
$f_{\lambda}^{*}(x,y,z):=\min(\arg\max_{a \in \{0,1\}} \left\{ -z\lambda a + \sum_{x'} P(x'|x, a) V_{\lambda,\infty}(x', y + z r(x), z\beta) \right\})$, one can again show if $f_{\lambda_1}^{*}(x,y,z)=0$, then 
\[-z \lambda_1\cdot 1 + \sum_{x'} P(x'|x, 1) V_{\lambda_1,\infty}(x', y+zr(x),z\beta) \leq -z \lambda_1\cdot 0 + \sum_{x'} P(x'|x, 0) V_{\lambda_1,\infty}(x', y+zr(x),z\beta)\]
by definition of the policy, thus implying that $\Delta_\infty(x, s,\lambda_1)\geq 0$. Moreover, we have $0\leq \Delta_\infty(x,y,z,\lambda_1)\leq \Delta_\infty(x,y,z,\lambda_2)$, which implies that \NimaEdits{$f_{\lambda_2}^{*}(x,y,z)=0$}. This confirms that this family of optimal policies is monotone in $\lambda$.
\Halmos

The rest of the proof of Theorem \ref{thm:inf-indexable2} follows directly as the proof of Theorem \ref{thm:ns-indexable} with the difference that it is Lemma \ref{thm:ass3increasingPol} that ensures the existence of  a family of optimal policies $\{f_\lambda^{i*}\}_{
\lambda\geq 0}$ that is non-increasing in $\lambda$. \Halmos

\end{document}